\useunder{\uline}{\ul}{}
\newtheorem{theorem}{Theorem}
\title{FreEformer: Frequency Enhanced Transformer for \\ Multivariate Time Series Forecasting}
\author{
Wenzhen Yue$^1$\and
Yong Liu$^2$\and
Xianghua Ying$^1$ \and
Bowei Xing$^1$\and
Ruohao Guo$^1$\And
Ji Shi$^1$
\\
\affiliations
$^1$National Key Laboratory of General Artificial Intelligence, School of Intelligence Science and Technology, Peking University \\
$^2$School of Software, BNRist, Tsinghua University \\
\emails
yuewenzhen@stu.pku.edu.cn, liuyong21@mails.tsinghua.edu.cn,
xhying@pku.edu.cn
}
\begin{document}

\maketitle

\begin{abstract}
This paper presents \textbf{FreEformer}, a simple yet effective model that leverages a \textbf{Fre}quency \textbf{E}nhanced Trans\textbf{former} for multivariate time series forecasting. Our work is based on the assumption that the frequency spectrum provides a global perspective on the composition of series across various frequencies and is highly suitable for robust representation learning. Specifically, we first convert time series into the complex frequency domain using the Discrete Fourier Transform (DFT). The Transformer architecture is then applied to the frequency spectra to capture cross-variate dependencies, with the real and imaginary parts processed independently. However, we observe that the vanilla attention matrix exhibits a low-rank characteristic, thus limiting representation diversity. This could be attributed to the inherent sparsity of the frequency domain and the strong-value-focused nature of Softmax in vanilla attention. To address this, we enhance the vanilla attention mechanism by introducing an additional learnable matrix to the original attention matrix, followed by row-wise L1 normalization. Theoretical analysis~demonstrates that this enhanced attention mechanism improves both feature diversity and gradient flow. Extensive experiments demonstrate that FreEformer consistently outperforms state-of-the-art models on eighteen real-world benchmarks covering electricity, traffic, weather, healthcare and finance. Notably, the enhanced attention mechanism also consistently improves the performance of state-of-the-art Transformer-based forecasters.

\end{abstract}

\section{Introduction}

Multivariate time series forecasting holds significant importance in real-world domains such as weather~\cite{nature_weather}, energy~\cite{informer2021}, transportation~\cite{aaai2022_catn} and finance~\cite{survey4}. In recent years, various deep learning models have been proposed, significantly pushing the performance boundaries. Among these models, Recurrent Neural Networks (RNN) \cite{rnn_2020}, Convolutional Neural Networks (CNN) \cite{tcn,timesnet},  LLM \cite{zhou2023one,time_llm}, Multi-Layer Perceptrons (MLP) \cite{linear,fits}, Transformers-based methods \cite{patchtst,itransformer,card} have demonstrated great potential due to their strong representation capabilities.

\nocite{rlinear,diff_refine}

\begin{figure}[t]
   \centering
   \includegraphics[width=1.0\linewidth]{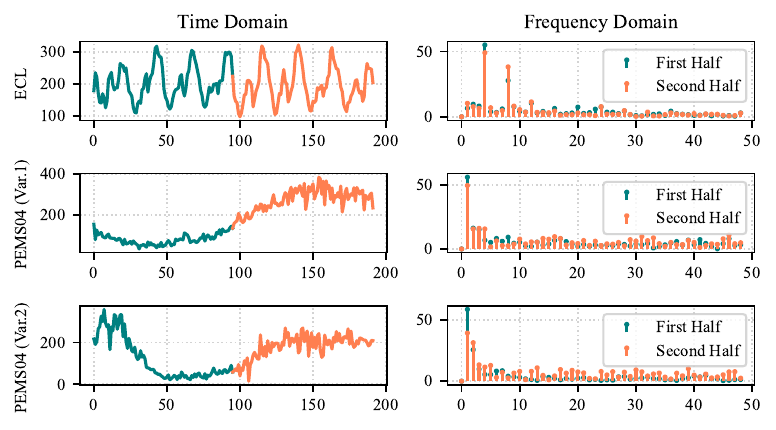}
   \caption{Time series and their corresponding frequency spectra. The series are normalized before applying the DFT, and the amplitudes of the frequency spectra are plotted. (1) The frequency spectra often exhibit strong consistency across adjacent temporal spans within the same time series, forming the basis for frequency-based forecasting. (2) Strong correlations between the two variables in PEMS04 (rows 2 and 3) are observed, suggesting that exploring such multivariate relationships could lead to more robust representations. (3) The frequency spectrum usually exhibits sparsity, with a few dominant frequencies.}
   \label{fig1}
\end{figure}

In recent years, frequency-domain-based models have been proposed and have achieved great performance~\cite{frets,fits}, benefiting from the robust frequency domain modeling. As shown in Figure \ref{fig1}, frequency spectra exhibit strong consistency across different spans of the same series, making them suitable for forecasting. Most existing frequency-domain-based works \cite{filternet} rely on linear layers to learn frequency-domain representations, resulting in a performance gap. Frequency-domain Transformer-based models remain under-explored. Recently, Fredformer \cite{fredformer} applies the vanilla Transformer to patched frequency tokens to address the frequency bias issue. However, the patching technique introduces additional hyper-parameters and undermines the inherent global perspective \cite{frets} of frequency-domain modeling.

In this paper, we adopt a simple yet effective approach by applying the Transformer to frequency-domain variate tokens for representation learning. Specifically, we embed the entire frequency spectrum as variate tokens and capture cross-variate dependencies among them. This architecture offers three main advantages: 1) As shown in Section \ref{preliminaries}, simple frequency-domain operations can correspond to complex temporal operations \cite{frets}; 2) Inter-variate correlations typically exists (e.g., Figure \ref{fig1}) \cite{itransformer}, and and learning these correlations could be beneficial for more robust frequency representation; 3) The permutation-invariant nature of the attention mechanism naturally aligns with the order-insensitivity of variates.

Furthermore, we observe that for the frequency-domain representation, the attention matrix of vanilla attention often exhibits a low-rank characteristic, which reduces the diversity of representations. To address this issue, we propose a general solution: adding a learnable matrix to the original softmax attention matrix, followed by row-wise normalization. We term this approach  \textbf{enhanced attention} and name the overall model \textbf{FreEformer}. Despite its simplicity, the enhanced attention mechanism is proven effective both theoretically and empirically. The main contributions of this work are summarized as follows:

\begin{itemize}
    \item This paper presents a simple yet effective model, named FreEformer, for multivariate time series forecasting. FreEformer achieves robust cross-variate representation learning using the enhanced attention mechanism.
    \item Theoretical analysis and experimental results demonstrate that the enhanced attention mechanism increases the rank of the attention matrix and provides greater flexibility for gradient flow. As a plug-in module, it consistently enhances the performance of existing Transformer-based forecasters.
    \item Empirically, FreEformer consistently achieves state-of-the-art forecasting performance across 18 real-world benchmarks spanning diverse domains such as electricity, transportation, weather, healthcare and finance.
\end{itemize}

\begin{figure*}[t]
   \centering
   \includegraphics[width=0.9\linewidth]{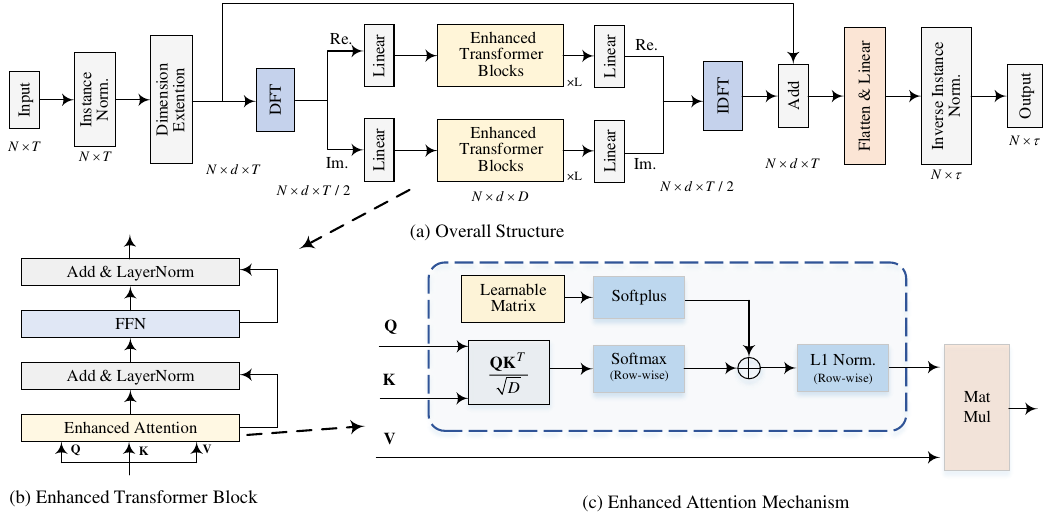}
   \caption{Overall structure of the FreEformer. We leverage the frequency spectrum to capture temporal patterns and employ an enhanced Transformer to model dependencies among multivariate spectra. The enhanced Transformer introduces a learnable matrix to the attention mechanism, which, as shown through theoretical analysis, addresses potential low-rank issues and improves gradient flow.}
   \label{fig2}
\end{figure*}

\section{Related Works}

\subsection{Transformer-Based Forecasters}

Classic works such as Autoformer \cite{autoformer}, Informer \cite{informer2021}, Pyraformer \cite{pyraformer}, FEDformer \cite{fedformer}, and PatchTST \cite{patchtst} represent early Transformer-based time series forecasters. iTransformer \cite{itransformer} introduces the inverted Transformer to capture multivariate dependencies, and achieves accurate forecasts. More recently, research has focused on jointly modeling cross-time and cross-variate dependencies \cite{crossformer,card,cicd}. Leddam \cite{Leddam_icml} uses a dual-attention module for decomposed seasonal components and linear layers for trend components. Unlike previous models in the time domain, we shift our focus to the frequency domain to explore dependencies among the frequency spectra of multiple variables for more robust representations.

\subsection{Frequency-Domain Forecasters}

Frequency analysis is an important tool in time series forecasting \cite{fre_survey}. FEDformer \cite{fedformer} performs DFT and sampling prior to Transformer. DEPTS~\cite{depts} uses the DFT to capture periodic patterns for better forecasts. FiLM~\cite{film} applies Fourier analysis to preserve historical information while mitigating noise. FreTS~\cite{frets} employs frequency-domain MLPs to model channel and temporal dependencies. FourierGNN~\cite{fouriergnn} transfers GNN operations from the time domain to the frequency domain. FITS~\cite{fits} applies a low-pass filter and complex-valued linear projection in the frequency domain. DERITS~\cite{derivative} introduces a Fourier derivative operator to address non-stationarity. Fredformer~\cite{fredformer} addresses frequency bias by dividing the frequency spectrum into patches. FAN~\cite{fan_norm} introduces frequency adaptive normalization for non-stationary data. In this work, we adopt a simple yet effective Transformer-based model to capture multivariate correlations in the frequency domain, outperforming existing methods.

\subsection{Transformer Variants}

Numerous variants of the vanilla Transformer have been developed to enhance efficiency and performance. Informer \cite{informer2021} introduces a ProbSparse self-attention mechanism with $\mathcal{O}(N \mathrm{log} N)$ complexity. Flowformer \cite{flowformer} proposes Flow-Attention, achieving linear complexity based on flow network theory. Reformer~\cite{reformer} reduces complexity by replacing dot-product attention with locality-sensitive hashing. Linear Transformers, such as FLatten \cite{flattentrans} and LSoftmax~\cite{linear_softmax}, achieve linear complexity by precomputing $\mathbf{K} ^T \mathbf{V}$ and designing various mapping functions. FlashAttention \cite{flashattention} accelerates computations by tiling to minimize GPU memory operations. LASER \cite{laser} mitigates the gradient vanishing issue using exponential transformations. In this work, we focus on the low-rank issue and adopt a simple yet effective strategy by adding a learnable matrix to the attention matrix. This improves both matrix rank and gradient flow with minimal modifications to the vanilla attention mechanism.

\section{Preliminaries} \label{preliminaries}

The discrete Fourier transform (DFT) \cite{signal_and_system} converts a signal $\mathbf{x} \in \mathbb{R}^N$ into its frequency spectrum $\mathcal{F} \in \mathbb{C}^N$. For $k=0,1,\cdots,N-1$, we have 

\begin{equation}
\begin{aligned}
&\mathcal{F}[k]  = \sum_{n=0}^{N-1}e^{-j\frac{2\pi}{N}nk}\mathbf{x}[n].
\end{aligned}
\label{eq1}
\end{equation}


\noindent Here, $j$ denotes the imaginary unit. For a real-valued vector $\mathbf{x}$, $\mathcal{F}[k]$ is complex-valued and satisfies the property of \textit{Hermitian symmetry} \cite{signal_and_system}: $\mathcal{F}[k] = \left(\mathcal{F}[N-k]\right)^*$ for $k=1, \cdots, N-1$, where $\left(\cdot\right)^*$ denotes the complex conjugate. The DFT is a linear and reversible transform, with the inverse discrete Fourier transform (IDFT) being:

\begin{equation}
\mathbf{x}[n]=\frac{1}{N}\sum_{k=0}^{N-1}\mathcal{F}[k]\cdot e^{j \cdot  2\pi\frac{k}{N}n},\: k=0,1,\cdots,N-1.
\label{eq2}
\end{equation}

Linear projections in the frequency domain are widely employed in works such as FreTS \cite{frets} and FITS \cite{fits}. The following theorem establishes their equivalent operations in the time domain.

\begin{theorem}[Frequency-domain linear projection and time-domain convolutions]\label{theorem1}
Given the time series $\mathbf{x}\in \mathbb{R}^N$ and its corresponding frequency spectrum $\mathcal{F}\in \mathbb{C}^N $. Let $\mathbf{W} \in \mathbb{C}^{N \times N}$ denote a weight matrix and $\mathbf{b} \in \mathbb{C}^N$ a bias vector. Under these definitions, the following DFT pair holds:

\begin{equation}
\begin{aligned}
\tilde{\mathcal{F}}=\mathbf{W}\mathcal{F}+\mathbf{b} \leftrightarrow \sum_{i=0}^{N-1}
{\Omega _i \circledast \mathcal{M}  _i(\mathbf{x})} + \mathrm{IDFT} (\mathbf{b}),
\end{aligned}
\label{eq3}
\end{equation}

\noindent where

\begin{equation}
\begin{aligned}
&w_i=\left [ 
\mathrm{diag} (\mathbf{W},i) , \mathrm{diag} (\mathbf{W},i-N) \right ] \in \mathbb{C}^N, \\
&\Omega _i = \mathrm{IDFT}\left( w_i\right)\in \mathbb{C}^N, \\
&\mathcal{M}  _i(\mathbf{x})=\mathbf{x} \odot \left [ \mathrm{e}^{-j\frac{2\pi }{N}ik }  \right ]_{k=0,1,\cdots N-1} \in \mathbb{C}^N. 
\end{aligned}
\label{eq4}
\end{equation}

\noindent Here, $\circledast$ denotes the circular convolution, and $\odot$ represents the Hadamard (element-wise) product. The notation $\left[\cdot, \cdot\right]$ indicates the concatenation of two vectors. $\mathrm{diag}(\mathbf{W},i) \in \mathbb{C}^{N-\left | i \right | } $ extracts the $i$-th diagonal of $\mathbf{W}$. $\mathcal{M}  _i(\mathbf{x})$ represents the $i$-th modulated version of $\mathbf{x}$, with $\mathcal{M}  _0(\mathbf{x})$ being $\mathbf{x}$ itself.

\end{theorem}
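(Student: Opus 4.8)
The plan is to prove the DFT pair coefficient-by-coefficient: reorganize the matrix--vector product $\mathbf{W}\mathcal{F}$ along the \emph{circular diagonals} of $\mathbf{W}$, and then transfer each diagonal's contribution back to the time domain using two textbook Fourier identities --- the convolution theorem and the frequency-shift (modulation) property. Since both the DFT and the IDFT are linear, $\mathrm{IDFT}(\mathbf{W}\mathcal{F}+\mathbf{b})=\mathrm{IDFT}(\mathbf{W}\mathcal{F})+\mathrm{IDFT}(\mathbf{b})$, so the bias contributes the stand-alone term $\mathrm{IDFT}(\mathbf{b})$ in \eqref{eq3} and it suffices to establish $\mathbf{W}\mathcal{F}\leftrightarrow\sum_{i=0}^{N-1}\Omega_i\circledast\mathcal{M}_i(\mathbf{x})$.

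First I would rewrite the $m$-th component $\tilde{\mathcal{F}}[m]=\sum_{k=0}^{N-1}\mathbf{W}[m,k]\,\mathcal{F}[k]$ by the substitution $k=(m+i)\bmod N$ with $i$ ranging over $0,\dots,N-1$, giving $\tilde{\mathcal{F}}[m]=\sum_{i=0}^{N-1}w_i[m]\,\mathcal{F}[(m+i)\bmod N]$ where $w_i[m]:=\mathbf{W}[m,(m+i)\bmod N]$. The one genuinely fiddly step is to confirm that this circular diagonal $w_i$ coincides with the concatenation $[\mathrm{diag}(\mathbf{W},i),\mathrm{diag}(\mathbf{W},i-N)]$ of \eqref{eq4}: for the $N-i$ indices $m$ with $m+i<N$ the entry $\mathbf{W}[m,m+i]$ lies on the $i$-th super-diagonal, while for the remaining $i$ indices the wrapped entry $\mathbf{W}[m,m+i-N]$ lies on the $(i-N)$-th sub-diagonal, and the two blocks stack to a length-$N$ vector exactly as written.

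Next I would apply the IDFT to $\tilde{\mathcal{F}}$, interchange the two finite sums over $m$ and $i$, and read the term indexed by $i$ as the IDFT of an elementwise product: with $g_i[m]:=\mathcal{F}[(m+i)\bmod N]$ denoting the spectrum cyclically shifted by $i$, that term is $\mathrm{IDFT}(w_i\odot g_i)$. Under the normalization fixed by \eqref{eq1}--\eqref{eq2} the convolution theorem reads $\mathrm{IDFT}(A\odot B)=\mathrm{IDFT}(A)\circledast\mathrm{IDFT}(B)$ --- a one-line check using $\sum_{\ell=0}^{N-1}e^{j\frac{2\pi}{N}(m_1-m_2)\ell}=N\,\delta_{m_1,m_2}$ --- so the term equals $\mathrm{IDFT}(w_i)\circledast\mathrm{IDFT}(g_i)=\Omega_i\circledast\mathrm{IDFT}(g_i)$. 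Finally the frequency-shift property gives $\mathrm{IDFT}(g_i)[n]=e^{-j\frac{2\pi}{N}in}\,\mathbf{x}[n]=\mathcal{M}_i(\mathbf{x})[n]$, obtained by the change of variable $p=(m+i)\bmod N$ in the IDFT sum and pulling the factor $e^{-j\frac{2\pi}{N}in}$ out front. Summing over $i$ and restoring $\mathrm{IDFT}(\mathbf{b})$ yields \eqref{eq3}.

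I expect the only real obstacle to be keeping the index bookkeeping consistent throughout: the modular arithmetic in the diagonal reindexing, the split of $w_i$ into a super-diagonal block and a sub-diagonal block matching the concatenation in \eqref{eq4}, and tracking the $1/N$ factor so the convolution theorem is invoked in the correct form. Once those conventions are pinned down, the remaining manipulations are direct applications of standard Fourier identities.
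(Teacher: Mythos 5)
Your proposal is correct and follows essentially the same route as the paper's proof: both decompose $\mathbf{W}\mathcal{F}$ along the circular diagonals of $\mathbf{W}$ to obtain $\sum_{i}w_i\odot\mathrm{Roll}(\mathcal{F},-i)$ (you do this componentwise via the substitution $k=(m+i)\bmod N$, the paper via a sum of sparse diagonal matrices), and then invoke the same two identities — the frequency-shift/modulation property and the convolution theorem — plus linearity for the bias term.
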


We provide the proof of this theorem in Section A of the appendix. \textbf{Theorem 1} extends Theorem 2 from FreTS \cite{frets} and demonstrates that a linear transformation in the frequency domain is equivalent to the sum of circular convolution operations applied to the series and its modulated versions. This equivalence highlights the computational simplicity of performing such operations in the frequency domain compared to the time domain.

\section{Method}

In multivariate time series forecasting, we consider historical series within a lookback window of $T$, each timestamp with $N$ variates: $\mathbf{x} =\left \{ \mathrm{x} _{1}, \cdots, \mathrm{x} _{T}  \right \} \in \mathbb{R}^{N \times T} $. Our task is to predict future $\tau$ timestamps  to closely approximate the ground truth $\mathbf{y} =\left \{ \mathrm{x}_{T+1},\cdots, \mathrm{x}_{T+\tau}   \right \} \in \mathbb{R}^{N \times \tau}$.

\subsection{Overall Architecture}

As shown in Figure \ref{fig2}, FreEformer employs a simple architecture. First, an instance normalization layer, specifically RevIN \cite{revin}, is used to normalize the input data and de-normalize the results at the final stage to mitigate non-stationarity. The constant mean component, represented by the zero-frequency point in the frequency domain, is set to zero during normalization. Subsequently, a dimension extension module is employed to enhance the model's representation capabilities. Specifically, the input $\mathbf{x}$ is expanded by a learnable weight vector $\phi_d \in \mathbb{R}^d$, yielding higher-dimensional and more expressive series data: $\bar{\mathbf{x}} = \mathbf{x} \times \phi_d \in \mathbb{R}^{N \times d \times T}$. We refer to $d$ as the embedding dimension.

\paragraph{Frequency-Domain Operations} Next, we apply the Discrete Fourier Transform (DFT) to convert the time series $\bar{\mathbf{x}}$ into its frequency spectrum along the temporal dimension:

\begin{equation}
\mathcal{F} = \mathrm{DFT} (\bar{\mathbf{x}} ) = \mathrm{Re} (\mathcal{F}) + j\cdot \mathrm{Im} (\mathcal{F}) \in \mathbb{C}^{N\times d\times T},
\label{eq5}
\end{equation}

\noindent where $\mathrm{Re}(\cdot)$ and $\mathrm{Im}(\cdot)$ represent the real and imaginary parts, respectively. Due to the \textit{conjugate symmetry} property of the frequency spectrum of a real-valued signal, only the first $\left \lceil (T+1)/2 \right \rceil$ elements of the real and imaginary parts need to be retained. Here, $\left \lceil \cdot \right \rceil$ denotes the ceiling operation.



\begin{table}[t]
\begin{center}
{\fontsize{7}{8}\selectfont
\setlength{\tabcolsep}{3pt}
\begin{tabular}{@{}c|ccc|cc|cc@{}}
\toprule
Dataset        & ECL                                   & Weather                               & Traffic                               & COVID-19                              & NASDAQ                                & COVID-19                              & NASDAQ                                \\ \midrule
$T-\tau$ & \multicolumn{3}{c|}{96-\{96,192,336,720\}}                                                                             & \multicolumn{2}{c|}{36-\{24,36,48,60\}}                                        & \multicolumn{2}{c}{12-\{3,6,9,12\}}                                           \\ \midrule
Concat.        & {\color[HTML]{FF0000} \textbf{0.162}} & 0.243                                 & 0.443                                 & 8.705                                 & 0.190                                 & 1.928                                 & {\color[HTML]{FF0000} \textbf{0.055}} \\
S.W.      & 0.165                                 & 0.240                                 & 0.440                                 & 8.520                                 & 0.189                                 & 1.895                                 & {\color[HTML]{FF0000} \textbf{0.055}} \\
N.S.W.  & {\color[HTML]{FF0000} \textbf{0.162}} & {\color[HTML]{FF0000} \textbf{0.239}} & {\color[HTML]{FF0000} \textbf{0.435}} & {\color[HTML]{FF0000} \textbf{8.435}} & {\color[HTML]{FF0000} \textbf{0.185}} & {\color[HTML]{FF0000} \textbf{1.892}} & {\color[HTML]{FF0000} \textbf{0.055}} \\ \bottomrule
\end{tabular}
}
\caption{Comparison of different processes for real and imaginary parts. Average MSEs are reported in this table. `S.W.' and `N.S.W.' denote `Shared Weights' and `Non-Shared Weights', respectively. `Concat.' denotes the concatenation method.}
\label{tab_real_imag}
\end{center}
\end{table}

To process the real and imaginary parts, common strategies include employing complex-valued layers \cite{frets,fits}, or concatenating the real and imaginary parts into a real-valued vector and subsequently projecting the results back \cite{fredformer}. In this work, we adopt a simple yet effective scheme: processing these two parts independently. As shown in Table \ref{tab_real_imag}, this independent processing scheme yields better performance for FreEformer.

After flattening the last two dimensions of the real and imaginary parts and projecting them into the hidden dimension $D$, we construct the frequency-domain variate tokens $\tilde{\mathrm{Re} }, \tilde{\mathrm{Im} } \in \mathbb{R}^{N \times D}$. These tokens are then fed into $L$ stacked Transformer blocks to capture multivariate dependencies among the spectra. Subsequently, the tokens are projected back to the lookback length. The real and imaginary parts are then regrouped to reconstruct the frequency spectrum. Then, the time-domain signal $\tilde{\mathbf{x}}$ is recovered using the IDFT. The entire process is summarized as follows:

\begin{equation}
\begin{aligned}
& \tilde{\mathcal{F}}  = \mathcal{F}[:,:,0:\left \lceil (T+1)/2 \right \rceil ]\in \mathbb{C}^{N\times d\times \left \lceil (T+1)/2 \right \rceil }, \\
&\tilde{\mathrm{Re}}^0=\mathrm{Linear} (\mathrm{Flatten} (\mathrm{Re}(\tilde{\mathcal{F}})))\in \mathbb{R}^{N\times D}\\
&\tilde{\mathrm{Re}}^{l+1}=\mathrm{TrmBlock} (\tilde{\mathrm{Re}}^l)\in \mathbb{R}^{N\times D }, l=0,\cdots ,L-1,\\
&\tilde{\mathrm{Re}}=\mathrm{Linear} (\mathrm{Reshape}(\tilde{\mathrm{Re}}^L))\in \mathbb{R}^{N\times d\times \left \lceil (T+1)/2 \right \rceil }, \\
&\tilde{\mathrm{Im}}^0=\mathrm{Linear} (\mathrm{Flatten} (\mathrm{Im}(\tilde{\mathcal{F}})))\in \mathbb{R}^{N\times D}\\
&\tilde{\mathrm{Im}}^{l+1}=\mathrm{TrmBlock} (\tilde{\mathrm{Im}}^l)\in \mathbb{R}^{N\times D }, l=0,\cdots ,L-1,\\
&\tilde{\mathrm{Im}}=\mathrm{Linear} (\mathrm{Reshape}(\tilde{\mathrm{Im}}^L))\in \mathbb{R}^{N\times d\times \left \lceil (T+1)/2 \right \rceil }, \\
&\tilde{\mathbf{x}}=\mathrm{IDFT} (\tilde{\mathrm{Re}}+j\cdot \tilde{\mathrm{Im}})\in \mathbb{R}^{N \times d \times T }. \\
\end{aligned}
\label{eq7}
\end{equation}

\noindent In the above equation, the final step is implemented via the {\tt irfft} function in PyTorch to ensure real-valued outputs.

\paragraph{Prediction Head} A shortcut connection is applied to sum $\tilde{\mathbf{x}}$ with the original $\bar{\mathbf{x}}$. Finally, a flatten layer and a linear head are used to ensure the output matches the desired size. The final result is obtained through a de-normalization step:

\begin{equation}
\mathbf{\hat{y}}=\mathrm{DeNorm} \left ( \mathrm{Linear}(\mathrm{Flatten}(\tilde{\mathbf{x}}+\bar{\mathbf{x}}) ) \right )  \in \mathbb{R}^{N\times \tau }.
\label{eq8}
\end{equation}

\begin{figure}[t]
   \centering
   \includegraphics[width=1.0\linewidth]{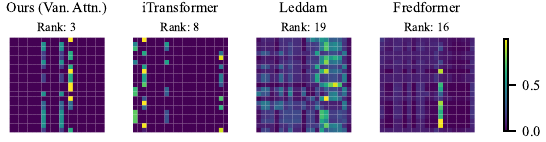}
   \caption{Attention matrices from state-of-the-art forecasters on the Weather dataset. The FreEformer with vanilla attention typically exhibits a low rank, likely due to the inherent sparsity of the frequency spectrum and the strong-value-focused nature of the Softmax function in vanilla~attention.}
   \label{fig_rank_compare}
\end{figure}

\subsection{Enhanced Attention}

\begin{figure}[t]
   \centering
   \includegraphics[width=1.0\linewidth]{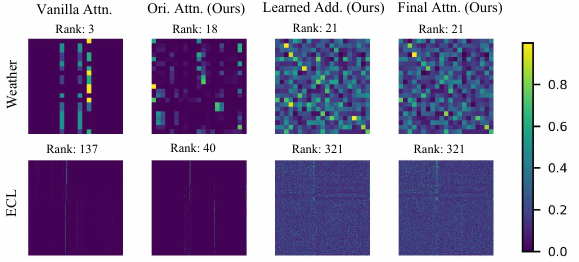}
   \caption{Attention matrices from vanilla and enhanced attention. The left column shows the low-rank attention matrix from the vanilla attention (Weather: 3, ECL: 137), with most entries near zero. The right three columns show  the original attention matrix ($\mathbf{A} $), the learned addition matrix ($\mathrm{Softplus} (\mathbf{B}) $), and the final attention matrix ($\mathrm{Norm} \left ( \mathbf{A}+\mathrm{Softplus} (\mathbf{B}) \right ) $). The final matrix exhibits more prominent values and higher ranks (Weather: 21, ECL: 321).}
   \label{fig3}
\end{figure}

In the Transformer block, as shown in Figure \ref{fig2}(b), we first employ the attention mechanism to capture cross-variate dependencies. Then the LayerNorm and FFN are used to update frequency representations in a variate-independent manner. According to Theorem \ref{theorem1}, the FFN corresponds to a series of convolution operations in the time domain for series representations. The vanilla attention mechanism is defined as:

\begin{equation}
\mathrm{Attn}\left ( \mathbf{Q} ,\mathbf{K} ,\mathbf{V}  \right )  =\underbrace{\mathrm{Softmax} \left ( \frac{\mathbf{Q}\mathbf{K}^T}{\sqrt{D} }  \right )}_{\text{Attention Matrix}\triangleq \mathbf{A}}  \mathbf{V} .
\label{eq10}
\end{equation}

\noindent Here, $\mathbf{Q},\mathbf{K},\mathbf{V}\in \mathbb{R}^{N\times D}$ are the query, key and value matrix, respectively, obtained through linear projections. We denote $D$ as the feature dimension and refer to $\mathrm{Softmax} \left ( \mathbf{Q} \mathbf{K}^T / \sqrt{D} \right )$, represented as $\mathbf{A}$, as the \textit{attention matrix}.

However, as shown in Figure \ref{fig_rank_compare}, compared to other state-of-the-art forecasters, FreEformer with the vanilla attention mechanism usually exhibits an attention matrix with a lower rank. This could arise from the inherent sparsity of the frequency spectrum \cite{signal_and_system} and the strong-value-focused properties of the vanilla attention mechanism \cite{laser,nystromformer}. While patching adjacent frequency points can mitigate sparsity (as in Fredformer), we address the underlying low-rank issue within the attention mechanism itself, offering a more general solution.


In this work, we adopt a straightforward yet effective solution: introducing a learnable matrix $\mathbf{B}$ to the attention matrix. The enhanced attention mechanism, denoted as $\mathrm{EnhAttn}\left ( \mathbf{Q} ,\mathbf{K} ,\mathbf{V} \right )$, is defined as \footnote{The variants are discussed in Section C.3 of the appendix.  }:

\begin{equation}
\mathrm{Norm} \left ( \mathrm{Softmax} \left ( \frac{\mathbf{Q}\mathbf{K}^T}{\sqrt{D} }  \right ) + \mathrm{Softplus} (\mathbf{B}) \right )  \mathbf{V},
\label{eq11}
\end{equation}

\noindent where $\mathrm{Norm}(\cdot)$ denotes row-wise L1 normalization. The $\mathrm{Softplus}(\cdot)$ function ensures positive entries, thereby preventing potential division-by-zero errors in $\mathrm{Norm}(\cdot)$.

\subsubsection{Theoretical Analysis} \label{theoretical_analysis}


\paragraph{Feature Diversity} 
According to Equation \eqref{eq11}, feature diversity is directly influenced by the rank of the final  attention matrix $\mathrm{Norm} \left ( \mathbf{A}+\bar{\mathbf{B}}  \right )$, where $\bar{\mathbf{B}} \triangleq \mathrm{Softplus} (\mathbf{B})$. Since row-wise L1 normalization does not alter the rank of a matrix, we have: $\mathrm{rank} \left ( \mathrm{Norm} \left ( \mathbf{A}+\bar{\mathbf{B}}  \right ) \right ) =\mathrm{rank} \left (  \mathbf{A}+\bar{\mathbf{B}} \right )$.  For further analysis, we present the following theorem:

\begin{theorem}\label{theorem2}

Let $\mathbf{A}$ and $\mathbf{B}$ be two matrices of the same size $N\times N$. The rank of their sum satisfies the following bounds:

\begin{equation}
\left | \mathrm{rank}(\mathbf{A})-\mathrm{rank}(\mathbf{B}) \right |  
\le \mathrm{rank}(\mathbf{A}+\mathbf{B})  
\le 
\mathrm{rank}(\mathbf{A})+\mathrm{rank}(\mathbf{B})
\label{eq12}
\end{equation}

\end{theorem}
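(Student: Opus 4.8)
The plan is to prove the upper bound first by a column-space argument, and then obtain the lower bound as a formal consequence of the upper bound via a telescoping substitution. Throughout, I would work with column spaces (images); the row-space version is completely symmetric and would give the same conclusion.

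For the upper bound, I would note that each column of $\mathbf{A}+\mathbf{B}$ is the sum of the corresponding columns of $\mathbf{A}$ and $\mathbf{B}$, so $\mathrm{Col}(\mathbf{A}+\mathbf{B}) \subseteq \mathrm{Col}(\mathbf{A}) + \mathrm{Col}(\mathbf{B})$ as subspaces of $\mathbb{R}^N$. Taking dimensions and using the identity $\dim(U+V) = \dim U + \dim V - \dim(U\cap V) \le \dim U + \dim V$ for subspaces $U,V$, I get $\mathrm{rank}(\mathbf{A}+\mathbf{B}) = \dim \mathrm{Col}(\mathbf{A}+\mathbf{B}) \le \dim \mathrm{Col}(\mathbf{A}) + \dim \mathrm{Col}(\mathbf{B}) = \mathrm{rank}(\mathbf{A}) + \mathrm{rank}(\mathbf{B})$, which is the right-hand inequality of \eqref{eq12}.

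For the lower bound, I would apply the just-proved upper bound to the pair $(\mathbf{A}+\mathbf{B},\, -\mathbf{B})$. Since $(\mathbf{A}+\mathbf{B}) + (-\mathbf{B}) = \mathbf{A}$ and $\mathrm{rank}(-\mathbf{B}) = \mathrm{rank}(\mathbf{B})$, this yields $\mathrm{rank}(\mathbf{A}) \le \mathrm{rank}(\mathbf{A}+\mathbf{B}) + \mathrm{rank}(\mathbf{B})$, i.e.\ $\mathrm{rank}(\mathbf{A}) - \mathrm{rank}(\mathbf{B}) \le \mathrm{rank}(\mathbf{A}+\mathbf{B})$. Swapping the roles of $\mathbf{A}$ and $\mathbf{B}$ (the hypothesis is symmetric) gives $\mathrm{rank}(\mathbf{B}) - \mathrm{rank}(\mathbf{A}) \le \mathrm{rank}(\mathbf{A}+\mathbf{B})$, and taking the maximum of the two inequalities produces $\left|\mathrm{rank}(\mathbf{A}) - \mathrm{rank}(\mathbf{B})\right| \le \mathrm{rank}(\mathbf{A}+\mathbf{B})$, the left-hand inequality.

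This is a classical fact, so I do not anticipate a genuine obstacle; the only points that need to be stated carefully are the subadditivity-of-dimension step (justified by the dimension formula for $U+V$) and the observation $\mathrm{rank}(-\mathbf{B}) = \mathrm{rank}(\mathbf{B})$, which is what makes the telescoping substitution in the lower-bound argument legitimate.
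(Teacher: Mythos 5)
Your proof is correct. The upper bound is essentially the paper's argument: both of you observe that every column of $\mathbf{A}+\mathbf{B}$ lies in the span of the columns of $\mathbf{A}$ and $\mathbf{B}$ together and then invoke subadditivity of dimension; you phrase the containment correctly as $\mathrm{Col}(\mathbf{A}+\mathbf{B})\subseteq\mathrm{Col}(\mathbf{A})+\mathrm{Col}(\mathbf{B})$, whereas the paper writes the set-theoretic union $\mathrm{Col}(\mathbf{A})\cup\mathrm{Col}(\mathbf{B})$, which is not a subspace---your formulation with the sum of subspaces and the identity $\dim(U+V)=\dim U+\dim V-\dim(U\cap V)$ is the clean version of what they intend. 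For the lower bound, you take a genuinely different route: you obtain it as a formal corollary of the upper bound applied to the pair $(\mathbf{A}+\mathbf{B},-\mathbf{B})$, using only $\mathrm{rank}(-\mathbf{B})=\mathrm{rank}(\mathbf{B})$. The paper instead argues directly via the subspace $\mathrm{Col}(\mathbf{A})\cap\mathrm{Null}(\mathbf{B})$ and the inequality $\dim(U\cap V)\ge\dim U+\dim V-N$; as written, that argument contains a slip (membership $\mathbf{x}\in\mathrm{Col}(\mathbf{A})\cap\mathrm{Null}(\mathbf{B})$ does not imply $\mathbf{A}\mathbf{x}\ne\mathbf{0}$, nor does $(\mathbf{A}+\mathbf{B})\mathbf{x}\ne\mathbf{0}$ place $\mathbf{x}$ itself in $\mathrm{Col}(\mathbf{A}+\mathbf{B})$---one should instead consider the image of that subspace under $\mathbf{A}+\mathbf{B}$), so it requires patching, while your telescoping substitution sidesteps these issues entirely and is shorter. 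The only thing the paper's direct approach buys is a concrete subspace witnessing the lower bound, which your purely formal derivation does not exhibit; for the purpose of establishing the inequality, your route is at least as good.
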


The proof is provided in Section B of the appendix. As illustrated in Figure \ref{fig3}, the original attention matrix $\mathbf{A}$ often exhibits a low rank, whereas the learned matrix $\bar{\mathbf{B}}$ is nearly full-rank. According to Theorem \ref{theorem2}, the combined matrix $\mathbf{A} + \bar{\mathbf{B}}$ generally achieves a higher rank. This observation aligns with the results shown in Figure \ref{fig3}.

\begin{figure}[t]
   \centering
   \includegraphics[width=1.0\linewidth]{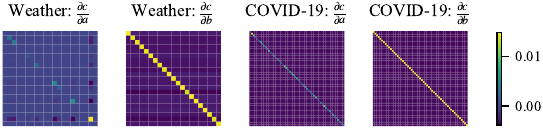}
   \caption{Illustration of the Jacobian matrices of $\mathbf{c}$ with respect to $\mathbf{a}$ and $\mathbf{b}$ for the Weather and COVID-19 datasets.}
   \label{fig_grad_mat}
\end{figure}

\paragraph{Gradient Flow} 
Let $\mathbf{a} \in \mathbb{R}^N$ denote a row in $\mathbf{Q}\mathbf{K}^T / \sqrt{D}$. For vanilla attention, the transformation is  $\tilde{\mathbf{a}} \triangleq \mathrm{Softmax}(\mathbf{a})$. Then the Jacobian matrix of $\tilde{\mathbf{a}}$ regarding $\mathbf{a}$ can be derived as:

\begin{equation}
\frac{\partial \tilde{\mathbf{a}} }{\partial \mathbf{a} }  = \mathrm{Diag} (\tilde{\mathbf{a}}) - \tilde{\mathbf{a}}\tilde{\mathbf{a}}^T,
\label{eq13}
\end{equation}

\noindent where $\mathrm{Diag} (\tilde{\mathbf{a}})$ is a diagonal matrix with $\tilde{\mathbf{a}}$ as its diagonal. 


For the enhanced attention, the transformation is given by:

\begin{equation}
\mathbf{c} =\mathrm{Norm} \left ( \mathrm{Softmax}(\mathbf{a}) + \mathbf{b}  \right ) ,
\label{eq14}
\end{equation}

\noindent where $\mathbf{b}$ represents a row of $\mathrm{Softplus} (\mathbf{B})$. The Jacobian matrices of $\mathbf{c}$ with respect to $\mathbf{a}$ and $\mathbf{b}$ can be derived as:

\begin{equation}
\begin{aligned}
\frac{\partial \mathbf{c}}{\partial \mathbf{a}} & = \frac{1}{\left \| \tilde{\mathbf{b}} \right \|_1 } 
\left ( \mathrm{Diag} (\tilde{\mathbf{a}}) - \tilde{\mathbf{a}}\tilde{\mathbf{a}}^T  \right ),\\
\frac{\partial \mathbf{c}}{\partial \mathbf{b}} & =\frac{1}{\left \| \tilde{\mathbf{b}} \right \|_1^2 } 
\left ( \left \| \tilde{\mathbf{b}} \right \|_1 \cdot \mathbf{I} - \tilde{\mathbf{b}}\mathbf{1}^T   \right ),
\end{aligned}
\label{eq15}
\end{equation}

\noindent where $\tilde{\mathbf{b}} = \tilde{\mathbf{a}} + \mathbf{b}=\mathrm{Softmax}(\mathbf{a})+ \mathbf{b}$, $\mathbf{I}\in \mathbb{R}^{N\times N}$ is the unit matrix; $\mathbf{1}=\left [ 1,\cdots ,1 \right ]^T \in \mathbb{R}^N$. The detailed proofs of Equations \eqref{eq13} and \eqref{eq15} are provided in Section C of the appendix.

We can see that $\partial \mathbf{c}/\partial \mathbf{a}$ in Equation \eqref{eq15} shares the same structure as that of vanilla attention in Equation \eqref{eq13}, except for the scaling factor $1/\left | \tilde{\mathbf{b}} \right |_1$.  Since $\left \| \tilde{\mathbf{b}} \right \|_1 = 1 + \left \| \mathbf{b} \right \|_1 > 1$ and $\tilde{\mathbf{b}}$ is learnable, the gradient is scaled down by a learnable factor, providing extra flexibility in gradient control.

Moreover, as shown in Figure \ref{fig_grad_mat}, $\partial \mathbf{c}/\partial \mathbf{b}$ exhibits a pronounced diagonal than $\partial \mathbf{c}/\partial \mathbf{a}$, suggesting a stronger dependence of $\mathbf{c}$ on $\mathbf{b}$ than $\mathbf{a}$. This aligns with the design, as $\mathbf{b}$ directly modulates the attention weights.

\paragraph{Combination of MLP and Vanilla Attention} 
We now provide a new perspective on the enhanced attention. In Equation \eqref{eq11}, the attention matrix is decomposed into two components: the input-independent, dataset-specific term $\bar{\mathbf{B}}$, and the input-dependent term $\mathbf{A}$. If $\mathbf{A}$ is zero, the enhanced attention reduces to a linear transformation of $\mathbf{V}$, effectively functioning as an MLP along the variate dimension. By jointly optimizing $\mathbf{A}$ and $\bar{\mathbf{B}}$, the enhanced attention can be interpreted as an adaptive combination of MLP and vanilla attention.

\section{Experiments}

\paragraph{Datasets and Implementation Details} 
We extensively evaluate the FreEformer using eighteen real-world datasets: \textbf{ETT} (four subsets), \textbf{Weather}, \textbf{ECL}, \textbf{Traffic}, \textbf{Exchange}, \textbf{Solar-Energy}, \textbf{PEMS} (four subsets), \textbf{ILI}, \textbf{COVID-19}, \textbf{ECG}, \textbf{METR-LA}, \textbf{NASDAQ} and \textbf{Wiki}. During training, we adopt the L1 loss function from CARD \cite{card}. The embedding dimension $d$ is fixed at 16, and the dimension $D$ is selected from $\left \{ 128, 256, 512 \right \}$. The dataset description and implementation details are provided in the appendix.

\subsection{Forecasting Performance}

\begin{table*}[t]
\begin{center}
{\fontsize{7}{8}\selectfont
\setlength{\tabcolsep}{2.8pt}
\begin{tabular}{@{}c|cc|cc|cc|cc|cc|cc|cc|cc|cc|cc|cc@{}}
\toprule
Model       & \multicolumn{2}{c|}{\begin{tabular}[c]{@{}c@{}}FreEformer\\ (Ours)\end{tabular}} & \multicolumn{2}{c|}{\begin{tabular}[c]{@{}c@{}}Leddam\\ \shortcite{Leddam_icml} \end{tabular}}    & \multicolumn{2}{c|}{\begin{tabular}[c]{@{}c@{}}CARD\\ \shortcite{card} \end{tabular}}         & \multicolumn{2}{c|}{\begin{tabular}[c]{@{}c@{}}Fredformer\\ \shortcite{fredf} \end{tabular}}   & \multicolumn{2}{c|}{\begin{tabular}[c]{@{}c@{}}iTrans.\\ \shortcite{itransformer} \end{tabular}}   & \multicolumn{2}{c|}{\begin{tabular}[c]{@{}c@{}}TimeMixer\\ \shortcite{timemixer} \end{tabular}} & \multicolumn{2}{c|}{\begin{tabular}[c]{@{}c@{}}PatchTST\\ \shortcite{patchtst} \end{tabular}} & \multicolumn{2}{c|}{\begin{tabular}[c]{@{}c@{}}Crossfm.\\ \shortcite{crossformer} \end{tabular}} & \multicolumn{2}{c|}{\begin{tabular}[c]{@{}c@{}}TimesNet\\ \shortcite{timesnet} \end{tabular}} & \multicolumn{2}{c|}{\begin{tabular}[c]{@{}c@{}}FreTS\\ \shortcite{frets} \end{tabular}} & \multicolumn{2}{c}{\begin{tabular}[c]{@{}c@{}}DLinear\\ \shortcite{linear} \end{tabular}} \\ \midrule
Metric       & MSE                                    & MAE                                    & MSE                                   & MAE                                & MSE                                   & MAE                                   & MSE                                   & MAE                                   & MSE                                   & MAE                                & MSE                                   & MAE                                & MSE                                 & MAE                                 & MSE                                 & MAE                                 & MSE                                 & MAE                                 & MSE                                & MAE                               & MSE                                 & MAE                                \\ \midrule
ETTm1        & {\color[HTML]{FF0000} \textbf{0.379}}  & {\color[HTML]{FF0000} \textbf{0.381}}  & 0.386                                 & 0.397                              & 0.383                                 & {\color[HTML]{0000FF} {\ul 0.384}}    & 0.384                                 & 0.395                                 & 0.407                                 & 0.410                              & {\color[HTML]{0000FF} {\ul 0.381}}    & 0.395                              & 0.387                               & 0.400                               & 0.513                               & 0.496                               & 0.400                               & 0.406                               & 0.407                              & 0.415                             & 0.403                               & 0.407                              \\
ETTm2        & {\color[HTML]{FF0000} \textbf{0.272}}  & {\color[HTML]{FF0000} \textbf{0.313}}  & 0.281                                 & 0.325                              & {\color[HTML]{FF0000} \textbf{0.272}} & {\color[HTML]{0000FF} {\ul 0.317}}    & {\color[HTML]{0000FF} {\ul 0.279}}    & 0.324                                 & 0.288                                 & 0.332                              & 0.275                                 & {\color[HTML]{0000FF} {\ul 0.323}} & 0.281                               & 0.326                               & 0.757                               & 0.610                               & 0.291                               & 0.333                               & 0.335                              & 0.379                             & 0.350                               & 0.401                              \\
ETTh1        & {\color[HTML]{0000FF} {\ul 0.433}}     & 0.431                                  & {\color[HTML]{FF0000} \textbf{0.431}} & {\color[HTML]{0000FF} {\ul 0.429}} & 0.442                                 & {\color[HTML]{0000FF} {\ul 0.429}}    & 0.435                                 & {\color[HTML]{FF0000} \textbf{0.426}} & 0.454                                 & 0.447                              & 0.447                                 & 0.440                              & 0.469                               & 0.454                               & 0.529                               & 0.522                               & 0.458                               & 0.450                               & 0.488                              & 0.474                             & 0.456                               & 0.452                              \\
ETTh2        & 0.372                                  & {\color[HTML]{0000FF} {\ul 0.393}}     & 0.373                                 & 0.399                              & 0.368                                 & {\color[HTML]{FF0000} \textbf{0.390}} & {\color[HTML]{0000FF} {\ul 0.365}}    & {\color[HTML]{0000FF} {\ul 0.393}}    & 0.383                                 & 0.407                              & {\color[HTML]{FF0000} \textbf{0.364}} & 0.395                              & 0.384                               & 0.405                               & 0.942                               & 0.684                               & 0.414                               & 0.427                               & 0.550                              & 0.515                             & 0.559                               & 0.515                              \\
ECL          & {\color[HTML]{FF0000} \textbf{0.162}}  & {\color[HTML]{FF0000} \textbf{0.251}}  & 0.169                                 & 0.263                              & {\color[HTML]{0000FF} {\ul 0.168}}    & {\color[HTML]{0000FF} {\ul 0.258}}    & 0.176                                 & 0.269                                 & 0.178                                 & 0.270                              & 0.182                                 & 0.272                              & 0.208                               & 0.295                               & 0.244                               & 0.334                               & 0.192                               & 0.295                               & 0.202                              & 0.290                             & 0.212                               & 0.300                              \\
Exchange     & {\color[HTML]{0000FF} {\ul 0.354}}     & {\color[HTML]{0000FF} {\ul 0.399}}     & {\color[HTML]{0000FF} {\ul 0.354}}    & 0.402                              & 0.362                                 & 0.402                                 & {\color[HTML]{FF0000} \textbf{0.333}} & {\color[HTML]{FF0000} \textbf{0.391}} & 0.360                                 & 0.403                              & 0.387                                 & 0.416                              & 0.367                               & 0.404                               & 0.940                               & 0.707                               & 0.416                               & 0.443                               & 0.416                              & 0.439                             & {\color[HTML]{0000FF} {\ul 0.354}}                               & 0.414                              \\
Traffic      & 0.435                                  & {\color[HTML]{FF0000} \textbf{0.251}}  & 0.467                                 & 0.294                              & 0.453                                 & {\color[HTML]{0000FF} {\ul 0.282}}    & {\color[HTML]{0000FF} {\ul 0.433}}    & 0.291                                 & {\color[HTML]{FF0000} \textbf{0.428}} & {\color[HTML]{0000FF} {\ul 0.282}} & 0.484                                 & 0.297                              & 0.531                               & 0.343                               & 0.550                               & 0.304                               & 0.620                               & 0.336                               & 0.538                              & 0.328                             & 0.625                               & 0.383                              \\
Weather      & {\color[HTML]{FF0000} \textbf{0.239}}  & {\color[HTML]{FF0000} \textbf{0.260}}  & 0.242                                 & 0.272                              & {\color[HTML]{FF0000} \textbf{0.239}} & {\color[HTML]{0000FF} {\ul 0.265}}    & 0.246                                 & 0.272                                 & 0.258                                 & 0.279                              & {\color[HTML]{0000FF} {\ul 0.240}}    & 0.271                              & 0.259                               & 0.281                               & 0.259                               & 0.315                               & 0.259                               & 0.287                               & 0.255                              & 0.298                             & 0.265                               & 0.317                              \\
Solar & {\color[HTML]{0000FF} {\ul 0.217}}     & {\color[HTML]{FF0000} \textbf{0.219}}  & 0.230                                 & 0.264                              & 0.237                                 & {\color[HTML]{0000FF} {\ul 0.237}}    & 0.226                                 & 0.262                                 & 0.233                                 & 0.262                              & {\color[HTML]{FF0000} \textbf{0.216}} & 0.280                              & 0.270                               & 0.307                               & 0.641                               & 0.639                               & 0.301                               & 0.319                               & 0.226                              & 0.254                             & 0.330                               & 0.401                              \\
PEMS03      & {\color[HTML]{FF0000} \textbf{0.102}}  & {\color[HTML]{FF0000} \textbf{0.206}}  & {\color[HTML]{0000FF} {\ul 0.107}}    & {\color[HTML]{0000FF} {\ul 0.210}} & 0.174                                 & 0.275                                 & 0.135                                 & 0.243                                 & 0.113                                 & 0.221                              & 0.167                                 & 0.267                              & 0.180                               & 0.291                               & 0.169                               & 0.281                               & 0.147                               & 0.248                               & 0.169                              & 0.278                             & 0.278                               & 0.375                              \\
PEMS04      & {\color[HTML]{FF0000} \textbf{0.094}}  & {\color[HTML]{FF0000} \textbf{0.196}}  & {\color[HTML]{0000FF} {\ul 0.103}}    & {\color[HTML]{0000FF} {\ul 0.210}} & 0.206                                 & 0.299                                 & 0.162                                 & 0.261                                 & 0.111                                 & 0.221                              & 0.185                                 & 0.287                              & 0.195                               & 0.307                               & 0.209                               & 0.314                               & 0.129                               & 0.241                               & 0.188                              & 0.294                             & 0.295                               & 0.388                              \\
PEMS07      & {\color[HTML]{FF0000} \textbf{0.080}}  & {\color[HTML]{FF0000} \textbf{0.167}}  & {\color[HTML]{0000FF} {\ul 0.084}}    & {\color[HTML]{0000FF} {\ul 0.180}} & 0.149                                 & 0.247                                 & 0.121                                 & 0.222                                 & 0.101                                 & 0.204                              & 0.181                                 & 0.271                              & 0.211                               & 0.303                               & 0.235                               & 0.315                               & 0.124                               & 0.225                               & 0.185                              & 0.282                             & 0.329                               & 0.395                              \\
PEMS08      & {\color[HTML]{FF0000} \textbf{0.110}}  & {\color[HTML]{FF0000} \textbf{0.194}}  & {\color[HTML]{0000FF} {\ul 0.122}}    & {\color[HTML]{0000FF} {\ul 0.211}} & 0.201                                 & 0.280                                 & 0.161                                 & 0.250                                 & 0.150                                 & 0.226                              & 0.226                                 & 0.299                              & 0.280                               & 0.321                               & 0.268                               & 0.307                               & 0.193                               & 0.271                               & 0.212                              & 0.297                             & 0.379                               & 0.416                              \\ \bottomrule
\end{tabular}}
\caption{Long-term time series forecasting results for $T=96$ and $\tau \in \left \{ 96,192,336,720 \right \}$. For PEMS, $\tau \in \left \{ 12,24,48,96 \right \}$.  Results are averaged across these prediction lengths. These settings are used throughout the following tables.}
\label{tab_long_term}
\end{center}
\end{table*}


\begin{table*}[t]
\begin{center}
{\fontsize{7}{8}\selectfont
\setlength{\tabcolsep}{3.2pt}
\begin{tabular}{@{}cc|cc|cc|cc|cc|cc|cc|cc|cc|cc|cc@{}}
\toprule
\multicolumn{2}{c|}{Model}      & \multicolumn{2}{c|}{\begin{tabular}[c]{@{}c@{}}FreEformer\\ (Ours)\end{tabular}} & \multicolumn{2}{c|}{\begin{tabular}[c]{@{}c@{}}Leddam\\ \shortcite{Leddam_icml} \end{tabular}}    & \multicolumn{2}{c|}{\begin{tabular}[c]{@{}c@{}}CARD\\ \shortcite{card} \end{tabular}}         & \multicolumn{2}{c|}{\begin{tabular}[c]{@{}c@{}}Fredformer\\ \shortcite{fredf} \end{tabular}}   & \multicolumn{2}{c|}{\begin{tabular}[c]{@{}c@{}}iTrans.\\ \shortcite{itransformer} \end{tabular}}   & \multicolumn{2}{c|}{\begin{tabular}[c]{@{}c@{}}TimeMixer\\ \shortcite{timemixer} \end{tabular}} & \multicolumn{2}{c|}{\begin{tabular}[c]{@{}c@{}}PatchTST\\ \shortcite{patchtst} \end{tabular}} &  \multicolumn{2}{c|}{\begin{tabular}[c]{@{}c@{}}TimesNet\\ \shortcite{timesnet} \end{tabular}} & \multicolumn{2}{c|}{\begin{tabular}[c]{@{}c@{}}DLinear\\ \shortcite{linear} \end{tabular}} 
& \multicolumn{2}{c}{\begin{tabular}[c]{@{}c@{}}FreTS\\ \shortcite{frets} \end{tabular}}    \\ \midrule
\multicolumn{2}{c|}{Metric}     & MSE                                   & MAE                                   & MSE                                & MAE                                & MSE                                & MAE                                & MSE                                   & MAE                                & MSE                                & MAE                                & MSE                                   & MAE                                   & MSE                                & MAE                                & MSE           & MAE           & MSE                                & MAE   & MSE                                   & MAE   \\ \midrule
                           & S1 & {\color[HTML]{FF0000} \textbf{1.140}} & {\color[HTML]{FF0000} \textbf{0.585}} & 1.468                              & 0.679                              & 1.658                              & 0.707                              & 1.518                                 & 0.696                              & {\color[HTML]{0000FF} {\ul 1.437}} & {\color[HTML]{0000FF} {\ul 0.659}} & 1.707                                 & 0.734                                 & 1.681                              & 0.723                              & 1.480         & 0.684         & 2.400                              & 1.034 & 1.839                                 & 0.782 \\
\multirow{-2}{*}{ILI}      & S2 & {\color[HTML]{FF0000} \textbf{1.906}} & {\color[HTML]{FF0000} \textbf{0.835}} & 1.982                              & {\color[HTML]{0000FF} {\ul 0.875}} & 2.260                              & 0.938                              & {\color[HTML]{0000FF} {\ul 1.947}}    & 0.899                              & 1.993                              & 0.887                              & 2.020                                 & 0.878                                 & 2.128                              & 0.885                              & 2.139         & 0.931         & 3.083                              & 1.217 & 3.036                                 & 1.174 \\ \midrule
                           & S1 & {\color[HTML]{FF0000} \textbf{1.892}} & {\color[HTML]{FF0000} \textbf{0.673}} & 2.064                              & 0.779                              & 2.059                              & 0.767                              & {\color[HTML]{0000FF} {\ul 1.902}}    & {\color[HTML]{0000FF} {\ul 0.765}} & 2.096                              & 0.795                              & 2.234                                 & 0.782                                 & 2.221                              & 0.820                              & 2.569         & 0.861         & 3.483                              & 1.102 & 2.516                                 & 0.862 \\
\multirow{-2}{*}{COVID-19} & S2 & {\color[HTML]{FF0000} \textbf{8.435}} & {\color[HTML]{FF0000} \textbf{1.764}} & {\color[HTML]{0000FF} {\ul 8.439}} & {\color[HTML]{0000FF} {\ul 1.792}} & 9.013                              & 1.862                              & 8.656                                 & 1.808                              & 8.506                              & {\color[HTML]{0000FF} {\ul 1.792}} & 9.604                                 & 1.918                                 & 9.451                              & 1.905                              & 9.644         & 1.877         & 13.075                             & 2.099 & 11.345                                & 1.958 \\ \midrule
                           & S1 & 0.336                                 & {\color[HTML]{FF0000} \textbf{0.221}} & {\color[HTML]{0000FF} {\ul 0.327}} & 0.243                              & 0.349                              & {\color[HTML]{0000FF} {\ul 0.233}} & 0.336                                 & 0.242                              & 0.338                              & 0.244                              & 0.334                                 & 0.245                                 & 0.335                              & 0.243                              & 0.344         & 0.253         & 0.341                              & 0.294 & {\color[HTML]{FF0000} \textbf{0.324}} & 0.279 \\
\multirow{-2}{*}{METR-LA}  & S2 & 0.840                                 & {\color[HTML]{FF0000} \textbf{0.406}} & 0.878                              & 0.490                              & 0.929                              & {\color[HTML]{0000FF} {\ul 0.466}} & 0.898                                 & 0.495                              & 0.916                              & 0.501                              & 0.881                                 & 0.499                                 & 0.893                              & 0.502                              & 0.890         & 0.488         & {\color[HTML]{0000FF} {\ul 0.819}} & 0.550 & {\color[HTML]{FF0000} \textbf{0.804}} & 0.543 \\ \midrule
                           & S1 & {\color[HTML]{FF0000} \textbf{0.055}} & {\color[HTML]{FF0000} \textbf{0.126}} & 0.059                              & 0.135                              & {\color[HTML]{0000FF} {\ul 0.057}} & {\color[HTML]{0000FF} {\ul 0.130}} & 0.059                                 & 0.135                              & 0.060                              & 0.137                              & {\color[HTML]{FF0000} \textbf{0.055}} & {\color[HTML]{FF0000} \textbf{0.126}} & 0.058                              & 0.132                              & 0.068         & 0.151         & 0.072                              & 0.170 & 0.080                                 & 0.184 \\
\multirow{-2}{*}{NASDAQ}   & S2 & {\color[HTML]{FF0000} \textbf{0.185}} & {\color[HTML]{FF0000} \textbf{0.277}} & 0.196                              & 0.286                              & 0.193                              & 0.284                              & 0.194                                 & 0.285                              & 0.207                              & 0.297                              & {\color[HTML]{0000FF} {\ul 0.186}}    & {\color[HTML]{0000FF} {\ul 0.281}}    & 0.198                              & 0.286                              & 0.255         & 0.343         & 0.228                              & 0.331 & 0.263                                 & 0.361 \\ \midrule
                           & S1 & 6.524                                 & {\color[HTML]{FF0000} \textbf{0.391}} & 6.547                              & 0.404                              & 6.553                              & {\color[HTML]{0000FF} {\ul 0.400}} & 6.705                                 & 0.406                              & 6.569                              & 0.405                              & 6.572                                 & 0.409                                 & {\color[HTML]{0000FF} {\ul 6.523}} & 0.404                              & 7.956         & 0.520         & 6.634                              & 0.481 & {\color[HTML]{FF0000} \textbf{6.521}} & 0.448 \\
\multirow{-2}{*}{Wiki}     & S2 & 6.259                                 & {\color[HTML]{FF0000} \textbf{0.442}} & 6.286                              & 0.463                              & 6.285                              & 0.453                              & {\color[HTML]{FF0000} \textbf{5.931}} & 0.453                              & 6.275                              & 0.458                              & 6.315                                 & 0.468                                 & 6.212                              & {\color[HTML]{0000FF} {\ul 0.444}} & 7.310         & 0.623         & 6.205                              & 0.539 & {\color[HTML]{0000FF} {\ul 6.147}}    & 0.505 \\ \bottomrule
\end{tabular}
}
\caption{Short-term time series forecasting results under two settings: S1 (Input-12, Predict-$\left \{ 3,6,9,12 \right \}$) and S2 (Input-36, Predict-$\left \{ 24,36,48,60 \right \}$). Average results are reported across four prediction lengths. S1 is the default setting in the following experiments. }
\label{tab_short_term}
\end{center}
\end{table*}

We choose 10 well-acknowledged deep forecasters as our baselines, including (1) Transformer-based models: Leddam \cite{Leddam_icml}, CARD \cite{card}, Fredformer \cite{fredformer}, iTransformer \cite{itransformer}, PatchTST \cite{patchtst}, Crossformer \cite{crossformer}; (2) Linear-based models: TimeMixer \cite{timemixer}, FreTS \cite{frets} and DLinear \cite{linear}; (3) TCN-based model: TimesNet \cite{timesnet}. 

Comprehensive results for long- and short-term forecasting are presented in Tables \ref{tab_long_term} and \ref{tab_short_term}, respectively, with the best results highlighted in {\color[HTML]{FF0000} \textbf{bold}} and the second-best {\color[HTML]{0000FF} {\ul underlined}}. FreEformer consistently outperforms state-of-the-art models across various prediction lengths and real-world domains. Compared with sophisticated time-domain-based models, such as Leddam and CARD, FreEformer achieves superior performance with a simpler architecture, benefiting from the global-level property of the frequency domain. Furthermore, its performance advantage over Fredformer, another Transformer- and frequency-based model, suggests that the deliberate patching of band-limited frequency spectra may introduce unnecessary noise, hindering forecasting accuracy. 

Notably, in Table \ref{tab_fre_models}, we compare FreEformer with additional frequency-based models, where it also demonstrates a clear performance advantage. The visualization results of FreEformer are presented in Figure \ref{fig5}. Furthermore, as shown in Table 10 of the appendix, FreEformer exhibits state-of-the-art performance with variable lookback lengths.

\begin{table}[t]
\begin{center}
{\fontsize{7}{8}\selectfont
\setlength{\tabcolsep}{2.6pt}
\begin{tabular}{@{}c|
cc|cc|cc|cc|cc}
\toprule
Models   & \multicolumn{2}{c|}{\begin{tabular}[c]{@{}c@{}}FreEformer\\ (Ours)\end{tabular}} & \multicolumn{2}{c|}{\begin{tabular}[c]{@{}c@{}}FITS\\ \shortcite{fits} \end{tabular}} & 
\multicolumn{2}{c|}{\begin{tabular}[c]{@{}c@{}}FAN\\ \shortcite{fan_norm} \end{tabular}} 
& \multicolumn{2}{c|}{\begin{tabular}[c]{@{}c@{}}FilterNet\\ \shortcite{filternet} \end{tabular}} 
& \multicolumn{2}{c}{\begin{tabular}[c]{@{}c@{}}FreDF\\ \shortcite{fredf} \end{tabular}}     
\\ \midrule
Metric   & MSE                                     & MAE                                    & MSE                                & MAE                               & MSE                               & MAE                               & MSE                                  & MAE                                  & MSE                                   & MAE                                \\ \midrule
ETT(Avg) & {\color[HTML]{FF0000} \textbf{0.364}}   & {\color[HTML]{FF0000} \textbf{0.380}}  & 0.408                              & 0.405                             & 0.405                             & 0.427                             & {\color[HTML]{0000FF} {\ul 0.367}}   & {\color[HTML]{0000FF} {\ul 0.384}}   & 0.369                                 & 0.384                              \\
ECL      & {\color[HTML]{FF0000} \textbf{0.162}}   & {\color[HTML]{FF0000} \textbf{0.251}}  & 0.384                              & 0.434                             & 0.208                             & 0.298                             & 0.201                                & 0.285                                & {\color[HTML]{0000FF} {\ul 0.170}}    & {\color[HTML]{0000FF} {\ul 0.259}} \\
Traffic  & {\color[HTML]{0000FF} {\ul 0.435}}      & {\color[HTML]{FF0000} \textbf{0.251}}  & 0.615                              & 0.370                             & 0.526                             & 0.357                             & 0.521                                & 0.340                                & {\color[HTML]{FF0000} \textbf{0.421}} & {\color[HTML]{0000FF} {\ul 0.279}} \\
Weather  & {\color[HTML]{FF0000} \textbf{0.239}}   & {\color[HTML]{FF0000} \textbf{0.260}}  & 0.273                              & 0.292                             & 0.247                             & 0.292                             & {\color[HTML]{0000FF} {\ul 0.248}}   & {\color[HTML]{0000FF} {\ul 0.274}}   & 0.254                                 & {\color[HTML]{0000FF} {\ul 0.274}} \\ \bottomrule
\end{tabular}
}
\caption{Comparison with additional state-of-the-art frequency-based models. Average results are reported across four prediction lengths. `Avg' refers to averages further computed over subsets.}
\label{tab_fre_models}
\end{center}
\end{table}

\begin{figure}[t]
   \centering
   \includegraphics[width=1.0\linewidth]{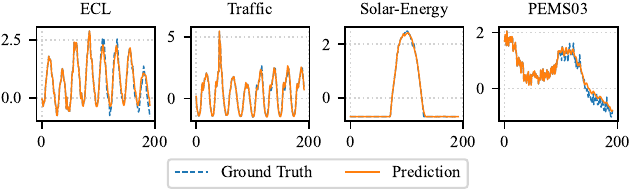}
   \caption{Visualization of the forecasting results under the `Input-96-Predict-96' setting, demonstrating accurate approximations. }
   \label{fig5}
\end{figure}

\subsection{Model Analysis}

\begin{table}[ht]
\begin{center}
{\fontsize{8}{10}\selectfont
\setlength{\tabcolsep}{4pt}
\begin{tabular}{@{}ccc|ccccc@{}}
\toprule
Layer  & Dim. & Patch. & ETTm1                                 & Weather                               & ECL                                   & Traffic                               & COVID-19                              \\ \midrule
Linear & Var. & \ding{55}     & 0.385                                 & 0.245                                 & 0.189                                 & 0.488                                 & 2.040                                 \\
Linear & Fre. & \ding{55}     & 0.386                                 & 0.246                                 & 0.184                                 & 0.482                                 & 2.086                                 \\ \midrule
Trans. & Fre. & \ding{52}    & {\color[HTML]{0000FF} {\ul 0.381}}    & 0.244                                 & 0.183                                 & 0.504                                 & 2.100                                 \\
Trans. & Fre. & \ding{55}     & 0.383                                 & 0.245                                 & {\color[HTML]{0000FF} {\ul 0.181}}    & 0.489                                 & 2.116                                 \\
Trans. & Var. & \ding{52}     & 0.385                                 & {\color[HTML]{0000FF} {\ul 0.241}}    & {\color[HTML]{FF0000} \textbf{0.162}} & {\color[HTML]{0000FF} {\ul 0.443}}    & {\color[HTML]{0000FF} {\ul 2.029}}    \\ \midrule
Trans. & Var. & \ding{55}     & {\color[HTML]{FF0000} \textbf{0.379}} & {\color[HTML]{FF0000} \textbf{0.239}} & {\color[HTML]{FF0000} \textbf{0.162}} & {\color[HTML]{FF0000} \textbf{0.435}} & {\color[HTML]{FF0000} \textbf{1.892}} \\ \bottomrule
\end{tabular}

}
\caption{Architecture ablations on layers, dimensions, and patching settings. Layers include linear and enhanced Transformer layers, while dimensions refer to frequency and variate dimensions. `Patch.' indicates patching along the frequency dimension, with patch length and stride set to 6 and 3 for COVID-19, and 16 and 8 for other datasets. Average MSEs are reported across four prediction horizons. The final row corresponds to the FreEformer configuration.}
\label{tab_arch_abl}
\end{center}
\end{table}

\paragraph{Architecture Ablations}
The FreEformer utilizes an enhanced Transformer architecture to capture cross-variate dependencies in the frequency domain. Table \ref{tab_arch_abl} presents a comparison of several FreEformer variants, evaluating the impact of linear and enhanced Transformer layers, different dimensional configurations, and patching along the frequency dimension. To ensure a fair comparison, the enhanced Transformer is used for all Transformer-based settings. The results indicate that: 1) Enhanced Transformer blocks outperform linear layers due to their superior representation capabilities; 2) Multivariate dependency learning generally outperforms inter-frequency learning, aligning with the claim in FreDF \cite{fredf} that correlations among frequency points are minimal; 3) Furthermore, patching does not improve FreEformer, likely because patching frequencies creates localized receptive fields, thereby limiting access to global information.

\begin{table}[ht]
\begin{center}
{\fontsize{8}{9}\selectfont
\begin{tabular}{@{}cc|ccccc@{}}
\toprule
Attn.                    & Domain & Traffic        & PEMS03         & Weather        & Solar          & ILI            \\ \midrule
\multirow{2}{*}{Ours}    & Fre.   & {\color[HTML]{FF0000} \textbf{0.435}} & {\color[HTML]{FF0000} \textbf{0.102}} & {\color[HTML]{FF0000} \textbf{0.239}} & {\color[HTML]{FF0000} \textbf{0.217}} & {\color[HTML]{FF0000} \textbf{1.140}} \\
                         & Time   & 0.443          & 0.122          & 0.243          & 0.228          & 1.375          \\ \midrule
\multirow{2}{*}{Vanilla} & Fre.   & 0.451          & {\color[HTML]{FF0000} \textbf{0.113}} & {\color[HTML]{FF0000} \textbf{0.245}} & {\color[HTML]{FF0000} \textbf{0.220}} & {\color[HTML]{FF0000} \textbf{1.510}} \\
                         & Time   & {\color[HTML]{FF0000}\textbf{0.441}} & 0.146          & 0.248          & 0.226          & 2.140          \\ \bottomrule
\end{tabular}
}
\caption{Performance comparison of the frequency-domain and time-domain representation learning under two attention settings. Average MSEs are reported across four prediction lengths.}
\label{tab_fre_time}
\end{center}
\end{table}


\paragraph{Frequency-Domain vs. Temporal Representation} To construct the time-domain variant of FreEformer, we remove the DFT and IDFT steps, as well as the imaginary branch. As shown in Table \ref{tab_fre_time}, the frequency-domain representation achieves an average improvement of 8.4\% and 10.7\% in MSE compared to the time-domain version under the enhanced and vanilla attention settings, respectively. Additionally, we show in Section I.1 of the appendix that Fourier bases generally outperform Wavelet and polynomial bases for our model.

\begin{table}[ht]
\begin{center}
{\fontsize{8}{9}\selectfont
\setlength{\tabcolsep}{3.5pt}
\begin{tabular}{@{}c|ccccccc@{}}
\toprule
Head & ETTm1                                 & Weather                               & ECL                                   & Traffic                               & Solar                                 & NASDAQ                                & COVID-19                              \\ \midrule
Fre. & {\color[HTML]{FF0000} \textbf{0.379}}                                 & 0.245                                 & {\color[HTML]{FF0000} \textbf{0.160}}                                 & 0.441                                 & {\color[HTML]{FF0000} \textbf{0.216}} & {\color[HTML]{FF0000} \textbf{0.055}}                                 & 1.930                                 \\
Time & {\color[HTML]{FF0000} \textbf{0.379}} & {\color[HTML]{FF0000} \textbf{0.239}} & 0.162 & {\color[HTML]{FF0000} \textbf{0.435}} & 0.217 & {\color[HTML]{FF0000} \textbf{0.055}} & {\color[HTML]{FF0000} \textbf{1.892}} \\ \bottomrule
\end{tabular}
}
\caption{Performance comparison of frequency-domain and temporal prediction heads. Average MSEs are reported.}
\label{tab_head}
\end{center}
\end{table}

\paragraph{Prediction Head}
In our model, after performing frequency domain representation learning, we apply a temporal prediction head to generate the final predictions. In contrast, some frequency-based forecasters (e.g., FITS and Fredformer) directly predict the future frequency spectrum and transform it back to the time domain as the final step. In FreEformer, the frequency prediction head is formulated as:

\begin{equation}
\mathbf{\hat{y}}=\mathrm{DeNorm}  \left ( \mathrm{IDFT}  \left ( \mathrm{FlatLin}(\tilde{\mathrm{Re}})+j \cdot \mathrm{FlatLin}(\tilde{\mathrm{Im}}) \right ) \right ),
\label{eq_fre_head}
\end{equation}

\noindent where $\tilde{\mathrm{Re}}$ and $\tilde{\mathrm{Im}}$ are defined in Equation \eqref{eq7}. As shown in Table \ref{tab_head}, the temporal head slightly outperforms the frequency-domain head, highlighting the challenges of accurately forecasting the frequency spectrum. Additionally, Equation \eqref{eq_fre_head} incurs higher computational costs in the IDFT step when $\tau > T$, as in long-term forecasting scenarios.

\subsection{Enhanced Attention Analysis}

\begin{table}[ht]
\begin{center}
{\fontsize{7.5}{9}\selectfont
\begin{tabular}{@{}c@{\hspace{4pt}}c@{\hspace{4pt}}c@{\hspace{4pt}}c@{\hspace{4pt}}c@{\hspace{4pt}}c@{\hspace{4pt}}c@{\hspace{4pt}}c@{\hspace{4pt}}c@{}}
\toprule
Dataset & \begin{tabular}[c]{@{}c@{}}Ours\end{tabular} & 
\begin{tabular}[c]{@{}c@{}}Trans.\\ \shortcite{transformer} \end{tabular} & 
\begin{tabular}[c]{@{}c@{}}Flowfm.\\ \shortcite{flowformer} \end{tabular} & 
\begin{tabular}[c]{@{}c@{}}Flashfm.\\ \shortcite{flashattention} \end{tabular} & 
\begin{tabular}[c]{@{}c@{}}Flatten\\ \shortcite{flattentrans} \end{tabular} & 
\begin{tabular}[c]{@{}c@{}}Mamba\\ \shortcite{mamba} \end{tabular} & 
\begin{tabular}[c]{@{}c@{}}LASER\\ \shortcite{laser} \end{tabular} & 
\begin{tabular}[c]{@{}c@{}}Lin.Attn.\\ \shortcite{linear_softmax} \end{tabular} \\ \midrule
Traffic  & {\color[HTML]{FF0000} \textbf{0.435}}                            & {\color[HTML]{FF0000} 0.451}                             & {\color[HTML]{FF0000} 0.453}                              & {\color[HTML]{FF0000} 0.448}                                                  & {\color[HTML]{FF0000} 0.453}                                                    & {\color[HTML]{FF0000} {\ul 0.443}}                      & {\color[HTML]{FF0000} 0.451}                                                 & {\color[HTML]{FF0000} 0.452}                                                     \\
PEMS03   & {\color[HTML]{FF0000} \textbf{0.102}}                            & 0.113                                                    & 0.113                                                     & 0.114                                                      & 0.114                                                     & 0.115                                                   & {\color[HTML]{0000FF} {\ul 0.111}}                      & 0.115                                                       \\
Weather  & {\color[HTML]{FF0000} \textbf{0.239}}                            & 0.245                                                    & {\color[HTML]{0000FF} {\ul 0.242}}                        & 0.245                                                      & 0.248                                                     & 0.243                                                   & 0.244                                                   & 0.245                                                       \\
Solar    & {\color[HTML]{FF0000} \textbf{0.217}}                            & {\color[HTML]{FF0000} {\ul 0.220}}                       & {\color[HTML]{FF0000} 0.224}                              & {\color[HTML]{FF0000} 0.221}                               & {\color[HTML]{FF0000} 0.229}                              & {\color[HTML]{FF0000} 0.228}                            & {\color[HTML]{FF0000} 0.219}                            & {\color[HTML]{FF0000} 0.230}                                \\
ILI      & {\color[HTML]{FF0000} \textbf{1.140}}                            & 1.510                                                    & {\color[HTML]{FF0000} {\ul 1.288}}                        & 1.547                                                      & 1.842                                                     & 1.508                                                   & 1.596                                                   & {\color[HTML]{FF0000} 1.453}                                \\ \bottomrule
\end{tabular}
}
\caption{Comparison of the enhanced Transformer with state-of-the-art attention models and Mamba. Average MSEs are reported across four prediction lengths. Results outperforming state-of-the-art forecasters Leddam and CARD are highlighted in {\color[HTML]{FF0000} red}.}
\label{tab_attns}
\end{center}
\end{table}

We compare the enhanced Transformer with vanilla Transformer,  state-of-the-art Transformer variants and Mamba \cite{mamba} in Table \ref{tab_attns}. The enhanced Transformer consistently outperforms other models, verifying the effectiveness of the enhanced attention mechanism.

\begin{table}[ht]
\begin{center}
{\fontsize{7.5}{9}\selectfont
\begin{tabular}{@{}c@{\hspace{5pt}}|
c@{\hspace{5pt}}c|
c@{\hspace{5pt}}c|
c@{\hspace{5pt}}c|
c@{\hspace{5pt}}c@{}}
\toprule
\multirow{2}{*}{Dataset} & \multicolumn{2}{c|}{iTrans.} & \multicolumn{2}{c|}{PatchTST} & \multicolumn{2}{c|}{Leddam} & \multicolumn{2}{c}{Fredformer}  \\ \cmidrule(l){2-9} 
                         & Van.     & E.A.             & Van.      & E.A.             & Van.     & E.A.            & Van.           & E.A.          \\ \midrule
ETTm1                     & 0.407 & {\color[HTML]{FF0000} \textbf{0.389}} & 0.387                                 & {\color[HTML]{FF0000} \textbf{0.381}} & 0.386                                 & {\color[HTML]{FF0000} \textbf{0.384}} & {\color[HTML]{FF0000} \textbf{0.384}} & 0.385                                 \\
ECL                       & 0.178 & {\color[HTML]{FF0000} \textbf{0.165}} & 0.208                                 & {\color[HTML]{FF0000} \textbf{0.181}} & 0.169                                 & {\color[HTML]{FF0000} \textbf{0.167}} & 0.176                                 & {\color[HTML]{FF0000} \textbf{0.169}} \\
PEMS07                    & 0.101 & {\color[HTML]{FF0000} \textbf{0.086}} & 0.211                                 & {\color[HTML]{FF0000} \textbf{0.156}} & 0.084                                 & {\color[HTML]{FF0000} \textbf{0.080}} & 0.121                                 & {\color[HTML]{FF0000} \textbf{0.103}} \\
Solar                     & 0.233 & {\color[HTML]{FF0000} \textbf{0.226}} & 0.270                                 & {\color[HTML]{FF0000} \textbf{0.232}} & 0.230                                 & {\color[HTML]{FF0000} \textbf{0.228}} & 0.226                                 & {\color[HTML]{FF0000} \textbf{0.222}} \\
Weather                   & 0.258 & {\color[HTML]{FF0000} \textbf{0.249}} & 0.259                                 & {\color[HTML]{FF0000} \textbf{0.245}} & {\color[HTML]{FF0000} \textbf{0.242}} & {\color[HTML]{FF0000} \textbf{0.242}} & 0.246                                 & {\color[HTML]{FF0000} \textbf{0.242}} \\
METR-LA                   & 0.338 & {\color[HTML]{FF0000} \textbf{0.329}} & {\color[HTML]{FF0000} \textbf{0.335}} & {\color[HTML]{FF0000} \textbf{0.335}} & 0.327                                 & {\color[HTML]{FF0000} \textbf{0.321}} & 0.336                                 & {\color[HTML]{FF0000} \textbf{0.334}} \\ \bottomrule
\end{tabular}
}
\caption{Comparison of state-of-the-art models using vanilla  (Van.) and enhanced attention (E.A.). Only the attention mechanism is updated, with other components and the loss function kept unchanged. Average MSEs across four prediction lengths are reported.}
\label{tab_EA}
\end{center}
\end{table}

We further apply the enhanced attention mechanism to state-of-the-art forecasters, as shown in Table \ref{tab_EA}. This yields average MSE improvements of 5.9\% for iTransformer, 9.9\% for PatchTST, 1.4\% for Leddam (with updates only to the `cross-channel attention' module), and 3.8\% for FreEformer. These results demonstrate the versatility and effectiveness of the enhanced attention mechanism. Moreover, comparing Tables \ref{tab_long_term} and \ref{tab_EA}, FreEformer consistently outperforms these improved forecasters, underscoring its architectural advantages.

In the appendix, we further demonstrate FreEformer's performance superiority on more metrics (e.g., MASE, correlation coefficient).  Remarkably, FreEformer, trained from scratch, achieves superior or comparable performance to a pre-trained model fine-tuned on the same training data.

\section{Conclusion}
In this work, we present a simple yet effective multivariate time series forecasting model based on a frequency-domain enhanced Transformer. The enhanced attention mechanism is demonstrated to be effective both theoretically and empirically. It can consistently bring performance improvements for state-of-the-art Transformer-based forecasters. We hope that FreEformer will serve as a strong baseline for the time series forecasting community.

\nocite{non-stationary,rlinear,timer,moment,Chronos,google_timesfm,Moirai,swintrans}

\clearpage
\bibliographystyle{named}
\bibliography{ijcai25}

\begin{thebibliography}{}

\bibitem[\protect\citeauthoryear{Ansari \bgroup \em et al.\egroup }{2024}]{Chronos}
Abdul~Fatir Ansari, Lorenzo Stella, Caner Turkmen, Xiyuan Zhang, Pedro Mercado, Huibin Shen, Oleksandr Shchur, Syama~Sundar Rangapuram, Sebastian~Pineda Arango, Shubham Kapoor, et~al.
\newblock Chronos: Learning the language of time series.
\newblock {\em arXiv preprint arXiv:2403.07815}, 2024.

\bibitem[\protect\citeauthoryear{Bai \bgroup \em et al.\egroup }{2018}]{tcn}
Shaojie Bai, J~Zico Kolter, and Vladlen Koltun.
\newblock An empirical evaluation of generic convolutional and recurrent networks for sequence modeling.
\newblock {\em arXiv preprint arXiv:1803.01271}, 2018.

\bibitem[\protect\citeauthoryear{Chen \bgroup \em et al.\egroup }{2022}]{chen2022tamp}
Yuzhou Chen, Ignacio Segovia-Dominguez, Baris Coskunuzer, and Yulia Gel.
\newblock Tamp-s2gcnets: coupling time-aware multipersistence knowledge representation with spatio-supra graph convolutional networks for time-series forecasting.
\newblock In {\em International Conference on Learning Representations}, 2022.

\bibitem[\protect\citeauthoryear{Chen \bgroup \em et al.\egroup }{2023}]{survey4}
Zonglei Chen, Minbo Ma, Tianrui Li, Hongjun Wang, and Chongshou Li.
\newblock Long sequence time-series forecasting with deep learning: A survey.
\newblock {\em Information Fusion}, 97:101819, 2023.

\bibitem[\protect\citeauthoryear{Dao \bgroup \em et al.\egroup }{2022}]{flashattention}
Tri Dao, Dan Fu, Stefano Ermon, Atri Rudra, and Christopher R{\'e}.
\newblock Flashattention: Fast and memory-efficient exact attention with io-awareness.
\newblock {\em NeurIPS}, 35:16344--16359, 2022.

\bibitem[\protect\citeauthoryear{Das \bgroup \em et al.\egroup }{2023}]{google_timesfm}
Abhimanyu Das, Weihao Kong, Rajat Sen, and Yichen Zhou.
\newblock A decoder-only foundation model for time-series forecasting.
\newblock {\em arXiv preprint arXiv:2310.10688}, 2023.

\bibitem[\protect\citeauthoryear{Fan \bgroup \em et al.\egroup }{2022}]{depts}
Wei Fan, Shun Zheng, Xiaohan Yi, Wei Cao, Yanjie Fu, Jiang Bian, and Tie-Yan Liu.
\newblock Depts: Deep expansion learning for periodic time series forecasting.
\newblock {\em arXiv preprint arXiv:2203.07681}, 2022.

\bibitem[\protect\citeauthoryear{Fan \bgroup \em et al.\egroup }{2024}]{derivative}
Wei Fan, Kun Yi, Hangting Ye, Zhiyuan Ning, Qi~Zhang, and Ning An.
\newblock Deep frequency derivative learning for non-stationary time series forecasting.
\newblock {\em arXiv preprint arXiv:2407.00502}, 2024.

\bibitem[\protect\citeauthoryear{Goswami \bgroup \em et al.\egroup }{2024}]{moment}
Mononito Goswami, Konrad Szafer, Arjun Choudhry, Yifu Cai, Shuo Li, and Artur Dubrawski.
\newblock Moment: A family of open time-series foundation models.
\newblock In {\em ICML}, 2024.

\bibitem[\protect\citeauthoryear{Gu and Dao}{2023}]{mamba}
Albert Gu and Tri Dao.
\newblock Mamba: Linear-time sequence modeling with selective state spaces.
\newblock {\em arXiv preprint arXiv:2312.00752}, 2023.

\bibitem[\protect\citeauthoryear{Han \bgroup \em et al.\egroup }{2023}]{flattentrans}
Dongchen Han, Xuran Pan, Yizeng Han, Shiji Song, and Gao Huang.
\newblock Flatten transformer: Vision transformer using focused linear attention.
\newblock In {\em ICCV}, pages 5961--5971, 2023.

\bibitem[\protect\citeauthoryear{Han \bgroup \em et al.\egroup }{2024}]{cicd}
Lu~Han, Han-Jia Ye, and De-Chuan Zhan.
\newblock The capacity and robustness trade-off: Revisiting the channel independent strategy for multivariate time series forecasting.
\newblock {\em IEEE Transactions on Knowledge and Data Engineering}, 2024.

\bibitem[\protect\citeauthoryear{He \bgroup \em et al.\egroup }{2022}]{aaai2022_catn}
Hui He, Qi~Zhang, Simeng Bai, Kun Yi, and Zhendong Niu.
\newblock Catn: Cross attentive tree-aware network for multivariate time series forecasting.
\newblock In {\em Proceedings of the AAAI Conference on Artificial Intelligence}, volume~36, pages 4030--4038, 2022.

\bibitem[\protect\citeauthoryear{Horn and Johnson}{2012}]{matrix}
Roger~A Horn and Charles~R Johnson.
\newblock {\em Matrix analysis}.
\newblock Cambridge university press, 2012.

\bibitem[\protect\citeauthoryear{Jin \bgroup \em et al.\egroup }{2021}]{time_llm}
Ming Jin, Shiyu Wang, Lintao Ma, Zhixuan Chu, James~Y Zhang, Xiaoming Shi, Pin-Yu Chen, Yuxuan Liang, Yuan-Fang Li, Shirui Pan, et~al.
\newblock Time-llm: Time series forecasting by reprogramming large language models.
\newblock In {\em ICLR}, 2021.

\bibitem[\protect\citeauthoryear{Kim \bgroup \em et al.\egroup }{2021}]{revin}
Taesung Kim, Jinhee Kim, Yunwon Tae, Cheonbok Park, Jang-Ho Choi, and Jaegul Choo.
\newblock Reversible instance normalization for accurate time-series forecasting against distribution shift.
\newblock In {\em ICLR}, 2021.

\bibitem[\protect\citeauthoryear{Kingma and Ba}{2015}]{adam_opt}
Diederik~P. Kingma and Jimmy Ba.
\newblock Adam: A method for stochastic optimization.
\newblock In {\em ICLR}, 2015.

\bibitem[\protect\citeauthoryear{Kitaev \bgroup \em et al.\egroup }{2020}]{reformer}
Nikita Kitaev, {\L}ukasz Kaiser, and Anselm Levskaya.
\newblock Reformer: The efficient transformer.
\newblock {\em arXiv preprint arXiv:2001.04451}, 2020.

\bibitem[\protect\citeauthoryear{Kollovieh \bgroup \em et al.\egroup }{2024}]{diff_refine}
Marcel Kollovieh, Abdul~Fatir Ansari, Michael Bohlke-Schneider, Jasper Zschiegner, Hao Wang, and Yuyang~Bernie Wang.
\newblock Predict, refine, synthesize: Self-guiding diffusion models for probabilistic time series forecasting.
\newblock In {\em NeurIPS}, volume~36, 2024.

\bibitem[\protect\citeauthoryear{Lai \bgroup \em et al.\egroup }{2018}]{rnn}
Guokun Lai, Wei-Cheng Chang, Yiming Yang, and Hanxiao Liu.
\newblock Modeling long-and short-term temporal patterns with deep neural networks.
\newblock In {\em SIGIR}, pages 95--104, 2018.

\bibitem[\protect\citeauthoryear{Li \bgroup \em et al.\egroup }{2023}]{rlinear}
Zhe Li, Shiyi Qi, Yiduo Li, and Zenglin Xu.
\newblock Revisiting long-term time series forecasting: An investigation on linear mapping.
\newblock {\em arXiv preprint arXiv:2305.10721}, 2023.

\bibitem[\protect\citeauthoryear{Liu \bgroup \em et al.\egroup }{2021}]{swintrans}
Ze~Liu, Yutong Lin, Yue Cao, Han Hu, Yixuan Wei, Zheng Zhang, Stephen Ching-Feng Lin, and Baining Guo.
\newblock Swin transformer: Hierarchical vision transformer using shifted windows.
\newblock In {\em ICCV}, pages 10012--10022, 2021.

\bibitem[\protect\citeauthoryear{Liu \bgroup \em et al.\egroup }{2022a}]{scinet}
Minhao Liu, Ailing Zeng, Muxi Chen, Zhijian Xu, Qiuxia Lai, Lingna Ma, and Qiang Xu.
\newblock Scinet: Time series modeling and forecasting with sample convolution and interaction.
\newblock {\em NeurIPS}, 35:5816--5828, 2022.

\bibitem[\protect\citeauthoryear{Liu \bgroup \em et al.\egroup }{2022b}]{pyraformer}
Shizhan Liu, Hang Yu, Cong Liao, Jianguo Li, Weiyao Lin, Alex~X Liu, and Schahram Dustdar.
\newblock Pyraformer: Low-complexity pyramidal attention for long-range time series modeling and forecasting.
\newblock In {\em ICLR}, 2022.

\bibitem[\protect\citeauthoryear{Liu \bgroup \em et al.\egroup }{2022c}]{non-stationary}
Yong Liu, Haixu Wu, Jianmin Wang, and Mingsheng Long.
\newblock Non-stationary transformers: Exploring the stationarity in time series forecasting.
\newblock {\em NeurIPS}, 35:9881--9893, 2022.

\bibitem[\protect\citeauthoryear{Liu \bgroup \em et al.\egroup }{2024a}]{itransformer}
Yong Liu, Tengge Hu, Haoran Zhang, Haixu Wu, Shiyu Wang, Lintao Ma, and Mingsheng Long.
\newblock itransformer: Inverted transformers are effective for time series forecasting.
\newblock In {\em ICLR}, 2024.

\bibitem[\protect\citeauthoryear{Liu \bgroup \em et al.\egroup }{2024b}]{timer}
Yong Liu, Haoran Zhang, Chenyu Li, Xiangdong Huang, Jianmin Wang, and Mingsheng Long.
\newblock Timer: Transformers for time series analysis at scale.
\newblock In {\em ICML}, 2024.

\bibitem[\protect\citeauthoryear{Lu}{1989}]{dft}
RTMAC Lu.
\newblock {\em Algorithms for discrete Fourier transform and convolution}.
\newblock Springer, 1989.

\bibitem[\protect\citeauthoryear{Nie \bgroup \em et al.\egroup }{2023}]{patchtst}
Yuqi Nie, Nam~H. Nguyen, Phanwadee Sinthong, and Jayant Kalagnanam.
\newblock A time series is worth 64 words: Long-term forecasting with transformers.
\newblock {\em ICLR}, 2023.

\bibitem[\protect\citeauthoryear{Palani}{2022}]{signal_and_system}
Sankaran Palani.
\newblock {\em Signals and systems}.
\newblock Springer, 2022.

\bibitem[\protect\citeauthoryear{Paszke \bgroup \em et al.\egroup }{2019}]{pytorch}
Adam Paszke, Sam Gross, Francisco Massa, Adam Lerer, James Bradbury, Gregory Chanan, Trevor Killeen, Zeming Lin, Natalia Gimelshein, Luca Antiga, et~al.
\newblock Pytorch: An imperative style, high-performance deep learning library.
\newblock {\em Advances in neural information processing systems}, 32, 2019.

\bibitem[\protect\citeauthoryear{Piao \bgroup \em et al.\egroup }{2024}]{fredformer}
Xihao Piao, Zheng Chen, Taichi Murayama, Yasuko Matsubara, and Yasushi Sakurai.
\newblock Fredformer: Frequency debiased transformer for time series forecasting.
\newblock In {\em Proceedings of the 30th ACM SIGKDD Conference on Knowledge Discovery and Data Mining}, pages 2400--2410, 2024.

\bibitem[\protect\citeauthoryear{Salinas \bgroup \em et al.\egroup }{2020}]{rnn_2020}
David Salinas, Valentin Flunkert, Jan Gasthaus, and Tim Januschowski.
\newblock Deepar: Probabilistic forecasting with autoregressive recurrent networks.
\newblock {\em International journal of forecasting}, 36(3):1181--1191, 2020.

\bibitem[\protect\citeauthoryear{Surya~Duvvuri and Dhillon}{2024}]{laser}
Sai Surya~Duvvuri and Inderjit~S Dhillon.
\newblock Laser: Attention with exponential transformation.
\newblock {\em arXiv e-prints}, pages arXiv--2411, 2024.

\bibitem[\protect\citeauthoryear{Vaswani \bgroup \em et al.\egroup }{2017}]{transformer}
Ashish Vaswani, Noam Shazeer, Niki Parmar, Jakob Uszkoreit, Llion Jones, Aidan~N Gomez, {\L}ukasz Kaiser, and Illia Polosukhin.
\newblock Attention is all you need.
\newblock {\em NIPS}, 30, 2017.

\bibitem[\protect\citeauthoryear{Wang \bgroup \em et al.\egroup }{2024a}]{fredf}
Hao Wang, Licheng Pan, Zhichao Chen, Degui Yang, Sen Zhang, Yifei Yang, Xinggao Liu, Haoxuan Li, and Dacheng Tao.
\newblock Fredf: Learning to forecast in frequency domain.
\newblock {\em arXiv preprint arXiv:2402.02399}, 2024.

\bibitem[\protect\citeauthoryear{Wang \bgroup \em et al.\egroup }{2024b}]{timemixer}
Shiyu Wang, Haixu Wu, Xiaoming Shi, Tengge Hu, Huakun Luo, Lintao Ma, James~Y. Zhang, and Jun Zhou.
\newblock Timemixer: Decomposable multiscale mixing for time series forecasting.
\newblock In {\em ICLR}, 2024.

\bibitem[\protect\citeauthoryear{Wang \bgroup \em et al.\egroup }{2024c}]{card}
Xue Wang, Tian Zhou, Qingsong Wen, Jinyang Gao, Bolin Ding, and Rong Jin.
\newblock Card: Channel aligned robust blend transformer for time series forecasting.
\newblock In {\em ICLR}, 2024.

\bibitem[\protect\citeauthoryear{Woo \bgroup \em et al.\egroup }{2024}]{Moirai}
Gerald Woo, Chenghao Liu, Akshat Kumar, Caiming Xiong, Silvio Savarese, and Doyen Sahoo.
\newblock Unified training of universal time series forecasting transformers.
\newblock In {\em ICML}, 2024.

\bibitem[\protect\citeauthoryear{Wu \bgroup \em et al.\egroup }{2021}]{autoformer}
Haixu Wu, Jiehui Xu, Jianmin Wang, and Mingsheng Long.
\newblock Autoformer: Decomposition transformers with auto-correlation for long-term series forecasting.
\newblock {\em NeurIPS}, 34:22419--22430, 2021.

\bibitem[\protect\citeauthoryear{Wu \bgroup \em et al.\egroup }{2022}]{flowformer}
Haixu Wu, Jialong Wu, Jiehui Xu, Jianmin Wang, and Mingsheng Long.
\newblock Flowformer: Linearizing transformers with conservation flows.
\newblock {\em arXiv preprint arXiv:2202.06258}, 2022.

\bibitem[\protect\citeauthoryear{Wu \bgroup \em et al.\egroup }{2023a}]{timesnet}
Haixu Wu, Tengge Hu, Yong Liu, Hang Zhou, Jianmin Wang, and Mingsheng Long.
\newblock Timesnet: Temporal 2d-variation modeling for general time series analysis.
\newblock In {\em ICLR}, 2023.

\bibitem[\protect\citeauthoryear{Wu \bgroup \em et al.\egroup }{2023b}]{nature_weather}
Haixu Wu, Hang Zhou, Mingsheng Long, and Jianmin Wang.
\newblock Interpretable weather forecasting for worldwide stations with a unified deep model.
\newblock {\em Nature Machine Intelligence}, 5(6):602--611, 2023.

\bibitem[\protect\citeauthoryear{Xiong \bgroup \em et al.\egroup }{2021}]{nystromformer}
Yunyang Xiong, Zhanpeng Zeng, Rudrasis Chakraborty, Mingxing Tan, Glenn Fung, Yin Li, and Vikas Singh.
\newblock Nystr{\"o}mformer: A nystr{\"o}m-based algorithm for approximating self-attention.
\newblock In {\em Proceedings of the AAAI Conference on Artificial Intelligence}, volume~35, pages 14138--14148, 2021.

\bibitem[\protect\citeauthoryear{Xu \bgroup \em et al.\egroup }{2023}]{fits}
Zhijian Xu, Ailing Zeng, and Qiang Xu.
\newblock Fits: Modeling time series with $10 k $ parameters.
\newblock {\em arXiv preprint arXiv:2307.03756}, 2023.

\bibitem[\protect\citeauthoryear{Ye \bgroup \em et al.\egroup }{2024}]{fan_norm}
Weiwei Ye, Songgaojun Deng, Qiaosha Zou, and Ning Gui.
\newblock Frequency adaptive normalization for non-stationary time series forecasting.
\newblock {\em arXiv preprint arXiv:2409.20371}, 2024.

\bibitem[\protect\citeauthoryear{Yi \bgroup \em et al.\egroup }{2023}]{fre_survey}
Kun Yi, Qi~Zhang, Longbing Cao, Shoujin Wang, Guodong Long, Liang Hu, Hui He, Zhendong Niu, Wei Fan, and Hui Xiong.
\newblock A survey on deep learning based time series analysis with frequency transformation.
\newblock {\em arXiv preprint arXiv:2302.02173}, 2023.

\bibitem[\protect\citeauthoryear{Yi \bgroup \em et al.\egroup }{2024a}]{filternet}
Kun Yi, Jingru Fei, Qi~Zhang, Hui He, Shufeng Hao, Defu Lian, and Wei Fan.
\newblock Filternet: Harnessing frequency filters for time series forecasting.
\newblock {\em arXiv preprint arXiv:2411.01623}, 2024.

\bibitem[\protect\citeauthoryear{Yi \bgroup \em et al.\egroup }{2024b}]{fouriergnn}
Kun Yi, Qi~Zhang, Wei Fan, Hui He, Liang Hu, Pengyang Wang, Ning An, Longbing Cao, and Zhendong Niu.
\newblock Fouriergnn: Rethinking multivariate time series forecasting from a pure graph perspective.
\newblock {\em Advances in Neural Information Processing Systems}, 36, 2024.

\bibitem[\protect\citeauthoryear{Yi \bgroup \em et al.\egroup }{2024c}]{frets}
Kun Yi, Qi~Zhang, Wei Fan, Shoujin Wang, Pengyang Wang, Hui He, Ning An, Defu Lian, Longbing Cao, and Zhendong Niu.
\newblock Frequency-domain mlps are more effective learners in time series forecasting.
\newblock {\em Advances in Neural Information Processing Systems}, 36, 2024.

\bibitem[\protect\citeauthoryear{Yu \bgroup \em et al.\egroup }{2024}]{Leddam_icml}
Guoqi Yu, Jing Zou, Xiaowei Hu, Angelica~I Aviles-Rivero, Jing Qin, and Shujun Wang.
\newblock Revitalizing multivariate time series forecasting: Learnable decomposition with inter-series dependencies and intra-series variations modeling.
\newblock In {\em ICML}, 2024.

\bibitem[\protect\citeauthoryear{Yue \bgroup \em et al.\egroup }{2024}]{linear_softmax}
Wenzhen Yue, Xianghua Ying, Ruohao Guo, Dongdong Chen, Yuqing Zhu, Ji~Shi, Bowei Xing, and Taiyan Chen.
\newblock Sub-adjacent transformer: Improving time series anomaly detection with reconstruction error from sub-adjacent neighborhoods.
\newblock In {\em IJCAI}, 2024.

\bibitem[\protect\citeauthoryear{Zeng \bgroup \em et al.\egroup }{2023}]{linear}
Ailing Zeng, Muxi Chen, Lei Zhang, and Qiang Xu.
\newblock Are transformers effective for time series forecasting?
\newblock In {\em AAAI}, volume~37, pages 11121--11128, 2023.

\bibitem[\protect\citeauthoryear{Zhang and Yan}{2023}]{crossformer}
Yunhao Zhang and Junchi Yan.
\newblock Crossformer: Transformer utilizing cross-dimension dependency for multivariate time series forecasting.
\newblock In {\em ICLR}, 2023.

\bibitem[\protect\citeauthoryear{Zhou \bgroup \em et al.\egroup }{2021}]{informer2021}
Haoyi Zhou, Shanghang Zhang, Jieqi Peng, Shuai Zhang, Jianxin Li, Hui Xiong, and Wancai Zhang.
\newblock Informer: Beyond efficient transformer for long sequence time-series forecasting.
\newblock In {\em AAAI}, volume~35, pages 11106--11115, 2021.

\bibitem[\protect\citeauthoryear{Zhou \bgroup \em et al.\egroup }{2022a}]{film}
Tian Zhou, Ziqing Ma, Qingsong Wen, Liang Sun, Tao Yao, Wotao Yin, Rong Jin, et~al.
\newblock Film: Frequency improved legendre memory model for long-term time series forecasting.
\newblock {\em Advances in neural information processing systems}, 35:12677--12690, 2022.

\bibitem[\protect\citeauthoryear{Zhou \bgroup \em et al.\egroup }{2022b}]{fedformer}
Tian Zhou, Ziqing Ma, Qingsong Wen, Xue Wang, Liang Sun, and Rong Jin.
\newblock Fedformer: Frequency enhanced decomposed transformer for long-term series forecasting.
\newblock In {\em ICML}, pages 27268--27286. PMLR, 2022.

\bibitem[\protect\citeauthoryear{Zhou \bgroup \em et al.\egroup }{2023}]{zhou2023one}
Tian Zhou, Peisong Niu, Xue Wang, Liang Sun, and Rong Jin.
\newblock One fits all: Power general time series analysis by pretrained lm.
\newblock {\em NeurIPS}, 36:43322--43355, 2023.

\end{thebibliography}

\clearpage

\appendix

\section{Theorem 1 and Its Proof}

\begin{theorem}[Frequency-domain linear projection and time-domain convolutions]
Given the time series $\mathbf{x}\in \mathbb{R}^N$ and its corresponding frequency spectrum $\mathcal{F}\in \mathbb{C}^N $. Let $\mathbf{W} \in \mathbb{C}^{N \times N}$ denote a weight matrix and $\mathbf{b} \in \mathbb{C}^N$ a bias vector. Under these definitions, the following DFT pair holds:

\begin{equation}
\begin{aligned}
\tilde{\mathcal{F}}=\mathbf{W}\mathcal{F}+\mathbf{b} \leftrightarrow \sum_{i=0}^{N-1}
{\Omega _i \circledast \mathcal{M}  _i(\mathbf{x})} + \mathrm{IDFT} (\mathbf{b}),
\end{aligned}
\label{eq1_appd}
\end{equation}

\noindent where

\begin{equation}
\begin{aligned}
&w_i=\left [ 
\mathrm{diag} (\mathbf{W},i) , \mathrm{diag} (\mathbf{W},i-N) \right ] \in \mathbb{C}^N, \\
&\Omega _i = \mathrm{IDFT}\left( w_i\right)\in \mathbb{C}^N, \\
&\mathcal{M}  _i(\mathbf{x})=\mathbf{x} \odot \left [ \mathrm{e}^{-j\frac{2\pi }{N}ik }  \right ]_{k=0,1,\cdots N-1} \in \mathbb{C}^N. 
\end{aligned}
\label{eq1_2}
\end{equation}

\noindent Here, $j$ is the imaginary unit, $\leftrightarrow$ denotes the DFT pair relationship, $\circledast$ represents the circular convolution, $\odot$ indicates the Hadamard (element-wise) product, and $\left [ \cdot ,\cdot \right ] $ represents the concatenation of two vectors. $\mathrm{diag}(\mathbf{W},i) \in \mathbb{C}^{N-\left | i \right | } $ extracts the $i$-th diagonal of $\mathbf{W}$, and $\mathrm{diag} (\mathbf{W},N)$ is defined as $\emptyset$. If $i=0$, $\mathrm{diag}(\mathbf{W},i)$ corresponds to the main diagonal; if $i>0$ (or $i<0$), it corresponds to a diagonal above (or below) the main diagonal. $\mathcal{M}  _i(\mathbf{x})$ represents the $i$-th modulated version of $\mathbf{x}$, with $\mathcal{M}  _0(\mathbf{x})$ being $\mathbf{x}$ itself.

\end{theorem}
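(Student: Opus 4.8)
The plan is to verify the DFT pair directly by computing the inverse DFT of the right-hand side of Equation~\eqref{eq1_appd} and showing it equals $\mathbf{W}\mathcal{F} + \mathbf{b}$, or equivalently (since the DFT is linear and invertible) by taking the DFT of both sides. I would start from the key algebraic identity that generates all of this: writing $(\mathbf{W}\mathcal{F})[k] = \sum_{m=0}^{N-1} \mathbf{W}[k,m]\,\mathcal{F}[m]$ and reindexing by the diagonal offset $i = k - m$ (taken modulo $N$). This reindexing is the natural move because it groups the entries of $\mathbf{W}$ along diagonals, which is exactly what the vectors $w_i$ in Equation~\eqref{eq1_2} collect. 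The wrap-around at the boundary $m > k$ is precisely why $w_i$ is defined as the concatenation of $\mathrm{diag}(\mathbf{W},i)$ and $\mathrm{diag}(\mathbf{W},i-N)$: the "upper" part of diagonal $i$ and the "lower" part of diagonal $i-N$ together tile a full length-$N$ vector indexed by $m$.

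Next I would substitute $\mathcal{F}[m] = \sum_{n=0}^{N-1} e^{-j\frac{2\pi}{N}nm}\mathbf{x}[n]$ from Equation~\eqref{eq1} and interchange the order of summation. The exponential factor separates: the dependence on the diagonal index becomes $e^{-j\frac{2\pi}{N}n i}$ (which is exactly the modulation defining $\mathcal{M}_i(\mathbf{x})$, since $k - m \equiv i$), while the remaining factor $e^{-j\frac{2\pi}{N} k \cdot (\text{something})}$ will, after the reindexing, turn the $\sum_m \mathbf{W}[k,m](\cdots)$ sum into a DFT of a circular convolution. Concretely, I expect to recognize that for each fixed $i$, the contribution is the DFT of $\Omega_i \circledast \mathcal{M}_i(\mathbf{x})$, using the circular convolution theorem $\mathrm{DFT}(\Omega_i \circledast \mathbf{u}) = \mathrm{DFT}(\Omega_i)\odot\mathrm{DFT}(\mathbf{u}) = w_i \odot \mathrm{DFT}(\mathbf{u})$ together with $\Omega_i = \mathrm{IDFT}(w_i)$. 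Summing over $i$ and adding the trivial pair $\mathbf{b} \leftrightarrow \mathrm{IDFT}(\mathbf{b})$ then closes the argument.

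The main obstacle I anticipate is bookkeeping the index arithmetic modulo $N$ carefully — in particular, justifying that as $m$ ranges over $0,\dots,N-1$ for fixed $k$, the pair $(i \bmod N, m)$ correctly enumerates each matrix entry $\mathbf{W}[k,m]$ exactly once, with the split between $\mathrm{diag}(\mathbf{W},i)$ (entries with $m \le k$, i.e. $i \ge 0$) and $\mathrm{diag}(\mathbf{W},i-N)$ (entries with $m > k$, i.e. $i - N < 0$) matching the concatenation structure of $w_i$. I would handle this by fixing $k$, splitting the sum over $m$ at $m = k$, and checking the two ranges separately, then confirming the modulation exponent $e^{-j\frac{2\pi}{N}ni}$ is the same in both ranges because $e^{-j\frac{2\pi}{N}n(i-N)} = e^{-j\frac{2\pi}{N}ni}$. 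Everything else is a routine application of the linearity of the DFT and the circular convolution theorem, and the bias term $\mathbf{b}$ contributes its inverse DFT with no interaction. Since this mirrors and generalizes Theorem~2 of FreTS~\cite{frets}, I would also note where the generalization enters: FreTS treats the diagonal ($i=0$) case, and the extension to all $i$ is what produces the modulated copies $\mathcal{M}_i(\mathbf{x})$ in the sum.
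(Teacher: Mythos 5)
Your proposal follows essentially the same route as the paper's proof: group the entries of $\mathbf{W}$ by wrapped diagonals so that $\mathbf{W}\mathcal{F}=\sum_{i=0}^{N-1} w_i \odot \mathrm{Roll}(\mathcal{F},-i)$, apply the frequency-shift/modulation duality and the convolution theorem to each term, and finish by linearity for the bias. The paper packages the diagonal grouping as a decomposition of $\mathbf{W}$ into matrices $\mathrm{DiagMat}(\mathbf{W},(i,i-N))$ and states the two dualities as standing lemmas, whereas you do the same thing element-wise by reindexing $\sum_m \mathbf{W}[k,m]\mathcal{F}[m]$; these are the same argument, not different approaches. One concrete correction to the bookkeeping you flagged: with the paper's convention that $\mathrm{diag}(\mathbf{W},i)$ for $i>0$ lies \emph{above} the main diagonal, the wrapped offset is $i\equiv m-k \pmod{N}$, so that $w_i[k]=\mathbf{W}[k,(k+i)\bmod N]$ and $\mathrm{diag}(\mathbf{W},i)$ collects the entries with $m\ge k$ while $\mathrm{diag}(\mathbf{W},i-N)$ collects those with $m<k$ --- not $i=k-m$ with $m\le k$ as you wrote. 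Carried through, your sign would produce a modulation $e^{+j\frac{2\pi}{N}ik}$ and fail to match $\mathcal{M}_i(\mathbf{x})$ as defined; with $i=m-k$ one has $\mathrm{Roll}(\mathcal{F},-i)[k]=\mathcal{F}[(k+i)\bmod N]\leftrightarrow \mathbf{x}\odot[e^{-j\frac{2\pi}{N}ni}]_n$ and everything closes as in the paper.
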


\begin{proof}

To prove Theorem 1, we first introduce two supporting lemmas. 

1) A circular shift in the frequency domain corresponds to a multiplication by a complex exponential in the time domain. This can be expressed as follows: 

\begin{equation}
\mathrm{Roll} (\mathcal{F},-k ) \leftrightarrow \mathbf{x} \odot \left [ \mathrm{e}^{-j\frac{2\pi }{N}nk }  \right ]_{n=0,1,\cdots N-1},
\label{eq2_appd}
\end{equation}

\noindent where $\mathrm{Roll} (\mathcal{F},-k)$ denotes a circular shift of $\mathcal{F}$ to the left by $k$ elements, e.g., $\mathrm{Roll} (\mathcal{F},-1) = [f_2, \cdots, f_N, f_1]^T$. 

2) Multiplication in the frequency domain corresponds to convolution in the time domain. If $\mathcal{F} _1 \leftrightarrow \mathbf{x} _1$ and $\mathcal{F} _2 \leftrightarrow \mathbf{x} _2$, we have 

\begin{equation}
\mathcal{F} _1\odot \mathcal{F} _2 \leftrightarrow \mathbf{x} _1 \circledast \mathbf{x} _2.
\label{eq3_appd}
\end{equation}

The proof of Equations \eqref{eq2_appd} and \eqref{eq3_appd} can be found in the literature \cite{signal_and_system,dft}.

Now we analyze the matrix multiplication $\mathbf{W}\mathcal{F}$.  Let $\mathrm{DiagMat}(\mathbf{W}, (i, i-N))$ denote the matrix that retains only the specified two diagonals of $\mathbf{W}$ while setting all other elements to zero. Using this notation, the matrix $\mathbf{W}$ can be expressed~as:

\begin{equation}
\mathbf{W}=\mathrm{DiagMat} (\mathbf{W},0)+\sum_{i=1}^{N-1} \mathrm{DiagMat} \left ( \mathbf{W},(i,i-N) \right ). 
\label{eq4_0}
\end{equation}

Considering the definition of $w_i$ in Equation \eqref{eq1_2}, we~have

\begin{equation}
\begin{aligned}
&\mathrm{DiagMat} (\mathbf{W},0)\mathcal{F}  = \mathrm{diag} (\mathbf{W},0)\odot \mathcal{F}=w_0\odot \mathrm{Roll} (\mathcal{F},0),\\
&\mathrm{DiagMat} (\mathbf{W},(i,i-N))\mathcal{F}  = w_i\odot \mathrm{Roll} (\mathcal{F},-i ).
\end{aligned}
\label{eq4_1}
\end{equation}

By combining these results, $\tilde{\mathcal{F}} = \mathbf{W}\mathcal{F}$ can be reformulated as the sum of a series of element-wise vector products:

\begin{equation}
\mathbf{W}\mathcal{F}=\sum_{i=0}^{N-1} w_i\odot \mathrm{Roll} (\mathcal{F},-i).
\label{eq4_appd}
\end{equation}

\noindent  Given that $\Omega_i$ is defined as the IDFT of $w_i$, we derive from Equations \eqref{eq2} and \eqref{eq3} that:

\begin{equation}
\mathbf{W}\mathcal{F}\leftrightarrow \sum_{i=0}^{N-1}
{\Omega _i \circledast \mathcal{M}  _i(\mathbf{x})}.
\label{eq4_2}
\end{equation}

By further applying the linearity property of DFT, we obtain Equation \eqref{eq1_appd}, which concludes the proof.

\end{proof}

\textbf{Theorem 1} demonstrates that a linear transformation in the frequency domain is equivalent to a series of convolution operations applied to the time series and its modulated versions.

\section{Theorem 2 and Its Proof}

\begin{theorem}[Rank and condition number of matrix sums]

Let $\mathbf{A}$ and $\mathbf{B}$ be two matrices of the same size $N\times N$. Let $\mathrm{rank}({\cdot})$ and $\kappa({\cdot})$ represent the rank and condition number of a matrix, respectively. The condition number $\kappa({\cdot})$ is defined as $\kappa({\cdot}) = \frac{\sigma_1(\cdot)}{\sigma_N(\cdot)}$, where $\sigma_1(\cdot)$ and $\sigma_N(\cdot)$ are the largest and smallest singular values of the matrix, respectively. The following bounds hold for $\mathrm{rank}(\mathbf{A}+\mathbf{B})$ and $\kappa(\mathbf{A}+\mathbf{B})$:

\begin{equation}
\left | \mathrm{rank}(\mathbf{A})-\mathrm{rank}(\mathbf{B}) \right |  
\le \mathrm{rank}(\mathbf{A}+\mathbf{B})  
\le 
\mathrm{rank}(\mathbf{A})+\mathrm{rank}(\mathbf{B})
\label{eq5_appd}
\end{equation}

\noindent and 

\begin{equation}
\begin{aligned}
\frac{\max_{i=1,\cdots N} \left | \sigma_i(\mathbf{A})-\sigma_i(\mathbf{B}) \right |}{
  \min_{i+j=N+1}\left \{ \sigma_i(\mathbf{A})+\sigma_j(\mathbf{B}) \right \}
} \le \kappa (\mathbf{A}+\mathbf{B}) \\
\le \frac{\sigma_1(\mathbf{A})+\sigma_1(\mathbf{B})}{
\mathrm{max}\left \{ \sigma_N(\mathbf{A})-\sigma_1(\mathbf{B}),\sigma_N(\mathbf{B})-\sigma_1(\mathbf{A}),0 \right \}  
}. \\
\end{aligned}
\label{eq6_appd}
\end{equation}

\end{theorem}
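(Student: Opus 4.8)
The plan is to treat the two displayed inequalities of Theorem~2 separately. The rank bounds in \eqref{eq5_appd} come from an elementary column-space argument: every column of $\mathbf{A}+\mathbf{B}$ lies in $\mathrm{col}(\mathbf{A}) + \mathrm{col}(\mathbf{B})$, so $\mathrm{rank}(\mathbf{A}+\mathbf{B}) = \dim\mathrm{col}(\mathbf{A}+\mathbf{B}) \le \dim\bigl(\mathrm{col}(\mathbf{A}) + \mathrm{col}(\mathbf{B})\bigr) \le \mathrm{rank}(\mathbf{A}) + \mathrm{rank}(\mathbf{B})$, which is the upper bound. For the lower bound I would apply this subadditivity to the decomposition $\mathbf{A} = (\mathbf{A}+\mathbf{B}) + (-\mathbf{B})$, using $\mathrm{rank}(-\mathbf{B}) = \mathrm{rank}(\mathbf{B})$, to obtain $\mathrm{rank}(\mathbf{A}) - \mathrm{rank}(\mathbf{B}) \le \mathrm{rank}(\mathbf{A}+\mathbf{B})$; swapping the roles of $\mathbf{A}$ and $\mathbf{B}$ gives the reverse sign, and the two combine into the stated absolute-value bound.

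The condition-number bounds in \eqref{eq6_appd} rest on Weyl's inequality for singular values: for $N\times N$ matrices $\mathbf{S},\mathbf{T}$ and indices with $i+j-1 \le N$ one has $\sigma_{i+j-1}(\mathbf{S}+\mathbf{T}) \le \sigma_i(\mathbf{S}) + \sigma_j(\mathbf{T})$, where singular values are listed in decreasing order and $\sigma_j(-\mathbf{M}) = \sigma_j(\mathbf{M})$. For the upper bound on $\kappa(\mathbf{A}+\mathbf{B}) = \sigma_1(\mathbf{A}+\mathbf{B})/\sigma_N(\mathbf{A}+\mathbf{B})$, take $i=j=1$ to get $\sigma_1(\mathbf{A}+\mathbf{B}) \le \sigma_1(\mathbf{A}) + \sigma_1(\mathbf{B})$, and apply Weyl to $\mathbf{A} = (\mathbf{A}+\mathbf{B}) + (-\mathbf{B})$ with $(i,j)=(N,1)$ to get $\sigma_N(\mathbf{A}+\mathbf{B}) \ge \sigma_N(\mathbf{A}) - \sigma_1(\mathbf{B})$; symmetrically $\sigma_N(\mathbf{A}+\mathbf{B}) \ge \sigma_N(\mathbf{B}) - \sigma_1(\mathbf{A})$, and $\sigma_N(\mathbf{A}+\mathbf{B}) \ge 0$ trivially, so dividing the numerator estimate by the largest of these three lower bounds yields the claimed upper bound (with the convention that a vanishing denominator makes the bound $+\infty$, i.e. vacuous). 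For the lower bound, applying Weyl to $\mathbf{A}+\mathbf{B}$ over all index pairs with $i+j=N+1$ gives $\sigma_N(\mathbf{A}+\mathbf{B}) \le \min_{i+j=N+1}\{\sigma_i(\mathbf{A}) + \sigma_j(\mathbf{B})\}$; and applying it to $\mathbf{A} = (\mathbf{A}+\mathbf{B}) + (-\mathbf{B})$ with $(i,j)=(1,q)$ gives $\sigma_q(\mathbf{A}) - \sigma_q(\mathbf{B}) \le \sigma_1(\mathbf{A}+\mathbf{B})$ for every $q$, and symmetrically, so $\sigma_1(\mathbf{A}+\mathbf{B}) \ge \max_q |\sigma_q(\mathbf{A}) - \sigma_q(\mathbf{B})|$. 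Dividing this numerator lower bound by the denominator upper bound produces \eqref{eq6_appd}.

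The linear algebra is genuinely routine; the only place requiring care is the singular-value bookkeeping — keeping every application of Weyl within the admissible range $i+j-1 \le N$, using $\sigma_j(-\mathbf{M}) = \sigma_j(\mathbf{M})$ wherever a negated matrix appears, and interpreting the degenerate case in which a denominator is zero (e.g. $\mathbf{A}+\mathbf{B}$ singular, or $\mathbf{A}$ and $\mathbf{B}$ sharing an extreme singular value) as the inequality holding vacuously. If a self-contained account is preferred over citing Weyl, all of these estimates can be derived directly from the Courant--Fischer characterization $\sigma_k(\mathbf{M}) = \min_{\dim V = N-k+1}\max_{x\in V,\ \|x\|=1}\|\mathbf{M}x\|$, which I would state at the outset of the condition-number part.
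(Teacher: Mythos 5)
Your proposal is correct, and on two of the three main estimates it coincides with the paper's route: the rank upper bound is the same column-space subadditivity argument (your use of the subspace sum $\mathrm{Col}(\mathbf{A})+\mathrm{Col}(\mathbf{B})$ is in fact cleaner than the paper's $\mathrm{Col}(\mathbf{A})\cup\mathrm{Col}(\mathbf{B})$, which is not a subspace), and the condition-number bounds are obtained exactly as in the paper, by the four applications of Weyl's inequality $\sigma_{i+j-1}(\mathbf{S}+\mathbf{T})\le\sigma_i(\mathbf{S})+\sigma_j(\mathbf{T})$ you list — including the trick of applying it to $\mathbf{A}=(\mathbf{A}+\mathbf{B})+(-\mathbf{B})$ with $\sigma_j(-\mathbf{B})=\sigma_j(\mathbf{B})$ to get the reverse estimates, and the observation that $\sigma_N(\mathbf{A}+\mathbf{B})\ge 0$ supplies the third term in the denominator of the upper bound. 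Where you genuinely diverge is the rank lower bound. The paper argues directly with dimensions, claiming that any $\mathbf{x}\in\mathrm{Col}(\mathbf{A})\cap\mathrm{Null}(\mathbf{B})$ satisfies $(\mathbf{A}+\mathbf{B})\mathbf{x}=\mathbf{A}\mathbf{x}\ne\mathbf{0}$ and hence lies in $\mathrm{Col}(\mathbf{A}+\mathbf{B})$; this step is flawed twice over ($\mathbf{x}\in\mathrm{Col}(\mathbf{A})$ does not force $\mathbf{A}\mathbf{x}\ne\mathbf{0}$ — the relevant complement of the null space is the row space, not the column space — and $(\mathbf{A}+\mathbf{B})\mathbf{x}\ne\mathbf{0}$ places $\mathbf{x}$ outside $\mathrm{Null}(\mathbf{A}+\mathbf{B})$, not inside $\mathrm{Col}(\mathbf{A}+\mathbf{B})$), although the intended dimension count can be repaired. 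Your derivation — apply the already-proved subadditivity to the decomposition $\mathbf{A}=(\mathbf{A}+\mathbf{B})+(-\mathbf{B})$ and symmetrize — avoids the issue entirely, reuses the upper bound rather than introducing a second mechanism, and mirrors the way you later handle the singular-value estimates, so it is both shorter and more robust. Your explicit treatment of the degenerate case where the denominator of the $\kappa$ upper bound vanishes is also a point the paper leaves implicit.
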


\begin{proof}
 We first prove the upper and lower bounds for $\mathrm{rank}(\mathbf{A}+\mathbf{B})$. 

\textbf{Upper bound of $\mathrm{rank}(\mathbf{A}+\mathbf{B})$}. Let $\mathrm{Col} (\mathbf{A} )$ and $\mathrm{Col} (\mathbf{B} )$ denote the column spaces of $\mathbf{A}$ and $\mathbf{B}$, respectively. Naturally, the column space of $\mathbf{A}+\mathbf{B}$ satisfies

\begin{equation}
\mathrm{Col} (\mathbf{A}+\mathbf{B} ) \subseteq \mathrm{Col} (\mathbf{A}) \cup \mathrm{Col} (\mathbf{B} )
\label{eq7_appd}
\end{equation}

Therefore, we have 

\begin{equation}
\begin{aligned}
\mathrm{rank}(\mathbf{A}+\mathbf{B}) & = \mathrm{dim} (\mathrm{Col} (\mathbf{A}+\mathbf{B} )) \\
&\le \mathrm{dim} (\mathrm{Col} (\mathbf{A}) \cup \mathrm{Col} (\mathbf{B} )) \\
&\le \mathrm{dim} (\mathrm{Col} (\mathbf{A})) + \mathrm{dim} (\mathrm{Col} (\mathbf{B}))\\
&=\mathrm{rank}(\mathbf{A})+\mathrm{rank}(\mathbf{B}).
\end{aligned}
\label{eq8_appd}
\end{equation}

\textbf{Lower bound of $\mathrm{rank}(\mathbf{A}+\mathbf{B})$}. Let  $\mathrm{Null} (\mathbf{A} )$ and $\mathrm{Null} (\mathbf{B} )$ denote the null spaces of $\mathbf{A}$ and $\mathbf{B}$, respectively. For any vector $\mathbf{x} \in \mathrm{Col} (\mathbf{A} )\cap \mathrm{Null} (\mathbf{B} )$, we have 

\begin{equation}
(\mathbf{A} + \mathbf{B})\mathbf{x} =\mathbf{A}\mathbf{x}+\mathbf{B}\mathbf{x}=\mathbf{A}\mathbf{x}\ne \mathbf{0},
\label{eq9_appd}
\end{equation}

\noindent i.e., $\mathbf{x} \in \mathrm{Col} (\mathbf{A}+\mathbf{B} )$. Therefore, it holds that

\begin{equation}
\mathrm{Col} (\mathbf{A} )\cap \mathrm{Null} (\mathbf{B} )\subseteq \mathrm{Col} (\mathbf{A}+\mathbf{B} ), 
\label{eq10_appd}
\end{equation}

\noindent which implies that

\begin{equation}
\begin{aligned}
\mathrm{rank}(\mathbf{A}+\mathbf{B}) & = \mathrm{dim} (\mathrm{Col} (\mathbf{A}+\mathbf{B} )) \\
&\ge \mathrm{dim} (\mathrm{Col} (\mathbf{A}) \cap \mathrm{Null} (\mathbf{B} )) \\
&\ge \mathrm{dim} (\mathrm{Col} (\mathbf{A}))+\mathrm{dim} (\mathrm{Null} (\mathbf{B}))-N \\
&=\mathrm{rank}(\mathbf{A})-\mathrm{rank}(\mathbf{B}). 
\end{aligned}
\label{eq11_appd}
\end{equation}

Given the equivalence of $\mathbf{A}$ and $\mathbf{B}$, we also obtain

\begin{equation}
\mathrm{rank}(\mathbf{A}+\mathbf{B}) \ge \mathrm{rank}(\mathbf{B})-\mathrm{rank}(\mathbf{A}). 
\label{eq12_appd}
\end{equation}

Combining these results, we have 

\begin{equation}
\begin{aligned}
\mathrm{rank}(\mathbf{A}+\mathbf{B}) &\ge \mathrm{max}  \left \{ \mathrm{rank}(\mathbf{B})-\mathrm{rank}(\mathbf{A}), 
\mathrm{rank}(\mathbf{A})-\mathrm{rank}(\mathbf{B}) \right \} \\
&=\left | \mathrm{rank}(\mathbf{A})-\mathrm{rank}(\mathbf{B}) \right |.
\end{aligned}
\label{eq13_appd}
\end{equation}

Thus, we complete the proof of Equation \eqref{eq5_appd}.

To analyze the upper and lower bounds of $\kappa(\mathbf{A+B})$, we first recall Weyl's inequality \cite{matrix}, which provides an upper bound on the singular values of the sum of two matrices. Specifically, for the singular values of $\mathbf{A}$, $\mathbf{B}$, and $\mathbf{A+B}$ arranged in decreasing order, the inequality states~that:

\begin{equation}
\sigma _{i+j-1}(\mathbf{A} +\mathbf{B} ) \le \sigma _{i}(\mathbf{A} ) + \sigma _{j}(\mathbf{B} ),
\label{eq14_appd}
\end{equation}

\noindent for $1\le i,j\le N$ and $i+j\le N+1$, where $\sigma _1(\cdot )\ge \sigma _2(\cdot )\ge \cdots  \ge \sigma _N(\cdot )$  denotes the singular values in non-increasing order. 
From Equation \eqref{eq14_appd} (Weyl's inequality), we can deduce:

\begin{equation}
\sigma _{i}(\mathbf{A} ) \le \sigma _{i}(\mathbf{A} + \mathbf{B}) + \sigma _{j}(\mathbf{-B} ),
\label{eq15_appd}
\end{equation}
which  implies that:

\begin{equation}
\begin{aligned}
\sigma _{i}(\mathbf{A}  +\mathbf{B} ) \ge & \sigma _{i+j-1}(\mathbf{A}) - \sigma _{j}(\mathbf{-B} )\\
=&\sigma _{i+j-1}(\mathbf{A}) - \sigma _{j}(\mathbf{B} ).
\end{aligned}
\label{eq16_appd}
\end{equation}
Here, we use the fact that $\mathbf{B}$ and $\mathbf{-B}$ share identical singular values. Using Equations \eqref{eq14_appd} and \eqref{eq16_appd}, we have

\begin{equation}
\begin{aligned}
\max_{i  = 1,\cdots N} \left | \sigma_i(\mathbf{A})-\sigma_i(\mathbf{B}) \right | \le & \sigma_1(\mathbf{A}+\mathbf{B}) \\
\le & \sigma_1(\mathbf{A}) + \sigma_1(\mathbf{B}),
\end{aligned}
\label{eq17_appd}
\end{equation}
and
\begin{equation}
\begin{aligned}
&\mathrm{max}\left \{ \sigma_N(\mathbf{A})-\sigma_1(\mathbf{B}),\sigma_N(\mathbf{B})-\sigma_1(\mathbf{A}),0 \right \}   \\
&\le  \sigma_N(\mathbf{A}+\mathbf{B}) \\
&\le  \min_{i+j=N+1}\left \{ \sigma_i(\mathbf{A})+\sigma_j(\mathbf{B}) \right \} 
\end{aligned}
\label{eq18_appd}
\end{equation}

In Equation \eqref{eq18_appd}, we also use the property that $\sigma_N(\mathbf{A}+\mathbf{B})\ge 0$. By combining Equations \eqref{eq17_appd} and \eqref{eq18_appd}, we can obtain Equation \eqref{eq6_appd}, thus completing the proof.

\end{proof}

\textbf{Theorem 2} provides the lower and upper bounds for the rank and condition number of the sum of two matrices. From Equation \eqref{eq5_appd}, we observe that the rank of the summed matrix tends to approach that of the higher-rank matrix, particularly when one matrix is of low rank and the other is of high rank. From Equation \eqref{eq6_appd}, we observe that if one matrix is well-conditioned with a higher $\sigma_N$, while the other has a smaller $\sigma_1$, such that $\sigma_N(\mathbf{A}) \gg \sigma_1(\mathbf{B})$, then $\kappa(\mathbf{A}+\mathbf{B})$ is approximately upper-bounded by $\kappa(\mathbf{A})$. This indicates that the condition number of $\mathbf{A} + \mathbf{B}$ is dominated by the better-conditioned matrix~$\mathbf{A}$.

\section{Gradient Analysis of Enhanced Attention}

In this section, we analyze the gradient of the proposed enhanced attention, and compare it with the vanilla attention mechanism.

\subsection{Gradient (Jacobian) Matrix of Softmax}

In the vanilla self-attention mechanism, the softmax is applied row-wise on $\frac{\mathbf{Q}\mathbf{K}^T}{\sqrt{D}} \in \mathbb{R}^{N \times N}$. Without loss of generality, we consider a single row $\mathbf{a} \in \mathbb{R}^N$ in $\frac{\mathbf{Q}\mathbf{K}^T}{\sqrt{D}}$ to derive the gradient matrix. Let $\tilde{\mathbf{a}} = \mathrm{Softmax}(\mathbf{a})$. Then the $i$-th element of $\tilde{\mathbf{a}}$ can be written as

\begin{equation}
\tilde{\mathbf{a}} _i = \frac{e^{\mathbf{a}_i} }{\sum_{k=0}^{N-1}e^{\mathbf{a}_k} }. 
\label{eq19}
\end{equation}

\noindent From this, we compute the partial derivative of $\tilde{\mathbf{a}}_i$ with respect to $\mathbf{a}_i$:

\begin{equation}
\begin{aligned}
\frac{\partial \tilde{\mathbf{a}} _i}{\partial \mathbf{a} _i} & = \frac{e^{\mathbf{a}_i} }{\sum_{k  = 0}^{N-1}e^{\mathbf{a}_k} }-\frac{e^{\mathbf{a}_i}\cdot e^{\mathbf{a}_i}}{(\sum_{k  = 0}^{N-1}e^{\mathbf{a}_k} )^2}  \\
  & = \tilde{\mathbf{a}} _i - \tilde{\mathbf{a}} _i^2.
\end{aligned} 
\label{eq20}
\end{equation}

\noindent For $j \neq i$, the partial derivative is:

\begin{equation}
\begin{aligned}
\frac{\partial \tilde{\mathbf{a}} _i}{\partial \mathbf{a} _j} & = -\frac{e^{\mathbf{a}_i}\cdot e^{\mathbf{a}_j}}{(\sum_{k = 0}^{N-1}e^{\mathbf{a}_k} )^2}  \\
  & = -\tilde{\mathbf{a}} _i\tilde{\mathbf{a}} _j
\end{aligned} 
\label{eq21}
\end{equation}

\noindent Combining Equations \eqref{eq20} and \eqref{eq21}, the Jacobian matrix can be expressed as:

\begin{equation}
\frac{\partial \tilde{\mathbf{a}} }{\partial \mathbf{a} }  = \mathrm{Diag} (\tilde{\mathbf{a}}) - \tilde{\mathbf{a}}\tilde{\mathbf{a}}^T,
\label{eq22}
\end{equation}
where $\mathrm{Diag} (\tilde{\mathbf{a}})$ is a diagonal matrix with $\tilde{\mathbf{a}}$ as its diagonal, and all other elements set to zero.

Equation \eqref{eq22} reveals that the off-diagonal elements of the Jacobian matrix are the pairwise products $\tilde{\mathbf{a}}_i \tilde{\mathbf{a}}_j$. Since $\tilde{\mathbf{a}}$ often contains small probabilities \cite{laser}, these products become even smaller, resulting in a Jacobian matrix with near-zero entries. This behavior significantly diminishes the gradient flow during back-propagation, thereby hindering training efficiency and model optimization.

\subsection{Jacobian Matrix of Enhanced Attention} \label{C.2}

In this paper, we propose a simple yet effective attention mechanism, which can be expressed as:

\begin{equation}
\mathbf{c} =\mathrm{Norm} \left ( \mathrm{Softmax}(\mathbf{a}) + \mathbf{b}  \right ) ,
\label{eq23}
\end{equation}

\noindent where $\mathrm{Norm} (\cdot) = \frac{\cdot}{\left \| \cdot  \right \| _1}$ denotes the L1 normalization operator, and $\mathbf{b}_i > 0$ for $i = 0, \cdots, N-1$. Define $\tilde{\mathbf{b}} = \mathrm{Softmax}(\mathbf{a}) + \mathbf{b}$. Since $\mathrm{Softmax}(\mathbf{a})$ is non-negative and $\mathbf{b}_i > 0$, it follows that $\tilde{\mathbf{b}}_i > 0$ for all $i$. Consequently, the entries of $\mathbf{c}$ can be written as:

\begin{equation}
\mathbf{c}_i = \frac{\tilde{\mathbf{b}}_i}{\sum_{k=0}^{N-1}\tilde{\mathbf{b}}_k},
\label{eq24}
\end{equation}

\noindent Then, the partial derivative of $\mathbf{c}_i$ with respect to $\tilde{\mathbf{b}}_i$ is:

\begin{equation}
\begin{aligned}
\frac{\partial \mathbf{c}_i}{\partial \tilde{\mathbf{b}}_i} &= \frac{1}{\sum_{k=0}^{N-1}\tilde{\mathbf{b}}_k} -\frac{\tilde{\mathbf{b}}_i}{(\sum_{k=0}^{N-1}\tilde{\mathbf{b}}_k)^2} \\
 &=\frac{1}{\left \| \tilde{\mathbf{b}} \right \|_1^2 } \left ( \left \| \tilde{\mathbf{b}} \right \|_1- \tilde{\mathbf{b}}_i \right ) 
\end{aligned}
\label{eq25}
\end{equation}

\noindent For $j \neq i$, the partial derivative is:

\begin{equation}
\begin{aligned}
\frac{\partial \mathbf{c}_i}{\partial \tilde{\mathbf{b}}_j}   = 
 -\frac{\tilde{\mathbf{b}}_i}{(\sum_{k  = 0}^{N-1}\tilde{\mathbf{b}}_k)^2} 
 = -\frac{1}{\left \| \tilde{\mathbf{b}} \right \|_1^2 }  \tilde{\mathbf{b}}_i
\end{aligned}
\label{eq26}
\end{equation}

Combination of Equations \eqref{eq25} and \eqref{eq26} yields the following expression for $\frac{\partial \mathbf{c}}{\partial \tilde{\mathbf{b}}}$:

\begin{equation}
\begin{aligned}
\frac{\partial \mathbf{c}}{\partial \tilde{\mathbf{b}}} 
 = \frac{1}{\left \| \tilde{\mathbf{b}} \right \|_1^2 } 
\left ( \left \| \tilde{\mathbf{b}} \right \|_1\cdot \mathbf{I}- \tilde{\mathbf{b}} \mathbf{1}^T  \right ), 
\end{aligned}
\label{eq27}
\end{equation}

\noindent where $\mathbf{I}\in \mathbb{R}^{N\times N}$ is the unit matrix; $\mathbf{1}=\left [ 1,\cdots ,1 \right ]^T \in \mathbb{R}^N$. Let $\tilde{\mathbf{a}}$ denote $\mathrm{Softmax}(\mathbf{a})$. Using the chain rule, we obtain:

\begin{equation}
\begin{aligned}
\frac{\partial \mathbf{c}}{\partial \mathbf{a}} &=\frac{\partial \mathbf{c}}{\partial \tilde{\mathbf{b}} }\cdot \frac{\partial \tilde{\mathbf{b}}}{\partial \mathbf{a} } \\
 &= \frac{1}{\left \| \tilde{\mathbf{b}} \right \|_1^2 } 
\left ( \left \| \tilde{\mathbf{b}} \right \|_1\cdot \mathbf{I}- \tilde{\mathbf{b}}  \mathbf{1}^T  \right )
\left ( \mathrm{Diag} (\tilde{\mathbf{a}}) - \tilde{\mathbf{a}}\tilde{\mathbf{a}}^T  \right ) \\
&=\frac{1}{\left \| \tilde{\mathbf{b}} \right \|_1^2 } \left ( 
\left \| \tilde{\mathbf{b}} \right \|_1 \mathrm{Diag} (\tilde{\mathbf{a}})
-\left \| \tilde{\mathbf{b}} \right \|_1 \tilde{\mathbf{a}}\tilde{\mathbf{a}}^T
-\tilde{\mathbf{b}}\tilde{\mathbf{a}}^T+\tilde{\mathbf{b}}\tilde{\mathbf{a}}^T 
 \right ) \\
&=\frac{1}{\left \| \tilde{\mathbf{b}} \right \|_1 } 
\left ( \mathrm{Diag} (\tilde{\mathbf{a}}) - \tilde{\mathbf{a}}\tilde{\mathbf{a}}^T  \right ).
\end{aligned}
\label{eq28}
\end{equation}

\noindent Here, we utilize the facts that $\mathbf{1}^T\mathrm{Diag} (\tilde{\mathbf{a}})=\tilde{\mathbf{a}}^T$ and $\mathbf{1}^T\tilde{\mathbf{a}}=1$.

Based on Equation \eqref{eq27}, we naturally have

\begin{equation}
\begin{aligned}
\frac{\partial \mathbf{c}}{\partial \mathbf{b}}  =\frac{1}{\left \| \tilde{\mathbf{b}} \right \|_1^2 } 
\left ( \left \| \tilde{\mathbf{b}} \right \|_1 \cdot \mathbf{I} - \tilde{\mathbf{b}}\mathbf{1}^T   \right ) 
\end{aligned}
\label{eq_jacobian_c_b}
\end{equation}

Comparing Equations \eqref{eq22} and \eqref{eq28}, we observe that the Jacobian matrix of the enhanced attention shares the same structure with the vanilla counterpart, except for an extra learnable scaling item $1/\left \| \tilde{\mathbf{b}} \right \|_1 $. This additional flexibility improves the adaptability of the gradient back-propagation process, potentially leading to more efficient optimization.

\subsection{Variants of Enhanced Attention}

\begin{figure}[t]
   \centering
   \includegraphics[width=1.0\linewidth]{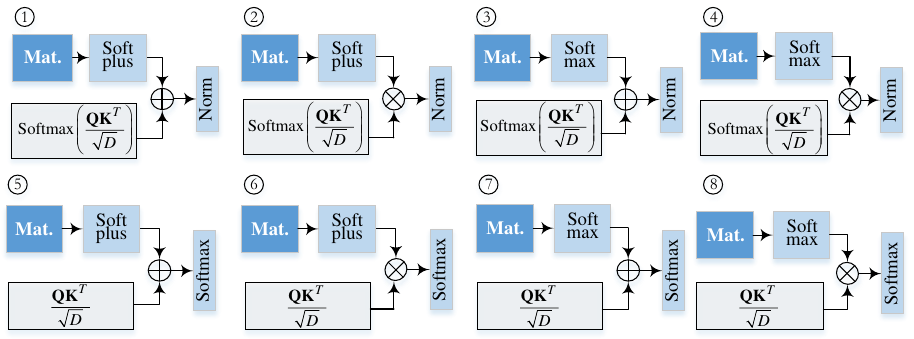}
   \caption{Illustration of enhanced attention variants. By ensuring $\mathbf{B}$ is positive using either Softplus or Softmax, combining it with $\mathbf{A}$ through addition or multiplication, and applying Softmax or Norm for final normalization, we derive seven variants. The first subfigure represents the version adopted by default, while the others are denoted as $\mathrm{Var}\left \{ 1, \cdots, 7 \right \}$. Detailed formulas are provided in Table \ref{tab_var_formula}.}
   \label{fig4}
\end{figure}

We present the variants of enhanced attention in Figure \ref{fig4} and Table \ref{tab_var_formula}. For simplicity, our analysis omits the effect of the query matrix $\mathbf{V}$, as it is included in all variants. Notably, the Swin Transformer \cite{swintrans} introduces relative position bias to the vanilla attention and employs $  \mathrm{Softmax} \left ( \mathbf{Q}\mathbf{K}^T / \sqrt{D} + \mathbf{B} \right )   \mathbf{V}$, where $\mathbf{B}$ represents the relative position bias among tokens, with correlations existing between its entries. In contrast, Var4 and Var6 in Table \ref{tab_var_formula}, although they exhibit a structure similar to that of the Swin Transformer, allow greater flexibility in $\mathbf{B}$, where each entry is independent. Next, we analyze the Jacobian matrix of the variants without distinguishing between the choices of Softplus and Softmax.

\begin{table}[t]
\begin{center}

\begin{tabular}{@{}cc@{}}
\toprule
Variants &  $\mathrm{EnhAttn}\left ( \mathbf{Q} ,\mathbf{K} ,\mathbf{V}  \right )$  \\ \midrule
Ours   &    $\mathrm{Norm} \left ( \mathrm{Softmax} \left ( \frac{\mathbf{Q}\mathbf{K}^T}{\sqrt{D} }  \right ) + \mathrm{Softplus} (\mathbf{B}) \right )  \mathbf{V}$    \\
Var1   &    $\mathrm{Norm} \left ( \mathrm{Softmax} \left ( \frac{\mathbf{Q}\mathbf{K}^T}{\sqrt{D} }  \right ) \odot  \mathrm{Softplus} (\mathbf{B}) \right )  \mathbf{V} $    \\ 
Var2   &    $\mathrm{Norm} \left ( \mathrm{Softmax} \left ( \frac{\mathbf{Q}\mathbf{K}^T}{\sqrt{D} }  \right ) +  \mathrm{Softmax} (\mathbf{B}) \right )  \mathbf{V} $                         \\
Var3   &       $\mathrm{Norm} \left ( \mathrm{Softmax} \left ( \frac{\mathbf{Q}\mathbf{K}^T}{\sqrt{D} }  \right ) \odot   \mathrm{Softmax} (\mathbf{B}) \right )  \mathbf{V} $                      \\
Var4   &        $\mathrm{Softmax} \left (  \frac{\mathbf{Q}\mathbf{K}^T}{\sqrt{D} }   +   \mathrm{Softplus} (\mathbf{B}) \right )  \mathbf{V} $                     \\
Var5   &         $\mathrm{Softmax} \left (  \frac{\mathbf{Q}\mathbf{K}^T}{\sqrt{D} }   \odot    \mathrm{Softplus} (\mathbf{B}) \right )  \mathbf{V} $                     \\
Var6   &         $\mathrm{Softmax} \left (  \frac{\mathbf{Q}\mathbf{K}^T}{\sqrt{D} }   +    \mathrm{Softmax} (\mathbf{B}) \right )  \mathbf{V} $                     \\
Var7   &          $\mathrm{Softmax} \left (  \frac{\mathbf{Q}\mathbf{K}^T}{\sqrt{D} }   \odot    \mathrm{Softmax} (\mathbf{B}) \right )  \mathbf{V} $                    \\ \bottomrule
\end{tabular}

\caption{Equations for different variants of the enhanced attention mechanism. These variants correspond to Figure 4 in the main paper. }
\label{tab_var_formula}
\end{center}
\end{table}

\subsubsection{Var1 and Var3}
For Var1 and Var3 in Table \ref{tab_var_formula}, the transformation can be simplified to the vector form 

\begin{equation}
\begin{aligned}
\mathbf{c} = \mathrm{Norm} \left( \mathrm{Softmax}(\mathbf{a}) \odot \mathbf{b} \right) \in \mathbb{R}^N,
\end{aligned}
\label{eq_var2}
\end{equation}

\noindent where $\mathbf{b}_i > 0$ for $i = 0, \cdots, N-1$. For simplicity, let $\tilde{\mathbf{a}} \triangleq \mathrm{Softmax}(\mathbf{a})$ and $\tilde{\mathbf{b}} \triangleq \tilde{\mathbf{a}} \odot \mathbf{b}$. The Jacobian matrix of $\mathbf{c}$ with respect to $\mathbf{a}$ can then be expressed as:

\begin{equation}
\begin{aligned}
&\frac{\partial \mathbf{c} }{\partial \mathbf{a} }  = 
\frac{\partial \mathbf{c} }{\partial \tilde{\mathbf{b}} }
\frac{\partial \tilde{\mathbf{b}} }{\partial \tilde{\mathbf{a}} }
\frac{\partial \tilde{\mathbf{a}} }{\partial \mathbf{a} } \\
&=
\frac{1}{\left \| \tilde{\mathbf{b}} \right \|_1^2 } 
\left ( \left \| \tilde{\mathbf{b}} \right \|_1\cdot \mathbf{I}- \tilde{\mathbf{b}}  \mathbf{1}^T  \right )
 \mathrm{Diag}(\mathbf{b})
\left ( \mathrm{Diag} (\tilde{\mathbf{a}}) - \tilde{\mathbf{a}}\tilde{\mathbf{a}}^T  \right )\\
&=\frac{\mathrm{Diag}(\mathbf{b})\left ( \mathrm{Diag} (\tilde{\mathbf{a}}) - \tilde{\mathbf{a}}\tilde{\mathbf{a}}^T  \right )}{\left \| \tilde{\mathbf{b}} \right \|_1 }
+\frac{(\mathbf{b}^T\tilde{\mathbf{a}} ) \tilde{\mathbf{b}}\tilde{\mathbf{a}}^T - \tilde{\mathbf{b}}\tilde{\mathbf{b}}^T }
{\left \| \tilde{\mathbf{b}} \right \|_1^2 }.
\end{aligned}
\label{eq31}
\end{equation}

\noindent Here, the equations $ \mathbf{1}^T\mathrm{Diag}(\mathbf{b}) =\mathbf{b}^T$ and $ \mathbf{1}^T\mathrm{Diag}(\mathbf{b}) \mathrm{Diag}(\tilde{\mathbf{a}})=\mathbf{b}^T\mathrm{Diag}(\tilde{\mathbf{a}})=\tilde{\mathbf{b}} ^T$ are utilized.

Similarly, the Jacobian matrix of $\mathbf{c}$ with respect to $\mathbf{b}$ is:

\begin{equation}
\begin{aligned}
\frac{\partial \mathbf{c} }{\partial \mathbf{b} }  &= 
\frac{\partial \mathbf{c} }{\partial \tilde{\mathbf{b}} }
\frac{\partial \tilde{\mathbf{b}} }{\partial \mathbf{b} }\\
&=
\frac{1}{\left \| \tilde{\mathbf{b}} \right \|_1^2 } 
\left ( \left \| \tilde{\mathbf{b}} \right \|_1\cdot \mathbf{I}- \tilde{\mathbf{b}}  \mathbf{1}^T  \right )
 \mathrm{Diag}(\tilde{\mathbf{a}})\\
&=\frac{1}{\left \| \tilde{\mathbf{b}} \right \|_1^2 } 
\left ( \left \| \tilde{\mathbf{b}} \right \|_1 \cdot \mathrm{Diag}(\tilde{\mathbf{a}}) - \tilde{\mathbf{b}} \tilde{\mathbf{a}}^T  \right ).
\end{aligned}
\label{eq_jacob_var2}
\end{equation}

Despite the complexity of Equation \eqref{eq31}, the resulting Jacobian matrix introduces additional weighting and bias factors compared to the vanilla counterpart associated with $\mathrm{Softmax}(\mathbf{a})$, providing greater flexibility to dynamically adjust the gradients.

\subsubsection{Var4 and Var6}
For Var4 and Var6 in Table \ref{tab_var_formula}, the transformation can be simplified to the vector form: 

\begin{equation}
\begin{aligned}
\mathbf{c} = \mathrm{Softmax}(\mathbf{a}+\mathbf{b})\in \mathbb{R}^N ,
\end{aligned}
\label{eq_var5}
\end{equation}

\noindent where $\mathbf{b}_i > 0$ for $i=0,\cdots ,N-1$. According to Equation \eqref{eq22}, the Jacobian matrices of $\mathbf{c}$ with respect to $\mathbf{a}$ and $\mathbf{ab}$ can be expressed as:

\begin{equation}
\frac{\partial \mathbf{c} }{\partial \mathbf{a} }  = \frac{\partial \mathbf{c} }{\partial \mathbf{b} } =  \mathrm{Diag} (\mathbf{c}) - \mathbf{c}\mathbf{c}^T.
\label{eq29}
\end{equation}

\subsubsection{Var5 and Var7}
For Var5 and Var7 in Table \ref{tab_var_formula}, the transformation can be simplified to the vector form 

\begin{equation}
\mathbf{c} = \mathrm{Softmax}(\mathbf{a} \odot \mathbf{b}) \in \mathbb{R}^N,
\label{eq_var6}
\end{equation}

\noindent where $\mathbf{b}_i > 0$ for $i = 0, \cdots, N-1$. For simplicity, let $\tilde{\mathbf{b}} \triangleq \mathbf{a} \odot \mathbf{b}$. The Jacobian matrix of $\mathbf{c}$ with respect to $\mathbf{a}$ is then derived as:
 
\begin{equation}
\begin{aligned}
\frac{\partial \mathbf{c} }{\partial \mathbf{a} }  &= 
\frac{\partial \mathbf{c} }{\partial \tilde{\mathbf{b}} }\frac{\partial \tilde{\mathbf{b}} }{\partial \mathbf{a} } \\
&=\left ( \mathrm{Diag} (\mathbf{c}) - \mathbf{c} \mathbf{c} ^T \right ) \mathrm{Diag}(\mathbf{b}), \\
\frac{\partial \mathbf{c} }{\partial \mathbf{b} }  &= 
\frac{\partial \mathbf{c} }{\partial \tilde{\mathbf{b}} }\frac{\partial \tilde{\mathbf{b}} }{\partial \mathbf{b} } \\
&=\left ( \mathrm{Diag} (\mathbf{c}) - \mathbf{c} \mathbf{c} ^T \right ) \mathrm{Diag}(\mathbf{a}),
\end{aligned}
\label{eq30}
\end{equation}

\noindent From Equation \eqref{eq30}, we observe that each column of the Jacobian matrix is scaled by the  $\mathbf{b}$, introducing additional flexibility in adjusting the gradients. 

Performance comparison of these variants are presented in Tables \ref{tab_attn_var_appd}, \ref{tab_attn_var_pems_appd}, and \ref{tab_attn_var_short_appd}.

\section{Dataset Description}

In this work, we evaluate the performance of FreEformer on the following real-world datasets:

\begin{itemize}

\item \textbf{ETT} \cite{informer2021} records seven variables related to electricity transformers from July 2016 to July 2018. It is divided into four datasets: ETTh1 and ETTh2, with hourly readings, and ETTm1 and ETTm2, with readings taken at 15-minute intervals.

\item \textbf{Weather} \footnote{https://www.bgc-jena.mpg.de/wetter/} contains 21 meteorological factors (e.g., air temperature, humidity) recorded every 10 minutes at the Weather Station of the Max Planck Biogeochemistry Institute in 2020.

\item \textbf{ECL} \cite{autoformer} documents the hourly electricity consumption of 321 clients from 2012 to 2014.

\item \textbf{Traffic} \footnote{http://pems.dot.ca.gov} includes hourly road occupancy ratios collected from 862 sensors in the San Francisco Bay Area between January 2015 and December 2016.

\item \textbf{Exchange} \cite{autoformer} features daily exchange rates for eight countries from 1990 to 2016.

\item \textbf{Solar-Energy} \cite{rnn} captures solar power output from 137 photovoltaic plants in 2006, with data sampled every 10 minutes.

\item \textbf{PEMS} \cite{scinet} provides public traffic data from California, collected every 5 minutes. We use the same four subsets (PEMS03, PEMS04, PEMS07, PEMS08) as in SCINet \cite{scinet}.

\item \textbf{ILI} \footnote{https://gis.cdc.gov/grasp/fluview/fluportaldashboard.html} comprises weekly records of influenza-like illness (ILI) patient counts from the Centers for Disease Control and Prevention in the United States from 2002 to 2021.

\item \textbf{COVID-19} \cite{chen2022tamp} includes daily COVID-19 hospitalization data in California (CA) from February to December 2020, provided by Johns Hopkins University.


\item \textbf{METR-LA} \footnote{https://github.com/liyaguang/DCRNN} includes traffic network data for Los Angeles, collected from March to June 2012. It contains 207 sensors, with data sampled every 5 minutes.

\item \textbf{NASDAQ} \footnote{https://www.kaggle.com/datasets/sai14karthik/nasdq-dataset} comprises daily NASDAQ index data, combined with key economic indicators (e.g., interest rates, exchange rates, gold prices) from 2010 to 2024.

\item \textbf{Wiki} \footnote{https://www.kaggle.com/datasets/sandeshbhat/wikipedia-web-traffic-201819} contains daily page views for 60,000 Wikipedia articles in eight different languages over two years (2018–2019). We select the first 99 articles as our experimental dataset.

\end{itemize}

The statistics of these datasets are summarized in Table \ref{tab_stat}. We split all datasets into training, validation, and test sets in chronological order, using the same splitting ratios as in iTransformer \cite{itransformer}. The splitting ratio is set to 6:2:2 for the ETT and PEMS datasets, as well as for the COVID-19 dataset under the `input-36-predict-{24,36,48,60}' setting. For all other cases (including other datasets and different settings for COVID-19), we use a 7:1:2 split. The reason for this specific setting in the COVID-19 dataset is that its total length is insufficient to accommodate a 7:1:2 split under the `input-36-predict-{24,36,48,60}' setting.

\begin{table}[t]
\begin{center}
\begin{tabular}{@{}cccc@{}}
\toprule
Datasets     & \#Variants & Total Len. & Sampling Fre. \\ \midrule
ETT\{h1, h2\} & 7          & 17420      & 1 hour         \\
ETT\{m1, m2\} & 7          & 69680      & 15 min         \\
Weather      & 21         & 52696      & 10 min         \\
ECL          & 321        & 26304      & 1 hour         \\
Traffic      & 862        & 17544      & 1 hour         \\
Exchange     & 8          & 7588       & 1 day          \\
Solar-Energy & 137        & 52560      & 10 min         \\
PEMS03       & 358        & 26209      & 5 min          \\
PEMS04       & 307        & 16992      & 5 min          \\
PEMS07       & 883        & 28224      & 5 min          \\
PEMS08       & 170        & 17856      & 5 min          \\
ILI          & 7          & 966        & 1 week         \\
COVID-19     & 55         & 335        & 1 day          \\
METR-LA      & 207        & 34272      & 1 day          \\
NASDAQ       & 12         & 3914       & 1 day          \\
Wiki         & 99         & 730        & 1 day          \\ \bottomrule
\end{tabular}
\caption{Statistics of the real-world datasets utilized in our study}
\label{tab_stat}
\end{center}
\end{table}

\section{Implementation Details}

All experiments are implemented in PyTorch \cite{pytorch} and conducted on a single NVIDIA 4090 GPU. We optimize the model using the ADAM optimizer \cite{adam_opt} with an initial learning rate selected from $\left \{ 10^{-4}, 5\times 10^{-4} \right \}$. The model dimension $D$ is chosen from $\left \{ 128, 256, 512 \right \}$, while the embedding dimension $d$ is fixed at 16. The batch size is determined based on the dataset size and selected from $\left \{ 4, 8, 16, 32 \right \}$. The training is conducted for a maximum of 50 epochs with an early stopping mechanism, which terminates training if the validation performance does not improve for 10 consecutive epochs. We adopt the weighted L1 loss function following CARD \cite{card}. For baseline models, we use the reported values from the original papers when available; otherwise, we run the official code. For the recent work FAN \cite{fan_norm}, we adopt DLinear as the base forecaster, which is the recommended model in the original paper. Our code is available at this repository: \href{https://anonymous.4open.science/r/FreEformer}{https://anonymous.4open.science/r/FreEformer}.

\section{Robustness of FreEformer}

We report the standard deviation of FreEformer performance under seven runs with different random seeds in Table \ref{tab_robust}, demonstrating that the performance of FreEformer is stable.

\begin{table*}[t]
\begin{center}
\begin{tabular}{@{}c|cccccc@{}}
\toprule
Dataset & \multicolumn{2}{c}{ECL}          & \multicolumn{2}{c}{ETTh2}   & \multicolumn{2}{c}{Exchange} \\ \cmidrule(l){2-7} 
Horizon & MSE             & MAE            & MSE          & MAE          & MSE           & MAE          \\ \midrule
96      & 0.133±0.001     & 0.223±0.001    & 0.286±0.002  & 0.335±0.002  & 0.083±0.001   & 0.200±0.001  \\
192     & 0.152±0.001     & 0.240±0.001    & 0.363±0.003  & 0.381±0.002  & 0.174±0.002   & 0.296±0.002  \\
336     & 0.165±0.001     & 0.256±0.001    & 0.416±0.003  & 0.420±0.001  & 0.325±0.003   & 0.411±0.002  \\
720     & 0.198±0.002     & 0.286±0.002    & 0.422±0.002  & 0.436±0.001  & 0.833±0.009   & 0.687±0.006  \\ \midrule
Dataset & \multicolumn{2}{c}{Solar-Energy} & \multicolumn{2}{c}{Traffic} & \multicolumn{2}{c}{Weather}  \\ \cmidrule(l){2-7} 
Horizon & MSE             & MAE            & MSE          & MAE          & MSE           & MAE          \\ \midrule
96      & 0.180±0.001     & 0.191±0.000    & 0.395±0.002  & 0.233±0.000  & 0.153±0.001   & 0.189±0.001  \\
192     & 0.213±0.001     & 0.215±0.000    & 0.423±0.004  & 0.245±0.000  & 0.201±0.002   & 0.236±0.002  \\
336     & 0.233±0.001     & 0.232±0.000    & 0.443±0.005  & 0.254±0.000  & 0.261±0.003   & 0.282±0.002  \\
720     & 0.241±0.001     & 0.237±0.000    & 0.480±0.002  & 0.274±0.000  & 0.341±0.004   & 0.334±0.003  \\ \bottomrule
\end{tabular}
\caption{Robustness of FreEformer performance. The standard deviations are obtained from seven random seeds.}
\label{tab_robust}
\end{center}
\end{table*}

\section{Full Results}

\subsection{Long and Short-Term Forecasting Performance}

\begin{table*}[t]
\begin{center}
{\fontsize{7}{9}\selectfont
\begin{tabular}{@{}c@{\hspace{3pt}}c@{\hspace{2pt}}|@{\hspace{2pt}}
c@{\hspace{5pt}}c@{\hspace{2pt}}|@{\hspace{2pt}}
c@{\hspace{5pt}}c@{\hspace{2pt}}|@{\hspace{2pt}}
c@{\hspace{5pt}}c@{\hspace{2pt}}|@{\hspace{2pt}}
c@{\hspace{5pt}}c@{\hspace{2pt}}|@{\hspace{2pt}}
c@{\hspace{5pt}}c@{\hspace{2pt}}|@{\hspace{2pt}}
c@{\hspace{5pt}}c@{\hspace{2pt}}|@{\hspace{2pt}}
c@{\hspace{5pt}}c@{\hspace{2pt}}|@{\hspace{2pt}}
c@{\hspace{5pt}}c@{\hspace{2pt}}|@{\hspace{2pt}}
c@{\hspace{5pt}}c@{\hspace{2pt}}|@{\hspace{2pt}}
c@{\hspace{5pt}}c@{\hspace{2pt}}|@{\hspace{2pt}}
c@{\hspace{5pt}}c@{}}
\toprule
\multicolumn{2}{c|@{\hspace{2pt}}}{Model}          & \multicolumn{2}{c|@{\hspace{2pt}}}{\begin{tabular}[c]{@{}c@{}}FreEformer\\ (Ours)\end{tabular}} & 
\multicolumn{2}{c|@{\hspace{2pt}}}{\begin{tabular}[c]{@{}c@{}}Leddam\\ \shortcite{Leddam_icml} \end{tabular}}   & 
\multicolumn{2}{c|@{\hspace{2pt}}}{\begin{tabular}[c]{@{}c@{}}CARD\\ \shortcite{card}\end{tabular}}        & 
\multicolumn{2}{c|@{\hspace{2pt}}}{\begin{tabular}[c]{@{}c@{}}Fredformer\\ \shortcite{fredformer}\end{tabular}}  & 
\multicolumn{2}{c|@{\hspace{2pt}}}{\begin{tabular}[c]{@{}c@{}}iTrans.\\ \shortcite{itransformer}\end{tabular}}  & 
\multicolumn{2}{c|@{\hspace{2pt}}}{\begin{tabular}[c]{@{}c@{}}TimeMixer\\ \shortcite{timemixer}\end{tabular}} & 
\multicolumn{2}{c|@{\hspace{2pt}}}{\begin{tabular}[c]{@{}c@{}}PatchTST\\ \shortcite{patchtst}\end{tabular}}    & 
\multicolumn{2}{c|@{\hspace{2pt}}}{\begin{tabular}[c]{@{}c@{}}Crossfm.\\ \shortcite{crossformer} \end{tabular}} & 
\multicolumn{2}{c|@{\hspace{2pt}}}{\begin{tabular}[c]{@{}c@{}}TimesNet\\ \shortcite{timesnet} \end{tabular}} & 
\multicolumn{2}{c|@{\hspace{2pt}}}{\begin{tabular}[c]{@{}c@{}}FreTS\\ \shortcite{frets} \end{tabular}} & 
\multicolumn{2}{c}{\begin{tabular}[c]{@{}c@{}}DLinear\\ \shortcite{linear} \end{tabular}} \\ \midrule
\multicolumn{2}{c|@{\hspace{2pt}}}{Metric}          & MSE                                    & MAE                                    & MSE                                   & MAE                                & MSE                                   & MAE                                   & MSE                                   & MAE                                   & MSE                                   & MAE                                & MSE                                    & MAE                                & MSE                                   & MAE                                   & MSE                                  & MAE                                 & MSE                                  & MAE                                 & MSE                                & MAE                                & MSE                                               & MAE                  \\ \midrule
                               & 96  & {\color[HTML]{FF0000} \textbf{0.306}}  & {\color[HTML]{FF0000} \textbf{0.340}}  & 0.319                                 & 0.359                              & {\color[HTML]{0000FF} {\ul 0.316}}    & {\color[HTML]{0000FF} {\ul 0.347}}    & 0.326                                 & 0.361                                 & 0.334                                 & 0.368                              & 0.320                                  & 0.357                              & 0.329                                 & 0.367                                 & 0.404                                & 0.426                               & 0.338                                & 0.375                               & 0.339                              & 0.374                              & 0.345                                             & 0.372                \\
                               & 192 & {\color[HTML]{FF0000} \textbf{0.359}}  & {\color[HTML]{FF0000} \textbf{0.367}}  & 0.369                                 & 0.383                              & 0.363                                 & 0.370                                 & 0.363                                 & 0.380                                 & 0.377                                 & 0.391                              & {\color[HTML]{0000FF} {\ul 0.361}}     & {\color[HTML]{0000FF} {\ul 0.381}} & 0.367                                 & 0.385                                 & 0.450                                & 0.451                               & 0.374                                & 0.387                               & 0.382                              & 0.397                              & 0.380                                             & 0.389                \\
                               & 336 & {\color[HTML]{0000FF} {\ul 0.392}}     & {\color[HTML]{0000FF} {\ul 0.391}}     & 0.394                                 & 0.402                              & {\color[HTML]{0000FF} {\ul 0.392}}    & {\color[HTML]{FF0000} \textbf{0.390}} & 0.395                                 & 0.403                                 & 0.426                                 & 0.420                              & {\color[HTML]{FF0000} \textbf{0.390}}  & 0.404                              & 0.399                                 & 0.410                                 & 0.532                                & 0.515                               & 0.410                                & 0.411                               & 0.421                              & 0.426                              & 0.413                                             & 0.413                \\
                               & 720 & 0.458                                  & {\color[HTML]{0000FF} {\ul 0.428}}     & 0.460                                 & 0.442                              & 0.458                                 & {\color[HTML]{FF0000} \textbf{0.425}} & {\color[HTML]{FF0000} \textbf{0.453}} & 0.438                                 & 0.491                                 & 0.459                              & {\color[HTML]{0000FF} {\ul 0.454}}     & 0.441                              & {\color[HTML]{0000FF} {\ul 0.454}}    & 0.439                                 & 0.666                                & 0.589                               & 0.478                                & 0.450                               & 0.485                              & 0.462                              & 0.474                                             & 0.453                \\ \cmidrule(l){2-24} 
\multirow{-5}{*}{\rotatebox[origin=c]{90}{ETTm1}}        & Avg & {\color[HTML]{FF0000} \textbf{0.379}}  & {\color[HTML]{FF0000} \textbf{0.381}}  & 0.386                                 & 0.397                              & 0.383                                 & {\color[HTML]{0000FF} {\ul 0.384}}    & 0.384                                 & 0.395                                 & 0.407                                 & 0.410                              & {\color[HTML]{0000FF} {\ul 0.381}}     & 0.395                              & 0.387                                 & 0.400                                 & 0.513                                & 0.496                               & 0.400                                & 0.406                               & 0.407                              & 0.415                              & 0.403                                             & 0.407                \\ \midrule
                               & 96  & {\color[HTML]{FF0000} \textbf{0.169}}  & {\color[HTML]{FF0000} \textbf{0.248}}  & 0.176                                 & {\color[HTML]{0000FF} {\ul 0.257}} & {\color[HTML]{FF0000} \textbf{0.169}} & {\color[HTML]{FF0000} \textbf{0.248}} & 0.177                                 & 0.259                                 & 0.180                                 & 0.264                              & {\color[HTML]{0000FF} {\ul 0.175}}     & 0.258                              & {\color[HTML]{0000FF} {\ul 0.175}}    & 0.259                                 & 0.287                                & 0.366                               & 0.187                                & 0.267                               & 0.190                              & 0.282                              & 0.193                                             & 0.292                \\
                               & 192 & {\color[HTML]{FF0000} \textbf{0.233}}  & {\color[HTML]{FF0000} \textbf{0.289}}  & 0.243                                 & 0.303                              & {\color[HTML]{0000FF} {\ul 0.234}}    & {\color[HTML]{0000FF} {\ul 0.292}}    & 0.243                                 & 0.301                                 & 0.250                                 & 0.309                              & 0.237                                  & 0.299                              & 0.241                                 & 0.302                                 & 0.414                                & 0.492                               & 0.249                                & 0.309                               & 0.260                              & 0.329                              & 0.284                                             & 0.362                \\
                               & 336 & {\color[HTML]{FF0000} \textbf{0.292}}  & {\color[HTML]{FF0000} \textbf{0.328}}  & 0.303                                 & 0.341                              & {\color[HTML]{0000FF} {\ul 0.294}}    & {\color[HTML]{0000FF} {\ul 0.339}}    & 0.302                                 & 0.340                                 & 0.311                                 & 0.348                              & 0.298                                  & 0.340                              & 0.305                                 & 0.343                                 & 0.597                                & 0.542                               & 0.321                                & 0.351                               & 0.373                              & 0.405                              & 0.369                                             & 0.427                \\
                               & 720 & 0.395                                  & {\color[HTML]{0000FF} {\ul 0.389}}     & 0.400                                 & 0.398                              & {\color[HTML]{FF0000} \textbf{0.390}} & {\color[HTML]{FF0000} \textbf{0.388}} & 0.397                                 & 0.396                                 & 0.412                                 & 0.407                              & {\color[HTML]{0000FF} {\ul 0.391}}     & 0.396                              & 0.402                                 & 0.400                                 & 1.730                                & 1.042                               & 0.408                                & 0.403                               & 0.517                              & 0.499                              & 0.554                                             & 0.522                \\ \cmidrule(l){2-24} 
\multirow{-5}{*}{\rotatebox[origin=c]{90}{ETTm2}}        & Avg & {\color[HTML]{FF0000} \textbf{0.272}}  & {\color[HTML]{FF0000} \textbf{0.313}}  & 0.281                                 & 0.325                              & {\color[HTML]{FF0000} \textbf{0.272}} & {\color[HTML]{0000FF} {\ul 0.317}}    & 0.279                                 & 0.324                                 & 0.288                                 & 0.332                              & {\color[HTML]{0000FF} {\ul 0.275}}     & 0.323                              & 0.281                                 & 0.326                                 & 0.757                                & 0.610                               & 0.291                                & 0.333                               & 0.335                              & 0.379                              & 0.350                                             & 0.401                \\ \midrule
                               & 96  & {\color[HTML]{FF0000} \textbf{0.371}}  & {\color[HTML]{FF0000} \textbf{0.390}}  & 0.377                                 & 0.394                              & 0.383                                 & 0.391                                 & {\color[HTML]{0000FF} {\ul 0.373}}    & {\color[HTML]{0000FF} {\ul 0.392}}    & 0.386                                 & 0.405                              & 0.375                                  & 0.400                              & 0.414                                 & 0.419                                 & 0.423                                & 0.448                               & 0.384                                & 0.402                               & 0.399                              & 0.412                              & 0.386                                             & 0.400                \\
                               & 192 & {\color[HTML]{FF0000} \textbf{0.424}}  & {\color[HTML]{FF0000} \textbf{0.420}}  & {\color[HTML]{FF0000} \textbf{0.424}} & 0.422                              & 0.435                                 & {\color[HTML]{FF0000} \textbf{0.420}} & 0.433                                 & {\color[HTML]{FF0000} \textbf{0.420}} & 0.441                                 & 0.436                              & {\color[HTML]{0000FF} {\ul 0.429}}     & {\color[HTML]{0000FF} {\ul 0.421}} & 0.460                                 & 0.445                                 & 0.471                                & 0.474                               & 0.436                                & 0.429                               & 0.453                              & 0.443                              & 0.437                                             & 0.432                \\
                               & 336 & {\color[HTML]{0000FF} {\ul 0.466}}     & 0.443                                  & {\color[HTML]{FF0000} \textbf{0.459}} & {\color[HTML]{0000FF} {\ul 0.442}} & 0.479                                 & {\color[HTML]{0000FF} {\ul 0.442}}    & 0.470                                 & {\color[HTML]{FF0000} \textbf{0.437}} & 0.487                                 & 0.458                              & 0.484                                  & 0.458                              & 0.501                                 & 0.466                                 & 0.570                                & 0.546                               & 0.491                                & 0.469                               & 0.503                              & 0.475                              & 0.481                                             & 0.459                \\
                               & 720 & 0.471                                  & 0.470                                  & {\color[HTML]{FF0000} \textbf{0.463}} & {\color[HTML]{0000FF} {\ul 0.459}} & 0.471                                 & 0.461                                 & {\color[HTML]{0000FF} {\ul 0.467}}    & {\color[HTML]{FF0000} \textbf{0.456}} & 0.503                                 & 0.491                              & 0.498                                  & 0.482                              & 0.500                                 & 0.488                                 & 0.653                                & 0.621                               & 0.521                                & 0.500                               & 0.596                              & 0.565                              & 0.519                                             & 0.516                \\ \cmidrule(l){2-24} 
\multirow{-5}{*}{\rotatebox[origin=c]{90}{ETTh1}}        & Avg & {\color[HTML]{0000FF} {\ul 0.433}}     & 0.431                                  & {\color[HTML]{FF0000} \textbf{0.431}} & {\color[HTML]{0000FF} {\ul 0.429}} & 0.442                                 & {\color[HTML]{0000FF} {\ul 0.429}}    & 0.435                                 & {\color[HTML]{FF0000} \textbf{0.426}} & 0.454                                 & 0.447                              & 0.447                                  & 0.440                              & 0.469                                 & 0.454                                 & 0.529                                & 0.522                               & 0.458                                & 0.450                               & 0.488                              & 0.474                              & 0.456                                             & 0.452                \\ \midrule
                               & 96  & {\color[HTML]{0000FF} {\ul 0.286}}     & {\color[HTML]{0000FF} {\ul 0.335}}     & 0.292                                 & 0.343                              & {\color[HTML]{FF0000} \textbf{0.281}} & {\color[HTML]{FF0000} \textbf{0.330}} & 0.293                                 & 0.342                                 & 0.297                                 & 0.349                              & 0.289                                  & 0.341                              & 0.292                                 & 0.342                                 & 0.745                                & 0.584                               & 0.340                                & 0.374                               & 0.350                              & 0.403                              & 0.333                                             & 0.387                \\
                               & 192 & {\color[HTML]{FF0000} \textbf{0.363}}  & {\color[HTML]{FF0000} \textbf{0.381}}  & {\color[HTML]{0000FF} {\ul 0.367}}    & {\color[HTML]{0000FF} {\ul 0.389}} & {\color[HTML]{FF0000} \textbf{0.363}} & {\color[HTML]{FF0000} \textbf{0.381}} & 0.371                                 & {\color[HTML]{0000FF} {\ul 0.389}}    & 0.380                                 & 0.400                              & 0.372                                  & 0.392                              & 0.387                                 & 0.400                                 & 0.877                                & 0.656                               & 0.402                                & 0.414                               & 0.472                              & 0.475                              & 0.477                                             & 0.476                \\
                               & 336 & 0.416                                  & 0.420                                  & 0.412                                 & 0.424                              & 0.411                                 & 0.418                                 & {\color[HTML]{FF0000} \textbf{0.382}} & {\color[HTML]{FF0000} \textbf{0.409}} & 0.428                                 & 0.432                              & {\color[HTML]{0000FF} {\ul 0.386}}     & {\color[HTML]{0000FF} {\ul 0.414}} & 0.426                                 & 0.433                                 & 1.043                                & 0.731                               & 0.452                                & 0.452                               & 0.564                              & 0.528                              & 0.594                                             & 0.541                \\
                               & 720 & 0.422                                  & 0.436                                  & 0.419                                 & 0.438                              & 0.416                                 & {\color[HTML]{FF0000} \textbf{0.431}} & {\color[HTML]{0000FF} {\ul 0.415}}    & {\color[HTML]{0000FF} {\ul 0.434}}    & 0.427                                 & 0.445                              & {\color[HTML]{FF0000} \textbf{0.412}}  & {\color[HTML]{0000FF} {\ul 0.434}} & 0.431                                 & 0.446                                 & 1.104                                & 0.763                               & 0.462                                & 0.468                               & 0.815                              & 0.654                              & 0.831                                             & 0.657                \\ \cmidrule(l){2-24} 
\multirow{-5}{*}{\rotatebox[origin=c]{90}{ETTh2}}        & Avg & 0.372                                  & {\color[HTML]{0000FF} {\ul 0.393}}     & 0.373                                 & 0.399                              & 0.368                                 & {\color[HTML]{FF0000} \textbf{0.390}} & {\color[HTML]{0000FF} {\ul 0.365}}    & {\color[HTML]{0000FF} {\ul 0.393}}    & 0.383                                 & 0.407                              & {\color[HTML]{FF0000} \textbf{0.364}}  & 0.395                              & 0.384                                 & 0.405                                 & 0.942                                & 0.684                               & 0.414                                & 0.427                               & 0.550                              & 0.515                              & 0.559                                             & 0.515                \\ \midrule
                               & 96  & {\color[HTML]{FF0000} \textbf{0.133}}  & {\color[HTML]{FF0000} \textbf{0.223}}  & {\color[HTML]{0000FF} {\ul 0.141}}    & {\color[HTML]{0000FF} {\ul 0.235}} & {\color[HTML]{0000FF} {\ul 0.141}}    & {\color[HTML]{FF0000} \textbf{0.233}} & 0.147                                 & 0.241                                 & 0.148                                 & 0.240                              & 0.153                                  & 0.247                              & 0.161                                 & 0.250                                 & 0.219                                & 0.314                               & 0.168                                & 0.272                               & 0.183                              & 0.269                              & 0.197                                             & 0.282                \\
                               & 192 & {\color[HTML]{FF0000} \textbf{0.152}}  & {\color[HTML]{FF0000} \textbf{0.240}}  & {\color[HTML]{0000FF} {\ul 0.159}}    & 0.252                              & 0.160                                 & {\color[HTML]{0000FF} {\ul 0.250}}    & 0.165                                 & 0.258                                 & 0.162                                 & 0.253                              & 0.166                                  & 0.256                              & 0.199                                 & 0.289                                 & 0.231                                & 0.322                               & 0.184                                & 0.289                               & 0.187                              & 0.276                              & 0.196                                             & 0.285                \\
                               & 336 & {\color[HTML]{FF0000} \textbf{0.165}}  & {\color[HTML]{FF0000} \textbf{0.256}}  & {\color[HTML]{0000FF} {\ul 0.173}}    & 0.268                              & {\color[HTML]{0000FF} {\ul 0.173}}    & {\color[HTML]{0000FF} {\ul 0.263}}    & 0.177                                 & 0.273                                 & 0.178                                 & 0.269                              & 0.185                                  & 0.277                              & 0.215                                 & 0.305                                 & 0.246                                & 0.337                               & 0.198                                & 0.300                               & 0.202                              & 0.292                              & 0.209                                             & 0.301                \\
                               & 720 & {\color[HTML]{0000FF} {\ul 0.198}}     & {\color[HTML]{0000FF} {\ul 0.286}}     & 0.201                                 & 0.295                              & {\color[HTML]{FF0000} \textbf{0.197}} & {\color[HTML]{FF0000} \textbf{0.284}} & 0.213                                 & 0.304                                 & 0.225                                 & 0.317                              & 0.225                                  & 0.310                              & 0.256                                 & 0.337                                 & 0.280                                & 0.363                               & 0.220                                & 0.320                               & 0.237                              & 0.325                              & 0.245                                             & 0.333                \\ \cmidrule(l){2-24} 
\multirow{-5}{*}{\rotatebox[origin=c]{90}{ECL}}          & Avg & {\color[HTML]{FF0000} \textbf{0.162}}  & {\color[HTML]{FF0000} \textbf{0.251}}  & 0.169                                 & 0.263                              & {\color[HTML]{0000FF} {\ul 0.168}}    & {\color[HTML]{0000FF} {\ul 0.258}}    & 0.176                                 & 0.269                                 & 0.178                                 & 0.270                              & 0.182                                  & 0.272                              & 0.208                                 & 0.295                                 & 0.244                                & 0.334                               & 0.192                                & 0.295                               & 0.202                              & 0.290                              & 0.212                                             & 0.300                \\ \midrule
                               & 96  & {\color[HTML]{FF0000} \textbf{0.083}}  & {\color[HTML]{FF0000} \textbf{0.200}}  & 0.086                                 & 0.207                              & {\color[HTML]{0000FF} {\ul 0.084}}    & {\color[HTML]{0000FF} {\ul 0.202}}    & {\color[HTML]{0000FF} {\ul 0.084}}    & {\color[HTML]{0000FF} {\ul 0.202}}    & 0.086                                 & 0.206                              & 0.086                                  & 0.205                              & 0.088                                 & 0.205                                 & 0.256                                & 0.367                               & 0.107                                & 0.234                               & 0.086                              & 0.212                              & 0.088                                             & 0.218                \\
                               & 192 & {\color[HTML]{FF0000} \textbf{0.174}}  & {\color[HTML]{FF0000} \textbf{0.296}}  & {\color[HTML]{0000FF} {\ul 0.175}}    & 0.301                              & 0.179                                 & {\color[HTML]{0000FF} {\ul 0.298}}    & 0.178                                 & 0.302                                 & 0.177                                 & 0.299                              & 0.193                                  & 0.312                              & 0.176                                 & 0.299                                 & 0.470                                & 0.509                               & 0.226                                & 0.344                               & 0.217                              & 0.344                              & 0.176                                             & 0.315                \\
                               & 336 & 0.325                                  & 0.411                                  & 0.325                                 & 0.415                              & 0.333                                 & 0.418                                 & 0.319                                 & {\color[HTML]{0000FF} {\ul 0.408}}    & 0.331                                 & 0.417                              & 0.356                                  & 0.433                              & {\color[HTML]{FF0000} \textbf{0.301}} & {\color[HTML]{FF0000} \textbf{0.397}} & 1.268                                & 0.883                               & 0.367                                & 0.448                               & 0.415                              & 0.475                              & {\color[HTML]{0000FF} {\ul 0.313}}                & 0.427                \\
                               & 720 & 0.833                                  & 0.687                                  & {\color[HTML]{0000FF} {\ul 0.831}}    & {\color[HTML]{0000FF} {\ul 0.686}} & 0.851                                 & 0.691                                 & {\color[HTML]{FF0000} \textbf{0.749}} & {\color[HTML]{FF0000} \textbf{0.651}} & 0.847                                 & 0.691                              & 0.912                                  & 0.712                              & 0.901                                 & 0.714                                 & 1.767                                & 1.068                               & 0.964                                & 0.746                               & 0.947                              & 0.725                              & 0.839                                             & 0.695                \\ \cmidrule(l){2-24} 
\multirow{-5}{*}{\rotatebox[origin=c]{90}{Exchange}}     & Avg & {\color[HTML]{0000FF} {\ul 0.354}}     & {\color[HTML]{0000FF} {\ul 0.399}}     & {\color[HTML]{0000FF} {\ul 0.354}}    & 0.402                              & 0.362                                 & 0.402                                 & {\color[HTML]{FF0000} \textbf{0.333}} & {\color[HTML]{FF0000} \textbf{0.391}} & 0.360                                 & 0.403                              & 0.387                                  & 0.416                              & 0.367                                 & 0.404                                 & 0.940                                & 0.707                               & 0.416                                & 0.443                               & 0.416                              & 0.439                              & {\color[HTML]{0000FF} {\ul 0.354}}                & 0.414                \\ \midrule
                               & 96  & {\color[HTML]{FF0000} \textbf{0.395}}  & {\color[HTML]{FF0000} \textbf{0.233}}  & 0.426                                 & 0.276                              & 0.419                                 & 0.269                                 & {\color[HTML]{0000FF} {\ul 0.406}}    & 0.277                                 & {\color[HTML]{FF0000} \textbf{0.395}} & {\color[HTML]{0000FF} {\ul 0.268}} & 0.462                                  & 0.285                              & 0.446                                 & 0.283                                 & 0.522                                & 0.290                               & 0.593                                & 0.321                               & 0.519                              & 0.315                              & 0.650                                             & 0.396                \\
                               & 192 & {\color[HTML]{0000FF} {\ul 0.423}}     & {\color[HTML]{FF0000} \textbf{0.245}}  & 0.458                                 & 0.289                              & 0.443                                 & {\color[HTML]{0000FF} {\ul 0.276}}    & 0.426                                 & 0.290                                 & {\color[HTML]{FF0000} \textbf{0.417}} & {\color[HTML]{0000FF} {\ul 0.276}} & 0.473                                  & 0.296                              & 0.540                                 & 0.354                                 & 0.530                                & 0.293                               & 0.617                                & 0.336                               & 0.521                              & 0.325                              & 0.598                                             & 0.370                \\
                               & 336 & 0.443                                  & {\color[HTML]{FF0000} \textbf{0.254}}  & 0.486                                 & 0.297                              & 0.460                                 & 0.283                                 & {\color[HTML]{0000FF} {\ul 0.437}}    & 0.292                                 & {\color[HTML]{FF0000} \textbf{0.433}} & {\color[HTML]{0000FF} {\ul 0.283}} & 0.498                                  & 0.296                              & 0.551                                 & 0.358                                 & 0.558                                & 0.305                               & 0.629                                & 0.336                               & 0.533                              & 0.327                              & 0.605                                             & 0.373                \\
                               & 720 & 0.480                                  & {\color[HTML]{FF0000} \textbf{0.274}}  & 0.498                                 & 0.313                              & 0.490                                 & {\color[HTML]{0000FF} {\ul 0.299}}    & {\color[HTML]{FF0000} \textbf{0.462}} & 0.305                                 & {\color[HTML]{0000FF} {\ul 0.467}}    & 0.302                              & 0.506                                  & 0.313                              & 0.586                                 & 0.375                                 & 0.589                                & 0.328                               & 0.640                                & 0.350                               & 0.580                              & 0.344                              & 0.645                                             & 0.394                \\ \cmidrule(l){2-24} 
\multirow{-5}{*}{\rotatebox[origin=c]{90}{Traffic}}      & Avg & 0.435                                  & {\color[HTML]{FF0000} \textbf{0.251}}  & 0.467                                 & 0.294                              & 0.453                                 & {\color[HTML]{0000FF} {\ul 0.282}}    & {\color[HTML]{0000FF} {\ul 0.433}}    & 0.291                                 & {\color[HTML]{FF0000} \textbf{0.428}} & {\color[HTML]{0000FF} {\ul 0.282}} & 0.484                                  & 0.297                              & 0.531                                 & 0.343                                 & 0.550                                & 0.304                               & 0.620                                & 0.336                               & 0.538                              & 0.328                              & 0.625                                             & 0.383                \\ \midrule
                               & 96  & {\color[HTML]{0000FF} {\ul 0.153}}     & {\color[HTML]{0000FF} {\ul 0.189}}     & 0.156                                 & 0.202                              & {\color[HTML]{FF0000} \textbf{0.150}} & {\color[HTML]{FF0000} \textbf{0.188}} & 0.163                                 & 0.207                                 & 0.174                                 & 0.214                              & 0.163                                  & 0.209                              & 0.177                                 & 0.218                                 & 0.158                                & 0.230                               & 0.172                                & 0.220                               & 0.184                              & 0.239                              & 0.196                                             & 0.255                \\
                               & 192 & {\color[HTML]{FF0000} \textbf{0.201}}  & {\color[HTML]{FF0000} \textbf{0.236}}  & 0.207                                 & 0.250                              & {\color[HTML]{0000FF} {\ul 0.202}}    & {\color[HTML]{0000FF} {\ul 0.238}}    & 0.211                                 & 0.251                                 & 0.221                                 & 0.254                              & 0.208                                  & 0.250                              & 0.225                                 & 0.259                                 & 0.206                                & 0.277                               & 0.219                                & 0.261                               & 0.223                              & 0.275                              & 0.237                                             & 0.296                \\
                               & 336 & 0.261                                  & {\color[HTML]{FF0000} \textbf{0.282}}  & 0.262                                 & 0.291                              & {\color[HTML]{0000FF} {\ul 0.260}}    & {\color[HTML]{FF0000} \textbf{0.282}} & 0.267                                 & 0.292                                 & 0.278                                 & 0.296                              & {\color[HTML]{FF0000} \textbf{0.251}}  & {\color[HTML]{0000FF} {\ul 0.287}} & 0.278                                 & 0.297                                 & 0.272                                & 0.335                               & 0.280                                & 0.306                               & 0.272                              & 0.316                              & 0.283                                             & 0.335                \\
                               & 720 & {\color[HTML]{0000FF} {\ul 0.341}}     & {\color[HTML]{FF0000} \textbf{0.334}}  & 0.343                                 & 0.343                              & 0.343                                 & 0.353                                 & 0.343                                 & {\color[HTML]{0000FF} {\ul 0.341}}    & 0.358                                 & 0.349                              & {\color[HTML]{FF0000} \textbf{0.339}}  & {\color[HTML]{0000FF} {\ul 0.341}} & 0.354                                 & 0.348                                 & 0.398                                & 0.418                               & 0.365                                & 0.359                               & 0.340                              & 0.363                              & 0.345                                             & 0.381                \\ \cmidrule(l){2-24} 
\multirow{-5}{*}{\rotatebox[origin=c]{90}{Weather}}      & Avg & {\color[HTML]{FF0000} \textbf{0.239}}  & {\color[HTML]{FF0000} \textbf{0.260}}  & 0.242                                 & 0.272                              & {\color[HTML]{FF0000} \textbf{0.239}} & {\color[HTML]{0000FF} {\ul 0.265}}    & 0.246                                 & 0.272                                 & 0.258                                 & 0.279                              & {\color[HTML]{0000FF} {\ul 0.240}}     & 0.271                              & 0.259                                 & 0.281                                 & 0.259                                & 0.315                               & 0.259                                & 0.287                               & 0.255                              & 0.298                              & 0.265                                             & 0.317                \\ \midrule
                               & 96  & {\color[HTML]{FF0000} \textbf{0.180}}  & {\color[HTML]{FF0000} \textbf{0.191}}  & 0.197                                 & 0.241                              & 0.197                                 & {\color[HTML]{0000FF} {\ul 0.211}}    & {\color[HTML]{0000FF} {\ul 0.185}}    & 0.233                                 & 0.203                                 & 0.237                              & 0.189                                  & 0.259                              & 0.234                                 & 0.286                                 & 0.310                                & 0.331                               & 0.250                                & 0.292                               & 0.192                              & 0.225                              & 0.290                                             & 0.378                \\
                               & 192 & {\color[HTML]{FF0000} \textbf{0.213}}  & {\color[HTML]{FF0000} \textbf{0.215}}  & 0.231                                 & 0.264                              & 0.234                                 & 0.234                                 & 0.227                                 & {\color[HTML]{0000FF} {\ul 0.253}}    & 0.233                                 & 0.261                              & {\color[HTML]{0000FF} {\ul 0.222}}     & 0.283                              & 0.267                                 & 0.310                                 & 0.734                                & 0.725                               & 0.296                                & 0.318                               & 0.229                              & 0.252                              & 0.320                                             & 0.398                \\
                               & 336 & {\color[HTML]{0000FF} {\ul 0.233}}     & {\color[HTML]{FF0000} \textbf{0.232}}  & 0.241                                 & 0.268                              & 0.256                                 & {\color[HTML]{0000FF} {\ul 0.250}}    & 0.246                                 & 0.284                                 & 0.248                                 & 0.273                              & {\color[HTML]{FF0000} \textbf{0.231}}  & 0.292                              & 0.290                                 & 0.315                                 & 0.750                                & 0.735                               & 0.319                                & 0.330                               & 0.242                              & 0.269                              & 0.353                                             & 0.415                \\
                               & 720 & {\color[HTML]{0000FF} {\ul 0.241}}     & {\color[HTML]{FF0000} \textbf{0.237}}  & 0.250                                 & 0.281                              & 0.260                                 & {\color[HTML]{0000FF} {\ul 0.254}}    & 0.247                                 & 0.276                                 & 0.249                                 & 0.275                              & {\color[HTML]{FF0000} \textbf{0.223}}  & 0.285                              & 0.289                                 & 0.317                                 & 0.769                                & 0.765                               & 0.338                                & 0.337                               & 0.240                              & 0.272                              & 0.356                                             & 0.413                \\ \cmidrule(l){2-24} 
\multirow{-5}{*}{\rotatebox[origin=c]{90}{Solar-Energy}} & Avg & {\color[HTML]{0000FF} {\ul 0.217}}     & {\color[HTML]{FF0000} \textbf{0.219}}  & 0.230                                 & 0.264                              & 0.237                                 & {\color[HTML]{0000FF} {\ul 0.237}}    & 0.226                                 & 0.262                                 & 0.233                                 & 0.262                              & {\color[HTML]{FF0000} \textbf{0.216}}  & 0.280                              & 0.270                                 & 0.307                                 & 0.641                                & 0.639                               & 0.301                                & 0.319                               & 0.226                              & 0.254                              & 0.330                                             & 0.401                \\ \midrule 

\multicolumn{2}{c|@{\hspace{2pt}}}{1st Count}       & 21                                     & 30                                     & 4                                     & 0                                  & 8                                     & 13                                    & 5                                     & 7                                     & 4                                     & 0                                  & 8                                      & 0                                  & 1                                     & 1                                     & 0                                    & 0                                   & 0                                    & 0                                   & 0                                  & 0                                  & 0                                                 & 0                    \\ \bottomrule

\end{tabular}
}
\caption{Full results of long-term multivariate time series forecasting results for prediction lengths $\tau \in \left \{ 96,192,336,720 \right \}$ with the lookback length being 96. Results for the PEMS datasets are provided in Table \ref{tab_long_term_appd_p2}.}
\label{tab_long_term_appd_p1}
\end{center}
\end{table*}

\begin{table*}[t]
\begin{center}
{\fontsize{7}{9}\selectfont
\begin{tabular}{@{}c@{\hspace{3pt}}c@{\hspace{2pt}}|@{\hspace{2pt}}
c@{\hspace{5pt}}c@{\hspace{2pt}}|@{\hspace{2pt}}
c@{\hspace{5pt}}c@{\hspace{2pt}}|@{\hspace{2pt}}
c@{\hspace{5pt}}c@{\hspace{2pt}}|@{\hspace{2pt}}
c@{\hspace{5pt}}c@{\hspace{2pt}}|@{\hspace{2pt}}
c@{\hspace{5pt}}c@{\hspace{2pt}}|@{\hspace{2pt}}
c@{\hspace{5pt}}c@{\hspace{2pt}}|@{\hspace{2pt}}
c@{\hspace{5pt}}c@{\hspace{2pt}}|@{\hspace{2pt}}
c@{\hspace{5pt}}c@{\hspace{2pt}}|@{\hspace{2pt}}
c@{\hspace{5pt}}c@{\hspace{2pt}}|@{\hspace{2pt}}
c@{\hspace{5pt}}c@{\hspace{2pt}}|@{\hspace{2pt}}
c@{\hspace{5pt}}c@{}}
\toprule
\multicolumn{2}{c|@{\hspace{2pt}}}{Model}          & \multicolumn{2}{c|@{\hspace{2pt}}}{\begin{tabular}[c]{@{}c@{}}FreEformer\\ (Ours)\end{tabular}} & 
\multicolumn{2}{c|@{\hspace{2pt}}}{\begin{tabular}[c]{@{}c@{}}Leddam\\ \shortcite{Leddam_icml} \end{tabular}}   & 
\multicolumn{2}{c|@{\hspace{2pt}}}{\begin{tabular}[c]{@{}c@{}}CARD\\ \shortcite{card}\end{tabular}}        & 
\multicolumn{2}{c|@{\hspace{2pt}}}{\begin{tabular}[c]{@{}c@{}}Fredformer\\ \shortcite{fredformer}\end{tabular}}  & 
\multicolumn{2}{c|@{\hspace{2pt}}}{\begin{tabular}[c]{@{}c@{}}iTrans.\\ \shortcite{itransformer}\end{tabular}}  & 
\multicolumn{2}{c|@{\hspace{2pt}}}{\begin{tabular}[c]{@{}c@{}}TimeMixer\\ \shortcite{timemixer}\end{tabular}} & 
\multicolumn{2}{c|@{\hspace{2pt}}}{\begin{tabular}[c]{@{}c@{}}PatchTST\\ \shortcite{patchtst}\end{tabular}}    & 
\multicolumn{2}{c|@{\hspace{2pt}}}{\begin{tabular}[c]{@{}c@{}}Crossfm.\\ \shortcite{crossformer} \end{tabular}} & 
\multicolumn{2}{c|@{\hspace{2pt}}}{\begin{tabular}[c]{@{}c@{}}TimesNet\\ \shortcite{timesnet} \end{tabular}} & 
\multicolumn{2}{c|@{\hspace{2pt}}}{\begin{tabular}[c]{@{}c@{}}FreTS\\ \shortcite{frets} \end{tabular}} & 
\multicolumn{2}{c}{\begin{tabular}[c]{@{}c@{}}DLinear\\ \shortcite{linear} \end{tabular}} \\ \midrule
\multicolumn{2}{c|@{\hspace{2pt}}}{Metric}          & MSE                                    & MAE                                    & MSE                                   & MAE                                & MSE                                   & MAE                                   & MSE                                   & MAE                                   & MSE                                   & MAE                                & MSE                                    & MAE                                & MSE                                   & MAE                                   & MSE                                  & MAE                                 & MSE                                  & MAE                                 & MSE                                & MAE                                & MSE                                               & MAE                  \\ \midrule

                               & 12  & {\color[HTML]{FF0000} \textbf{0.060}}  & {\color[HTML]{FF0000} \textbf{0.160}}  & {\color[HTML]{0000FF} {\ul 0.063}}    & {\color[HTML]{0000FF} {\ul 0.164}} & 0.072                                 & 0.177                                 & 0.068                                 & 0.174                                 & 0.071                                 & 0.174                              & 0.076                                  & 0.188                              & 0.099                                 & 0.216                                 & 0.090                                & 0.203                               & 0.085                                & 0.192                               & 0.083                              & 0.194                              & 0.122                                             & 0.243                \\
                               & 24  & {\color[HTML]{FF0000} \textbf{0.077}}  & {\color[HTML]{FF0000} \textbf{0.181}}  & {\color[HTML]{0000FF} {\ul 0.080}}    & {\color[HTML]{0000FF} {\ul 0.185}} & 0.107                                 & 0.217                                 & 0.094                                 & 0.205                                 & 0.093                                 & 0.201                              & 0.113                                  & 0.226                              & 0.142                                 & 0.259                                 & 0.121                                & 0.240                               & 0.118                                & 0.223                               & 0.127                              & 0.241                              & 0.201                                             & 0.317                \\
                               & 48  & {\color[HTML]{FF0000} \textbf{0.112}}  & {\color[HTML]{FF0000} \textbf{0.218}}  & {\color[HTML]{0000FF} {\ul 0.124}}    & {\color[HTML]{0000FF} {\ul 0.226}} & 0.194                                 & 0.302                                 & 0.152                                 & 0.262                                 & 0.125                                 & 0.236                              & 0.191                                  & 0.292                              & 0.211                                 & 0.319                                 & 0.202                                & 0.317                               & 0.155                                & 0.260                               & 0.202                              & 0.310                              & 0.333                                             & 0.425                \\
                               & 96  & {\color[HTML]{FF0000} \textbf{0.159}}  & {\color[HTML]{FF0000} \textbf{0.265}}  & {\color[HTML]{0000FF} {\ul 0.160}}    & {\color[HTML]{0000FF} {\ul 0.266}} & 0.323                                 & 0.402                                 & 0.228                                 & 0.330                                 & 0.164                                 & 0.275                              & 0.288                                  & 0.363                              & 0.269                                 & 0.370                                 & 0.262                                & 0.367                               & 0.228                                & 0.317                               & 0.265                              & 0.365                              & 0.457                                             & 0.515                \\ \cmidrule(l){2-24} 
\multirow{-5}{*}{\rotatebox[origin=c]{90}{PEMS03}}      & Avg & {\color[HTML]{FF0000} \textbf{0.102}}  & {\color[HTML]{FF0000} \textbf{0.206}}  & {\color[HTML]{0000FF} {\ul 0.107}}    & {\color[HTML]{0000FF} {\ul 0.210}} & 0.174                                 & 0.275                                 & 0.135                                 & 0.243                                 & 0.113                                 & 0.221                              & 0.167                                  & 0.267                              & 0.180                                 & 0.291                                 & 0.169                                & 0.281                               & 0.147                                & 0.248                               & 0.169                              & 0.278                              & 0.278                                             & 0.375                \\ \midrule
                               & 12  & {\color[HTML]{FF0000} \textbf{0.068}}  & {\color[HTML]{FF0000} \textbf{0.164}}  & {\color[HTML]{0000FF} {\ul 0.071}}    & {\color[HTML]{0000FF} {\ul 0.172}} & 0.089                                 & 0.194                                 & 0.085                                 & 0.189                                 & 0.078                                 & 0.183                              & 0.092                                  & 0.204                              & 0.105                                 & 0.224                                 & 0.098                                & 0.218                               & 0.087                                & 0.195                               & 0.097                              & 0.209                              & 0.148                                             & 0.272                \\
                               & 24  & {\color[HTML]{FF0000} \textbf{0.079}}  & {\color[HTML]{FF0000} \textbf{0.179}}  & {\color[HTML]{0000FF} {\ul 0.087}}    & {\color[HTML]{0000FF} {\ul 0.193}} & 0.128                                 & 0.234                                 & 0.117                                 & 0.224                                 & 0.095                                 & 0.205                              & 0.128                                  & 0.243                              & 0.153                                 & 0.275                                 & 0.131                                & 0.256                               & 0.103                                & 0.215                               & 0.144                              & 0.258                              & 0.224                                             & 0.340                \\
                               & 48  & {\color[HTML]{FF0000} \textbf{0.099}}  & {\color[HTML]{FF0000} \textbf{0.204}}  & {\color[HTML]{0000FF} {\ul 0.113}}    & {\color[HTML]{0000FF} {\ul 0.222}} & 0.224                                 & 0.321                                 & 0.174                                 & 0.276                                 & 0.120                                 & 0.233                              & 0.213                                  & 0.315                              & 0.229                                 & 0.339                                 & 0.205                                & 0.326                               & 0.136                                & 0.250                               & 0.223                              & 0.328                              & 0.355                                             & 0.437                \\
                               & 96  & {\color[HTML]{FF0000} \textbf{0.129}}  & {\color[HTML]{FF0000} \textbf{0.238}}  & {\color[HTML]{0000FF} {\ul 0.141}}    & {\color[HTML]{0000FF} {\ul 0.252}} & 0.382                                 & 0.445                                 & 0.273                                 & 0.354                                 & 0.150                                 & 0.262                              & 0.307                                  & 0.384                              & 0.291                                 & 0.389                                 & 0.402                                & 0.457                               & 0.190                                & 0.303                               & 0.288                              & 0.379                              & 0.452                                             & 0.504                \\ \cmidrule(l){2-24} 
\multirow{-5}{*}{\rotatebox[origin=c]{90}{PEMS04}}      & Avg & {\color[HTML]{FF0000} \textbf{0.094}}  & {\color[HTML]{FF0000} \textbf{0.196}}  & {\color[HTML]{0000FF} {\ul 0.103}}    & {\color[HTML]{0000FF} {\ul 0.210}} & 0.206                                 & 0.299                                 & 0.162                                 & 0.261                                 & 0.111                                 & 0.221                              & 0.185                                  & 0.287                              & 0.195                                 & 0.307                                 & 0.209                                & 0.314                               & 0.129                                & 0.241                               & 0.188                              & 0.294                              & 0.295                                             & 0.388                \\  \midrule

& 12  & {\color[HTML]{FF0000} \textbf{0.053}}  & {\color[HTML]{FF0000} \textbf{0.138}}  & {\color[HTML]{0000FF} {\ul 0.055}}    & {\color[HTML]{0000FF} {\ul 0.145}} & 0.068                                 & 0.166                                 & 0.063                                 & 0.158                                 & 0.067                                 & 0.165                              & 0.073                                  & 0.184                              & 0.095                                 & 0.207                                 & 0.094                                & 0.200                               & 0.082                                & 0.181                               & 0.078                              & 0.185                              & 0.115                                             & 0.242                \\
                               & 24  & {\color[HTML]{FF0000} \textbf{0.067}}  & {\color[HTML]{FF0000} \textbf{0.153}}  & {\color[HTML]{0000FF} {\ul 0.070}}    & {\color[HTML]{0000FF} {\ul 0.164}} & 0.103                                 & 0.206                                 & 0.089                                 & 0.192                                 & 0.088                                 & 0.190                              & 0.111                                  & 0.219                              & 0.150                                 & 0.262                                 & 0.139                                & 0.247                               & 0.101                                & 0.204                               & 0.127                              & 0.239                              & 0.210                                             & 0.329                \\
                               & 48  & {\color[HTML]{FF0000} \textbf{0.087}}  & {\color[HTML]{FF0000} \textbf{0.174}}  & {\color[HTML]{0000FF} {\ul 0.094}}    & {\color[HTML]{0000FF} {\ul 0.192}} & 0.165                                 & 0.268                                 & 0.136                                 & 0.241                                 & 0.110                                 & 0.215                              & 0.237                                  & 0.328                              & 0.253                                 & 0.340                                 & 0.311                                & 0.369                               & 0.134                                & 0.238                               & 0.220                              & 0.317                              & 0.398                                             & 0.458                \\
                               & 96  & {\color[HTML]{FF0000} \textbf{0.115}}  & {\color[HTML]{FF0000} \textbf{0.205}}  & {\color[HTML]{0000FF} {\ul 0.117}}    & {\color[HTML]{0000FF} {\ul 0.217}} & 0.258                                 & 0.346                                 & 0.197                                 & 0.298                                 & 0.139                                 & 0.245                              & 0.303                                  & 0.354                              & 0.346                                 & 0.404                                 & 0.396                                & 0.442                               & 0.181                                & 0.279                               & 0.316                              & 0.386                              & 0.594                                             & 0.553                \\ \cmidrule(l){2-24} 
\multirow{-5}{*}{\rotatebox[origin=c]{90}{PEMS07}}      & Avg & {\color[HTML]{FF0000} \textbf{0.080}}  & {\color[HTML]{FF0000} \textbf{0.167}}  & {\color[HTML]{0000FF} {\ul 0.084}}    & {\color[HTML]{0000FF} {\ul 0.180}} & 0.149                                 & 0.247                                 & 0.121                                 & 0.222                                 & 0.101                                 & 0.204                              & 0.181                                  & 0.271                              & 0.211                                 & 0.303                                 & 0.235                                & 0.315                               & 0.124                                & 0.225                               & 0.185                              & 0.282                              & 0.329                                             & 0.395                \\ \midrule
                               & 12  & {\color[HTML]{FF0000} \textbf{0.066}}  & {\color[HTML]{FF0000} \textbf{0.157}}  & {\color[HTML]{0000FF} {\ul 0.071}}    & {\color[HTML]{0000FF} {\ul 0.171}} & 0.080                                 & 0.181                                 & 0.081                                 & 0.185                                 & 0.079                                 & 0.182                              & 0.091                                  & 0.201                              & 0.168                                 & 0.232                                 & 0.165                                & 0.214                               & 0.112                                & 0.212                               & 0.096                              & 0.204                              & 0.154                                             & 0.276                \\
                               & 24  & {\color[HTML]{FF0000} \textbf{0.085}}  & {\color[HTML]{FF0000} \textbf{0.176}}  & {\color[HTML]{0000FF} {\ul 0.091}}    & {\color[HTML]{0000FF} {\ul 0.189}} & 0.118                                 & 0.220                                 & 0.112                                 & 0.214                                 & 0.115                                 & 0.219                              & 0.137                                  & 0.246                              & 0.224                                 & 0.281                                 & 0.215                                & 0.260                               & 0.141                                & 0.238                               & 0.152                              & 0.256                              & 0.248                                             & 0.353                \\
                               & 48  & {\color[HTML]{FF0000} \textbf{0.119}}  & {\color[HTML]{FF0000} \textbf{0.207}}  & {\color[HTML]{0000FF} {\ul 0.128}}    & {\color[HTML]{0000FF} {\ul 0.219}} & 0.199                                 & 0.289                                 & 0.174                                 & 0.267                                 & 0.186                                 & 0.235                              & 0.265                                  & 0.343                              & 0.321                                 & 0.354                                 & 0.315                                & 0.355                               & 0.198                                & 0.283                               & 0.247                              & 0.331                              & 0.440                                             & 0.470                \\
                               & 96  & {\color[HTML]{FF0000} \textbf{0.169}}  & {\color[HTML]{FF0000} \textbf{0.238}}  & {\color[HTML]{0000FF} {\ul 0.198}}    & {\color[HTML]{0000FF} {\ul 0.266}} & 0.405                                 & 0.431                                 & 0.277                                 & 0.335                                 & 0.221                                 & 0.267                              & 0.410                                  & 0.407                              & 0.408                                 & 0.417                                 & 0.377                                & 0.397                               & 0.320                                & 0.351                               & 0.354                              & 0.395                              & 0.674                                             & 0.565                \\ \cmidrule(l){2-24} 
\multirow{-5}{*}{\rotatebox[origin=c]{90}{PEMS08}}      & Avg & {\color[HTML]{FF0000} \textbf{0.110}}  & {\color[HTML]{FF0000} \textbf{0.194}}  & {\color[HTML]{0000FF} {\ul 0.122}}    & {\color[HTML]{0000FF} {\ul 0.211}} & 0.201                                 & 0.280                                 & 0.161                                 & 0.250                                 & 0.150                                 & 0.226                              & 0.226                                  & 0.299                              & 0.280                                 & 0.321                                 & 0.268                                & 0.307                               & 0.193                                & 0.271                               & 0.212                              & 0.297                              & 0.379                                             & 0.416                \\ \midrule

\multicolumn{2}{c|@{\hspace{2pt}}}{1st Count}       & 20                                     & 20                                     & 0                                     & 0                                  & 0                                     & 0                                    & 0                                     & 0                                     & 0                                     & 0                                  & 0                                      & 0                                  & 0                                     & 0                                     & 0                                    & 0                                   & 0                                    & 0                                   & 0                                  & 0                                  & 0                                                 & 0                    \\ \bottomrule

\end{tabular}
}
\caption{Full results of long-term multivariate time series forecasting for the PEMS datasets. The prediction lengths are $\tau \in \left \{ 96,192,336,720 \right \}$ and the lookback length is 96. The bast and second-best results are highlighted in {\color[HTML]{FF0000} \textbf{bold}} and {\color[HTML]{0000FF} {\ul underlined}}, respectively.}
\label{tab_long_term_appd_p2}
\end{center}
\end{table*}

\begin{table*}[t]
\begin{center}
{\fontsize{7.5}{9}\selectfont
\begin{tabular}{@{}c@{\hspace{3pt}}c@{\hspace{2pt}}|@{\hspace{2pt}}
c@{\hspace{5pt}}c@{\hspace{2pt}}|@{\hspace{2pt}}
c@{\hspace{5pt}}c@{\hspace{2pt}}|@{\hspace{2pt}}
c@{\hspace{5pt}}c@{\hspace{2pt}}|@{\hspace{2pt}}
c@{\hspace{5pt}}c@{\hspace{2pt}}|@{\hspace{2pt}}
c@{\hspace{5pt}}c@{\hspace{2pt}}|@{\hspace{2pt}}
c@{\hspace{5pt}}c@{\hspace{2pt}}|@{\hspace{2pt}}
c@{\hspace{5pt}}c@{\hspace{2pt}}|@{\hspace{2pt}}
c@{\hspace{5pt}}c@{\hspace{2pt}}|@{\hspace{2pt}}
c@{\hspace{5pt}}c@{\hspace{2pt}}|@{\hspace{2pt}}
c@{\hspace{5pt}}c@{}}
\toprule
\multicolumn{2}{c|@{\hspace{2pt}}}{Model}          & \multicolumn{2}{c|@{\hspace{2pt}}}{\begin{tabular}[c]{@{}c@{}}FreEformer\\ (Ours)\end{tabular}} & 
\multicolumn{2}{c|@{\hspace{2pt}}}{\begin{tabular}[c]{@{}c@{}}Leddam\\ \shortcite{Leddam_icml} \end{tabular}}   & 
\multicolumn{2}{c|@{\hspace{2pt}}}{\begin{tabular}[c]{@{}c@{}}CARD\\ \shortcite{card}\end{tabular}}        & 
\multicolumn{2}{c|@{\hspace{2pt}}}{\begin{tabular}[c]{@{}c@{}}Fredformer\\ \shortcite{fredformer}\end{tabular}}  & 
\multicolumn{2}{c|@{\hspace{2pt}}}{\begin{tabular}[c]{@{}c@{}}iTrans.\\ \shortcite{itransformer}\end{tabular}}  & 
\multicolumn{2}{c|@{\hspace{2pt}}}{\begin{tabular}[c]{@{}c@{}}TimeMixer\\ \shortcite{timemixer}\end{tabular}} & 
\multicolumn{2}{c|@{\hspace{2pt}}}{\begin{tabular}[c]{@{}c@{}}PatchTST\\ \shortcite{patchtst}\end{tabular}}    &  
\multicolumn{2}{c|@{\hspace{2pt}}}{\begin{tabular}[c]{@{}c@{}}TimesNet\\ \shortcite{timesnet} \end{tabular}} & 
\multicolumn{2}{c|@{\hspace{2pt}}}{\begin{tabular}[c]{@{}c@{}}DLinear\\ \shortcite{linear} \end{tabular}} & 
\multicolumn{2}{c}{\begin{tabular}[c]{@{}c@{}}FreTS\\ \shortcite{frets} \end{tabular}} 
 \\ \midrule
\multicolumn{2}{c|@{\hspace{2pt}}}{Metric}          & MSE                                    & MAE                                    & MSE                                   & MAE                                & MSE                                   & MAE                                   & MSE                                   & MAE                                   & MSE                                   & MAE                                & MSE                                    & MAE                                & MSE                                   & MAE                                   & MSE                                  & MAE                                 & MSE                                  & MAE                                 & MSE                                & MAE  \\ \midrule

 & 3   & {\color[HTML]{FF0000} \textbf{0.508}} & {\color[HTML]{FF0000} \textbf{0.366}} & 0.551                                  & {\color[HTML]{0000FF} {\ul 0.388}}    & 0.597                              & 0.392                              & {\color[HTML]{0000FF} {\ul 0.528}}    & 0.403                                 & 0.555                               & 0.395                              & 0.659                                 & 0.435                                 & 0.646                                 & 0.417                                 & 0.627  & 0.420                              & 1.280                              & 0.747 & 0.647                                 & 0.416 \\
                           & 6   & {\color[HTML]{FF0000} \textbf{0.898}} & {\color[HTML]{FF0000} \textbf{0.522}} & {\color[HTML]{0000FF} {\ul 1.021}}     & {\color[HTML]{0000FF} {\ul 0.569}}    & 1.246                              & 0.610                              & 1.128                                 & 0.604                                 & 1.124                               & 0.586                              & 1.306                                 & 0.643                                 & 1.269                                 & 0.629                                 & 1.147  & 0.610                              & 2.054                              & 0.967 & 1.503                                 & 0.720 \\
                           & 9   & {\color[HTML]{FF0000} \textbf{1.277}} & {\color[HTML]{FF0000} \textbf{0.648}} & 1.881                                  & 0.795                                 & 2.041                              & 0.829                              & 1.804                                 & 0.786                                 & {\color[HTML]{0000FF} {\ul 1.794}}  & {\color[HTML]{0000FF} {\ul 0.772}} & 2.070                                 & 0.842                                 & 2.021                                 & 0.830                                 & 1.796  & 0.785                              & 2.771                              & 1.138 & 2.250                                 & 0.935 \\
                           & 12  & {\color[HTML]{FF0000} \textbf{1.876}} & {\color[HTML]{FF0000} \textbf{0.805}} & 2.421                                  & 0.964                                 & 2.746                              & 0.996                              & 2.610                                 & 0.989                                 & {\color[HTML]{0000FF} {\ul 2.273}}  & {\color[HTML]{0000FF} {\ul 0.884}} & 2.792                                 & 1.018                                 & 2.788                                 & 1.016                                 & 2.349  & 0.920                              & 3.497                              & 1.284 & 2.956                                 & 1.058 \\
                           & Avg & {\color[HTML]{FF0000} \textbf{1.140}} & {\color[HTML]{FF0000} \textbf{0.585}} & 1.468                                  & 0.679                                 & 1.658                              & 0.707                              & 1.518                                 & 0.696                                 & {\color[HTML]{0000FF} {\ul 1.437}}  & {\color[HTML]{0000FF} {\ul 0.659}} & 1.707                                 & 0.734                                 & 1.681                                 & 0.723                                 & 1.480  & 0.684                              & 2.400                              & 1.034 & 1.839                                 & 0.782 \\ \cmidrule(l){2-22} 
                           & 24  & {\color[HTML]{FF0000} \textbf{1.980}} & {\color[HTML]{FF0000} \textbf{0.849}} & 2.085                                  & 0.883                                 & 2.407                              & 0.970                              & 2.098                                 & 0.894                                 & {\color[HTML]{0000FF} {\ul 2.004}}  & {\color[HTML]{0000FF} {\ul 0.860}} & 2.110                                 & 0.879                                 & 2.046                                 & {\color[HTML]{FF0000} \textbf{0.849}} & 2.317  & 0.934                              & 3.158                              & 1.243 & 2.804                                 & 1.107 \\
                           & 36  & {\color[HTML]{0000FF} {\ul 1.879}}    & {\color[HTML]{FF0000} \textbf{0.823}} & 2.017                                  & 0.892                                 & 2.324                              & 0.948                              & {\color[HTML]{FF0000} \textbf{1.712}} & {\color[HTML]{0000FF} {\ul 0.867}}    & 1.910                               & 0.880                              & 2.084                                 & 0.890                                 & 2.344                                 & 0.912                                 & 1.972  & 0.920                              & 3.009                              & 1.200 & 2.881                                 & 1.136 \\
                           & 48  & {\color[HTML]{FF0000} \textbf{1.825}} & {\color[HTML]{FF0000} \textbf{0.815}} & {\color[HTML]{0000FF} {\ul 1.860}}     & {\color[HTML]{0000FF} {\ul 0.847}}    & 2.133                              & 0.911                              & 2.054                                 & 0.922                                 & 2.036                               & 0.891                              & 1.961                                 & 0.866                                 & 2.123                                 & 0.883                                 & 2.238  & 0.940                              & 2.994                              & 1.194 & 3.074                                 & 1.195 \\
                           & 60  & 1.940                                 & {\color[HTML]{FF0000} \textbf{0.852}} & 1.967                                  & 0.879                                 & 2.177                              & 0.921                              & {\color[HTML]{FF0000} \textbf{1.925}} & 0.913                                 & 2.022                               & 0.919                              & {\color[HTML]{0000FF} {\ul 1.926}}    & {\color[HTML]{0000FF} {\ul 0.878}}    & 2.001                                 & 0.895                                 & 2.027  & 0.928                              & 3.172                              & 1.232 & 3.384                                 & 1.258 \\
\multirow{-10}{*}{\rotatebox[origin=c]{90}{ILI}}     & Avg & {\color[HTML]{FF0000} \textbf{1.906}} & {\color[HTML]{FF0000} \textbf{0.835}} & 1.982                                  & {\color[HTML]{0000FF} {\ul 0.875}}    & 2.260                              & 0.938                              & {\color[HTML]{0000FF} {\ul 1.947}}    & 0.899                                 & 1.993                               & 0.887                              & 2.020                                 & 0.878                                 & 2.128                                 & 0.885                                 & 2.139  & 0.931                              & 3.083                              & 1.217 & 3.036                                 & 1.174 \\ \midrule
                           & 3   & {\color[HTML]{FF0000} \textbf{1.094}} & {\color[HTML]{FF0000} \textbf{0.498}} & 1.216                                  & 0.570                                 & {\color[HTML]{0000FF} {\ul 1.103}} & {\color[HTML]{0000FF} {\ul 0.521}} & 1.165                                 & 0.548                                 & 1.193                               & 0.561                              & 1.237                                 & 0.547                                 & 1.220                                 & 0.573                                 & 2.021  & 0.704                              & 2.386                              & 0.909 & 1.196                                 & 0.552 \\
                           & 6   & {\color[HTML]{0000FF} {\ul 1.726}}    & {\color[HTML]{FF0000} \textbf{0.624}} & 1.782                                  & 0.689                                 & 1.919                              & 0.735                              & {\color[HTML]{FF0000} \textbf{1.465}} & {\color[HTML]{0000FF} {\ul 0.685}}    & 1.933                               & 0.755                              & 2.003                                 & 0.739                                 & 1.982                                 & 0.762                                 & 2.405  & 0.808                              & 3.220                              & 1.053 & 2.221                                 & 0.794 \\
                           & 9   & {\color[HTML]{0000FF} {\ul 2.238}}    & {\color[HTML]{FF0000} \textbf{0.741}} & 2.407                                  & 0.866                                 & 2.358                              & {\color[HTML]{0000FF} {\ul 0.841}} & {\color[HTML]{FF0000} \textbf{2.145}} & 0.845                                 & 2.441                               & 0.879                              & 2.594                                 & 0.860                                 & 2.633                                 & 0.916                                 & 2.858  & 0.969                              & 3.803                              & 1.160 & 3.193                                 & 1.018 \\
                           & 12  & {\color[HTML]{FF0000} \textbf{2.509}} & {\color[HTML]{FF0000} \textbf{0.828}} & 2.851                                  & 0.991                                 & 2.857                              & 0.971                              & 2.833                                 & 0.984                                 & {\color[HTML]{0000FF} {\ul 2.819}}  & 0.984                              & 3.103                                 & 0.981                                 & 3.050                                 & 1.030                                 & 2.993  & {\color[HTML]{0000FF} {\ul 0.964}} & 4.524                              & 1.288 & 3.455                                 & 1.086 \\
                           & Avg & {\color[HTML]{FF0000} \textbf{1.892}} & {\color[HTML]{FF0000} \textbf{0.673}} & 2.064                                  & 0.779                                 & 2.059                              & 0.767                              & {\color[HTML]{0000FF} {\ul 1.902}}    & {\color[HTML]{0000FF} {\ul 0.765}}    & 2.096                               & 0.795                              & 2.234                                 & 0.782                                 & 2.221                                 & 0.820                                 & 2.569  & 0.861                              & 3.483                              & 1.102 & 2.516                                 & 0.862 \\ \cmidrule(l){2-22} 
                           & 24  & {\color[HTML]{FF0000} \textbf{4.452}} & {\color[HTML]{FF0000} \textbf{1.230}} & 4.860                                  & 1.342                                 & 5.133                              & 1.394                              & 4.799                                 & 1.347                                 & {\color[HTML]{0000FF} {\ul 4.715}}  & {\color[HTML]{0000FF} {\ul 1.321}} & 6.335                                 & 1.554                                 & 5.528                                 & 1.450                                 & 5.634  & 1.442                              & 9.780                              & 1.851 & 6.587                                 & 1.528 \\
                           & 36  & {\color[HTML]{FF0000} \textbf{6.844}} & {\color[HTML]{FF0000} \textbf{1.621}} & 7.378                                  & 1.708                                 & 7.377                              & 1.725                              & 7.536                                 & 1.727                                 & {\color[HTML]{0000FF} {\ul 7.299}}  & {\color[HTML]{0000FF} {\ul 1.681}} & 8.222                                 & 1.787                                 & 8.351                                 & 1.830                                 & 9.114  & 1.848                              & 12.804                             & 2.083 & 10.431                                & 1.913 \\
                           & 48  & 10.209                                & 2.009                                 & {\color[HTML]{0000FF} {\ul 10.051}}    & {\color[HTML]{0000FF} {\ul 1.999}}    & 11.013                             & 2.103                              & {\color[HTML]{FF0000} \textbf{9.833}} & {\color[HTML]{FF0000} \textbf{1.951}} & 10.141                              & 2.012                              & 11.669                                & 2.157                                 & 11.259                                & 2.114                                 & 10.940 & 2.033                              & 14.244                             & 2.189 & 13.354                                & 2.135 \\
                           & 60  & 12.235                                & 2.196                                 & {\color[HTML]{FF0000} \textbf{11.467}} & {\color[HTML]{FF0000} \textbf{2.119}} & 12.528                             & 2.227                              & 12.455                                & 2.209                                 & {\color[HTML]{0000FF} {\ul 11.871}} & {\color[HTML]{0000FF} {\ul 2.156}} & 12.188                                & 2.173                                 & 12.666                                & 2.225                                 & 12.888 & 2.186                              & 15.472                             & 2.275 & 15.008                                & 2.255 \\
\multirow{-10}{*}{\rotatebox[origin=c]{90}{COVID-19}}   & Avg & {\color[HTML]{FF0000} \textbf{8.435}} & {\color[HTML]{FF0000} \textbf{1.764}} & {\color[HTML]{0000FF} {\ul 8.439}}     & {\color[HTML]{0000FF} {\ul 1.792}}    & 9.013                              & 1.862                              & 8.656                                 & 1.808                                 & 8.506                               & {\color[HTML]{0000FF} {\ul 1.792}} & 9.604                                 & 1.918                                 & 9.451                                 & 1.905                                 & 9.644  & 1.877                              & 13.075                             & 2.099 & 11.345                                & 1.958 \\ \midrule
                           & 3   & 0.208                                 & {\color[HTML]{FF0000} \textbf{0.172}} & {\color[HTML]{0000FF} {\ul 0.204}}     & 0.191                                 & 0.210                              & {\color[HTML]{0000FF} {\ul 0.180}} & 0.205                                 & 0.188                                 & 0.205                               & 0.188                              & 0.205                                 & 0.192                                 & {\color[HTML]{0000FF} {\ul 0.204}}    & 0.190                                 & 0.221  & 0.204                              & 0.218                              & 0.231 & {\color[HTML]{FF0000} \textbf{0.202}} & 0.211 \\
                           & 6   & 0.301                                 & {\color[HTML]{FF0000} \textbf{0.208}} & {\color[HTML]{0000FF} {\ul 0.293}}     & 0.227                                 & 0.311                              & {\color[HTML]{0000FF} {\ul 0.219}} & 0.298                                 & 0.227                                 & 0.300                               & 0.229                              & 0.297                                 & 0.230                                 & 0.298                                 & 0.227                                 & 0.308  & 0.238                              & 0.307                              & 0.278 & {\color[HTML]{FF0000} \textbf{0.292}} & 0.261 \\
                           & 9   & 0.383                                 & {\color[HTML]{FF0000} \textbf{0.239}} & {\color[HTML]{FF0000} \textbf{0.369}}  & 0.264                                 & 0.401                              & {\color[HTML]{0000FF} {\ul 0.253}} & 0.385                                 & 0.263                                 & 0.386                               & 0.265                              & 0.381                                 & 0.264                                 & 0.382                                 & 0.263                                 & 0.387  & 0.273                              & 0.386                              & 0.316 & {\color[HTML]{0000FF} {\ul 0.371}}    & 0.305 \\
                           & 12  & 0.453                                 & {\color[HTML]{FF0000} \textbf{0.264}} & {\color[HTML]{0000FF} {\ul 0.442}}     & 0.292                                 & 0.474                              & {\color[HTML]{0000FF} {\ul 0.281}} & 0.457                                 & 0.292                                 & 0.460                               & 0.295                              & 0.455                                 & 0.295                                 & 0.456                                 & 0.292                                 & 0.462  & 0.298                              & 0.452                              & 0.353 & {\color[HTML]{FF0000} \textbf{0.431}} & 0.340 \\
                           & Avg & 0.336                                 & {\color[HTML]{FF0000} \textbf{0.221}} & {\color[HTML]{0000FF} {\ul 0.327}}     & 0.243                                 & 0.349                              & {\color[HTML]{0000FF} {\ul 0.233}} & 0.336                                 & 0.242                                 & 0.338                               & 0.244                              & 0.334                                 & 0.245                                 & 0.335                                 & 0.243                                 & 0.344  & 0.253                              & 0.341                              & 0.294 & {\color[HTML]{FF0000} \textbf{0.324}} & 0.279 \\ \cmidrule(l){2-22} 
                           & 24  & 0.648                                 & {\color[HTML]{FF0000} \textbf{0.339}} & 0.680                                  & 0.405                                 & 0.700                              & {\color[HTML]{0000FF} {\ul 0.378}} & 0.676                                 & 0.408                                 & 0.700                               & 0.413                              & 0.671                                 & 0.413                                 & 0.679                                 & 0.410                                 & 0.698  & 0.415                              & {\color[HTML]{0000FF} {\ul 0.645}} & 0.458 & {\color[HTML]{FF0000} \textbf{0.637}} & 0.448 \\
                           & 36  & 0.792                                 & {\color[HTML]{FF0000} \textbf{0.389}} & 0.841                                  & 0.471                                 & 0.874                              & {\color[HTML]{0000FF} {\ul 0.448}} & 0.852                                 & 0.477                                 & 0.867                               & 0.480                              & 0.841                                 & 0.480                                 & 0.845                                 & 0.484                                 & 0.856  & 0.475                              & {\color[HTML]{0000FF} {\ul 0.785}} & 0.533 & {\color[HTML]{FF0000} \textbf{0.769}} & 0.520 \\
                           & 48  & 0.910                                 & {\color[HTML]{FF0000} \textbf{0.431}} & 0.963                                  & 0.528                                 & 1.017                              & {\color[HTML]{0000FF} {\ul 0.498}} & 0.982                                 & 0.526                                 & 1.017                               & 0.539                              & 0.964                                 & 0.531                                 & 0.972                                 & 0.536                                 & 0.972  & 0.518                              & {\color[HTML]{0000FF} {\ul 0.885}} & 0.585 & {\color[HTML]{FF0000} \textbf{0.872}} & 0.582 \\
                           & 60  & 1.010                                 & {\color[HTML]{FF0000} \textbf{0.464}} & 1.029                                  & 0.556                                 & 1.126                              & {\color[HTML]{0000FF} {\ul 0.541}} & 1.084                                 & 0.569                                 & 1.079                               & 0.572                              & 1.047                                 & 0.573                                 & 1.077                                 & 0.578                                 & 1.033  & 0.543                              & {\color[HTML]{0000FF} {\ul 0.959}} & 0.623 & {\color[HTML]{FF0000} \textbf{0.938}} & 0.624 \\
\multirow{-10}{*}{\rotatebox[origin=c]{90}{METR-LA}} & Avg & 0.840                                 & {\color[HTML]{FF0000} \textbf{0.406}} & 0.878                                  & 0.490                                 & 0.929                              & {\color[HTML]{0000FF} {\ul 0.466}} & 0.898                                 & 0.495                                 & 0.916                               & 0.501                              & 0.881                                 & 0.499                                 & 0.893                                 & 0.502                                 & 0.890  & 0.488                              & {\color[HTML]{0000FF} {\ul 0.819}} & 0.550 & {\color[HTML]{FF0000} \textbf{0.804}} & 0.543 \\ \midrule
                           & 3   & {\color[HTML]{FF0000} \textbf{0.035}} & {\color[HTML]{FF0000} \textbf{0.092}} & 0.040                                  & 0.103                                 & {\color[HTML]{0000FF} {\ul 0.037}} & 0.096                              & 0.039                                 & 0.102                                 & 0.040                               & 0.105                              & {\color[HTML]{FF0000} \textbf{0.035}} & {\color[HTML]{0000FF} {\ul 0.093}}    & 0.038                                 & 0.099                                 & 0.049  & 0.123                              & 0.044                              & 0.123 & 0.045                                 & 0.129 \\
                           & 6   & {\color[HTML]{FF0000} \textbf{0.049}} & {\color[HTML]{FF0000} \textbf{0.118}} & 0.054                                  & 0.128                                 & {\color[HTML]{0000FF} {\ul 0.052}} & {\color[HTML]{0000FF} {\ul 0.123}} & 0.053                                 & 0.126                                 & 0.054                               & 0.129                              & {\color[HTML]{FF0000} \textbf{0.049}} & {\color[HTML]{FF0000} \textbf{0.118}} & 0.053                                 & 0.124                                 & 0.061  & 0.142                              & 0.062                              & 0.155 & 0.076                                 & 0.180 \\
                           & 9   & {\color[HTML]{FF0000} \textbf{0.062}} & {\color[HTML]{FF0000} \textbf{0.137}} & 0.066                                  & 0.147                                 & {\color[HTML]{0000FF} {\ul 0.063}} & 0.141                              & 0.067                                 & 0.147                                 & 0.068                               & 0.150                              & {\color[HTML]{FF0000} \textbf{0.062}} & {\color[HTML]{0000FF} {\ul 0.139}}    & 0.065                                 & 0.145                                 & 0.073  & 0.161                              & 0.082                              & 0.189 & 0.103                                 & 0.216 \\
                           & 12  & {\color[HTML]{0000FF} {\ul 0.075}}    & {\color[HTML]{FF0000} \textbf{0.156}} & 0.078                                  & 0.164                                 & {\color[HTML]{0000FF} {\ul 0.075}} & {\color[HTML]{0000FF} {\ul 0.158}} & 0.079                                 & 0.165                                 & 0.078                               & 0.165                              & {\color[HTML]{FF0000} \textbf{0.073}} & {\color[HTML]{FF0000} \textbf{0.156}} & 0.077                                 & 0.161                                 & 0.088  & 0.179                              & 0.100                              & 0.215 & 0.098                                 & 0.213 \\
                           & Avg & {\color[HTML]{FF0000} \textbf{0.055}} & {\color[HTML]{FF0000} \textbf{0.126}} & 0.059                                  & 0.135                                 & {\color[HTML]{0000FF} {\ul 0.057}} & {\color[HTML]{0000FF} {\ul 0.130}} & 0.059                                 & 0.135                                 & 0.060                               & 0.137                              & {\color[HTML]{FF0000} \textbf{0.055}} & {\color[HTML]{FF0000} \textbf{0.126}} & 0.058                                 & 0.132                                 & 0.068  & 0.151                              & 0.072                              & 0.170 & 0.080                                 & 0.184 \\ \cmidrule(l){2-22} 
                           & 24  & {\color[HTML]{FF0000} \textbf{0.119}} & {\color[HTML]{FF0000} \textbf{0.216}} & 0.125                                  & 0.222                                 & 0.124                              & {\color[HTML]{0000FF} {\ul 0.220}} & 0.128                                 & 0.226                                 & 0.137                               & 0.237                              & {\color[HTML]{0000FF} {\ul 0.122}}    & 0.221                                 & 0.127                                 & 0.224                                 & 0.198  & 0.299                              & 0.155                              & 0.274 & 0.173                                 & 0.294 \\
                           & 36  & {\color[HTML]{FF0000} \textbf{0.166}} & {\color[HTML]{FF0000} \textbf{0.263}} & 0.174                                  & 0.271                                 & {\color[HTML]{0000FF} {\ul 0.167}} & {\color[HTML]{0000FF} {\ul 0.266}} & 0.170                                 & 0.268                                 & 0.184                               & 0.280                              & 0.183                                 & 0.279                                 & 0.174                                 & 0.269                                 & 0.229  & 0.326                              & 0.196                              & 0.306 & 0.200                                 & 0.309 \\
                           & 48  & {\color[HTML]{0000FF} {\ul 0.204}}    & {\color[HTML]{FF0000} \textbf{0.298}} & 0.222                                  & 0.312                                 & 0.218                              & 0.307                              & 0.218                                 & {\color[HTML]{0000FF} {\ul 0.306}}    & 0.229                               & 0.318                              & {\color[HTML]{FF0000} \textbf{0.200}} & {\color[HTML]{FF0000} \textbf{0.298}} & 0.225                                 & 0.314                                 & 0.267  & 0.352                              & 0.244                              & 0.344 & 0.302                                 & 0.393 \\
                           & 60  & {\color[HTML]{0000FF} {\ul 0.250}}    & {\color[HTML]{0000FF} {\ul 0.331}}    & 0.264                                  & 0.341                                 & 0.264                              & 0.341                              & 0.262                                 & 0.339                                 & 0.279                               & 0.352                              & {\color[HTML]{FF0000} \textbf{0.238}} & {\color[HTML]{FF0000} \textbf{0.328}} & 0.265                                 & 0.339                                 & 0.327  & 0.394                              & 0.318                              & 0.401 & 0.380                                 & 0.449 \\
\multirow{-10}{*}{\rotatebox[origin=c]{90}{NASDAQ}}  & Avg & {\color[HTML]{FF0000} \textbf{0.185}} & {\color[HTML]{FF0000} \textbf{0.277}} & 0.196                                  & 0.286                                 & 0.193                              & 0.284                              & 0.194                                 & 0.285                                 & 0.207                               & 0.297                              & {\color[HTML]{0000FF} {\ul 0.186}}    & {\color[HTML]{0000FF} {\ul 0.281}}    & 0.198                                 & 0.286                                 & 0.255  & 0.343                              & 0.228                              & 0.331 & 0.263                                 & 0.361 \\ \midrule
                           & 3   & 6.153                                 & {\color[HTML]{FF0000} \textbf{0.371}} & {\color[HTML]{0000FF} {\ul 6.148}}     & 0.383                                 & 6.183                              & {\color[HTML]{0000FF} {\ul 0.378}} & 6.190                                 & 0.387                                 & 6.237                               & 0.393                              & 6.209                                 & 0.392                                 & {\color[HTML]{FF0000} \textbf{6.112}} & 0.380                                 & 7.597  & 0.510                              & 6.254                              & 0.438 & 6.227                                 & 0.424 \\
                           & 6   & 6.443                                 & {\color[HTML]{FF0000} \textbf{0.386}} & 6.455                                  & 0.397                                 & 6.465                              & {\color[HTML]{0000FF} {\ul 0.393}} & 6.696                                 & 0.404                                 & 6.484                               & 0.400                              & 6.475                                 & 0.402                                 & {\color[HTML]{FF0000} \textbf{6.425}} & 0.395                                 & 7.962  & 0.515                              & 6.579                              & 0.467 & {\color[HTML]{0000FF} {\ul 6.426}}    & 0.441 \\
                           & 9   & {\color[HTML]{FF0000} \textbf{6.659}} & {\color[HTML]{FF0000} \textbf{0.399}} & 6.687                                  & 0.412                                 & 6.714                              & 0.415                              & 6.768                                 & {\color[HTML]{0000FF} {\ul 0.411}}    & 6.689                               & {\color[HTML]{0000FF} {\ul 0.411}} & 6.702                                 & 0.418                                 & 6.743                                 & 0.426                                 & 8.150  & 0.524                              & 6.776                              & 0.508 & {\color[HTML]{0000FF} {\ul 6.660}}    & 0.458 \\
                           & 12  & 6.842                                 & {\color[HTML]{FF0000} \textbf{0.409}} & 6.899                                  & 0.424                                 & 6.852                              & 0.415                              & 7.168                                 & 0.424                                 & 6.868                               & 0.419                              & 6.902                                 & 0.426                                 & {\color[HTML]{0000FF} {\ul 6.814}}    & {\color[HTML]{0000FF} {\ul 0.414}}    & 8.117  & 0.533                              & 6.927                              & 0.513 & {\color[HTML]{FF0000} \textbf{6.772}} & 0.468 \\
                           & Avg & 6.524                                 & {\color[HTML]{FF0000} \textbf{0.391}} & 6.547                                  & 0.404                                 & 6.553                              & {\color[HTML]{0000FF} {\ul 0.400}} & 6.705                                 & 0.406                                 & 6.569                               & 0.405                              & 6.572                                 & 0.409                                 & {\color[HTML]{0000FF} {\ul 6.523}}    & 0.404                                 & 7.956  & 0.520                              & 6.634                              & 0.481 & {\color[HTML]{FF0000} \textbf{6.521}} & 0.448 \\ \cmidrule(l){2-22} 
                           & 24  & 6.887                                 & {\color[HTML]{FF0000} \textbf{0.424}} & 6.919                                  & 0.450                                 & 6.925                              & 0.440                              & {\color[HTML]{FF0000} \textbf{6.531}} & 0.432                                 & 6.886                               & 0.437                              & 6.900                                 & 0.446                                 & 6.858                                 & {\color[HTML]{0000FF} {\ul 0.430}}    & 8.023  & 0.612                              & 6.883                              & 0.520 & {\color[HTML]{0000FF} {\ul 6.809}}    & 0.484 \\
                           & 36  & 6.436                                 & {\color[HTML]{FF0000} \textbf{0.440}} & 6.456                                  & 0.457                                 & 6.463                              & 0.451                              & {\color[HTML]{FF0000} \textbf{5.935}} & 0.453                                 & 6.431                               & 0.452                              & 6.520                                 & 0.467                                 & 6.400                                 & {\color[HTML]{0000FF} {\ul 0.445}}    & 7.229  & 0.595                              & 6.393                              & 0.538 & {\color[HTML]{0000FF} {\ul 6.340}}    & 0.497 \\
                           & 48  & 6.012                                 & {\color[HTML]{FF0000} \textbf{0.449}} & 6.031                                  & 0.468                                 & 6.031                              & {\color[HTML]{0000FF} {\ul 0.460}} & {\color[HTML]{FF0000} \textbf{5.871}} & 0.464                                 & 6.101                               & 0.483                              & 6.108                                 & 0.484                                 & 5.959                                 & {\color[HTML]{FF0000} \textbf{0.449}} & 7.184  & 0.641                              & 5.940                              & 0.547 & {\color[HTML]{0000FF} {\ul 5.897}}    & 0.519 \\
                           & 60  & 5.699                                 & {\color[HTML]{0000FF} {\ul 0.456}}    & 5.740                                  & 0.478                                 & 5.723                              & 0.463                              & {\color[HTML]{FF0000} \textbf{5.389}} & 0.463                                 & 5.681                               & 0.462                              & 5.732                                 & 0.476                                 & 5.633                                 & {\color[HTML]{FF0000} \textbf{0.452}} & 6.805  & 0.645                              & 5.605                              & 0.552 & {\color[HTML]{0000FF} {\ul 5.543}}    & 0.520 \\
\multirow{-10}{*}{\rotatebox[origin=c]{90}{Wiki}}    & Avg & 6.259                                 & {\color[HTML]{FF0000} \textbf{0.442}} & 6.286                                  & 0.463                                 & 6.285                              & 0.453                              & {\color[HTML]{FF0000} \textbf{5.931}} & 0.453                                 & 6.275                               & 0.458                              & 6.315                                 & 0.468                                 & 6.212                                 & {\color[HTML]{0000FF} {\ul 0.444}}    & 7.310  & 0.623                              & 6.205                              & 0.539 & {\color[HTML]{0000FF} {\ul 6.147}}    & 0.505 \\ \midrule

\multicolumn{2}{c|@{\hspace{2pt}}}{1st Count}    & 22 & 46 & 2&1   & 0 & 0 & 10 & 1 & 0 & 0 & 7 & 5 & 2 & 3 & 0 & 0 & 0 & 0 & 11 & 0                                 \\ \bottomrule

\end{tabular}
}
\caption{Full results of short-term multivariate time series forecasting. Two settings are used for comparison: `Input-12-Predict-\{3,6,9,12\}' and `Input-36-Predict-\{24,36,48,60\}'. The best and second-best results are highlighted in {\color[HTML]{FF0000} \textbf{bold}} and {\color[HTML]{0000FF} {\ul underlined}}, respectively.}
\label{tab_short_term_appd}
\end{center}
\end{table*}

Tables \ref{tab_long_term_appd_p1} and \ref{tab_long_term_appd_p2} present the comprehensive results for long-term multivariate time series forecasting, serving as the full version of Table 1 in the main paper. Specifically, FreEformer consistently outperforms state-of-the-art models across all prediction lengths for the PEMS datasets, demonstrating its robust performance advantage.

Table \ref{tab_short_term_appd} provides the comprehensive results for short-term multivariate time series forecasting under two lookback-prediction settings: `Input-12-Predict-\{3,6,9,12\}' and `Input-36-Predict-\{24,36,48,60\}'. Despite the limited sizes of the ILI, COVID-19, NASDAQ and Wiki datasets, FreEformer consistently outperforms state-of-the-art models. Interestingly, the linear and frequency-based model FreTS shows competitive performance, likely due to the strong inductive bias of frequency modeling.

\subsection{Enhanced Attention Generality}

\begin{table*}[t]
\begin{center}
{\fontsize{8}{9}\selectfont
\setlength{\tabcolsep}{5.5pt}
\begin{tabular}{@{}cc|
cccc|
cccc|
cccc|
cccc@{}}
\toprule
\multicolumn{2}{c|}{\multirow{2}{*}{Model}} & \multicolumn{4}{c|}{iTrans.}                                         & \multicolumn{4}{c|}{PatchTST}                                        & \multicolumn{4}{c|}{Leddam}                                          & \multicolumn{4}{c}{Fredformer}                                      \\ \cmidrule(l){3-18} 
\multicolumn{2}{c|}{}                        & \multicolumn{2}{c|}{Van. Attn.}    & \multicolumn{2}{c|}{Enh. Attn.} & \multicolumn{2}{c|}{Van. Attn.}    & \multicolumn{2}{c|}{Enh. Attn.} & \multicolumn{2}{c|}{Van. Attn.}    & \multicolumn{2}{c|}{Enh. Attn.} & \multicolumn{2}{c|}{Van. Attn.}    & \multicolumn{2}{c}{Enh. Attn.} \\ \midrule
\multicolumn{2}{c|}{Metric}                  & MSE   & \multicolumn{1}{c|}{MAE}   & MSE            & MAE            & MSE   & \multicolumn{1}{c|}{MAE}   & MSE            & MAE            & MSE   & \multicolumn{1}{c|}{MAE}   & MSE            & MAE            & MSE   & \multicolumn{1}{c|}{MAE}   & MSE            & MAE           \\ \midrule
\multirow{5}{*}{\rotatebox[origin=c]{90}{ETTm1}}          & 96         & 0.334 & \multicolumn{1}{c|}{0.368} & 0.322          & 0.360          & 0.329 & \multicolumn{1}{c|}{0.367} & 0.322          & 0.360          & 0.319 & \multicolumn{1}{c|}{0.359} & 0.319          & 0.359          & 0.326 & \multicolumn{1}{c|}{0.361} & 0.324          & 0.360         \\
                                & 192        & 0.377 & \multicolumn{1}{c|}{0.391} & 0.365          & 0.383          & 0.367 & \multicolumn{1}{c|}{0.385} & 0.362          & 0.384          & 0.369 & \multicolumn{1}{c|}{0.383} & 0.367          & 0.382          & 0.363 & \multicolumn{1}{c|}{0.380} & 0.364          & 0.381         \\
                                & 336        & 0.426 & \multicolumn{1}{c|}{0.420} & 0.399          & 0.405          & 0.399 & \multicolumn{1}{c|}{0.410} & 0.390          & 0.407          & 0.394 & \multicolumn{1}{c|}{0.402} & 0.392          & 0.400          & 0.395 & \multicolumn{1}{c|}{0.403} & 0.394          & 0.404         \\
                                & 720        & 0.491 & \multicolumn{1}{c|}{0.459} & 0.469          & 0.444          & 0.454 & \multicolumn{1}{c|}{0.439} & 0.449          & 0.437          & 0.460 & \multicolumn{1}{c|}{0.442} & 0.459          & 0.440          & 0.453 & \multicolumn{1}{c|}{0.438} & 0.456          & 0.440         \\
                                & Avg        & 0.407 & \multicolumn{1}{c|}{0.410} & 0.389          & 0.398          & 0.211 & \multicolumn{1}{c|}{0.303} & 0.381          & 0.397          & 0.386 & \multicolumn{1}{c|}{0.397} & 0.384          & 0.395          & 0.384 & \multicolumn{1}{c|}{0.396} & 0.385          & 0.396         \\ \midrule
\multirow{5}{*}{\rotatebox[origin=c]{90}{Weather}}        & 96         & 0.174 & \multicolumn{1}{c|}{0.214} & 0.161          & 0.207          & 0.177 & \multicolumn{1}{c|}{0.218} & 0.162          & 0.206          & 0.156 & \multicolumn{1}{c|}{0.202} & 0.156          & 0.202          & 0.163 & \multicolumn{1}{c|}{0.207} & 0.155          & 0.202         \\
                                & 192        & 0.221 & \multicolumn{1}{c|}{0.254} & 0.213          & 0.253          & 0.225 & \multicolumn{1}{c|}{0.259} & 0.208          & 0.250          & 0.207 & \multicolumn{1}{c|}{0.250} & 0.205          & 0.248          & 0.211 & \multicolumn{1}{c|}{0.251} & 0.206          & 0.248         \\
                                & 336        & 0.278 & \multicolumn{1}{c|}{0.296} & 0.271          & 0.297          & 0.278 & \multicolumn{1}{c|}{0.297} & 0.265          & 0.291          & 0.262 & \multicolumn{1}{c|}{0.291} & 0.262          & 0.291          & 0.267 & \multicolumn{1}{c|}{0.292} & 0.266          & 0.291         \\
                                & 720        & 0.358 & \multicolumn{1}{c|}{0.349} & 0.353          & 0.350          & 0.354 & \multicolumn{1}{c|}{0.348} & 0.347          & 0.345          & 0.343 & \multicolumn{1}{c|}{0.343} & 0.343          & 0.343          & 0.343 & \multicolumn{1}{c|}{0.341} & 0.340          & 0.341         \\
                                & Avg        & 0.258 & \multicolumn{1}{c|}{0.278} & 0.249          & 0.276          & 0.211 & \multicolumn{1}{c|}{0.303} & 0.245          & 0.273          & 0.242 & \multicolumn{1}{c|}{0.272} & 0.242          & 0.271          & 0.246 & \multicolumn{1}{c|}{0.273} & 0.242          & 0.271         \\ \midrule
\multirow{5}{*}{\rotatebox[origin=c]{90}{ECL}}            & 96         & 0.148 & \multicolumn{1}{c|}{0.240} & 0.137          & 0.232          & 0.161 & \multicolumn{1}{c|}{0.250} & 0.154          & 0.245          & 0.141 & \multicolumn{1}{c|}{0.235} & 0.139          & 0.234          & 0.147 & \multicolumn{1}{c|}{0.241} & 0.139          & 0.236         \\
                                & 192        & 0.162 & \multicolumn{1}{c|}{0.253} & 0.154          & 0.248          & 0.199 & \multicolumn{1}{c|}{0.289} & 0.166          & 0.256          & 0.159 & \multicolumn{1}{c|}{0.252} & 0.158          & 0.252          & 0.165 & \multicolumn{1}{c|}{0.258} & 0.158          & 0.254         \\
                                & 336        & 0.178 & \multicolumn{1}{c|}{0.269} & 0.169          & 0.264          & 0.215 & \multicolumn{1}{c|}{0.305} & 0.183          & 0.273          & 0.173 & \multicolumn{1}{c|}{0.268} & 0.172          & 0.269          & 0.177 & \multicolumn{1}{c|}{0.273} & 0.175          & 0.272         \\
                                & 720        & 0.225 & \multicolumn{1}{c|}{0.317} & 0.200          & 0.296          & 0.256 & \multicolumn{1}{c|}{0.337} & 0.221          & 0.307          & 0.201 & \multicolumn{1}{c|}{0.295} & 0.198          & 0.294          & 0.213 & \multicolumn{1}{c|}{0.304} & 0.203          & 0.297         \\
                                & Avg        & 0.178 & \multicolumn{1}{c|}{0.270} & 0.165          & 0.260          & 0.208 & \multicolumn{1}{c|}{0.295} & 0.181          & 0.270          & 0.169 & \multicolumn{1}{c|}{0.263} & 0.167          & 0.262          & 0.176 & \multicolumn{1}{c|}{0.269} & 0.169          & 0.265         \\ \midrule
\multirow{5}{*}{\rotatebox[origin=c]{90}{Solar-Energy}}          & 96         & 0.203 & \multicolumn{1}{c|}{0.237} & 0.194          & 0.233          & 0.234 & \multicolumn{1}{c|}{0.286} & 0.201          & 0.244          & 0.197 & \multicolumn{1}{c|}{0.241} & 0.192          & 0.224          & 0.185 & \multicolumn{1}{c|}{0.233} & 0.148          & 0.246         \\
                                & 192        & 0.233 & \multicolumn{1}{c|}{0.261} & 0.225          & 0.258          & 0.267 & \multicolumn{1}{c|}{0.310} & 0.232          & 0.264          & 0.231 & \multicolumn{1}{c|}{0.264} & 0.231          & 0.255          & 0.227 & \multicolumn{1}{c|}{0.253} & 0.237          & 0.273         \\
                                & 336        & 0.248 & \multicolumn{1}{c|}{0.273} & 0.241          & 0.273          & 0.290 & \multicolumn{1}{c|}{0.315} & 0.249          & 0.275          & 0.241 & \multicolumn{1}{c|}{0.268} & 0.245          & 0.264          & 0.246 & \multicolumn{1}{c|}{0.284} & 0.254          & 0.283         \\
                                & 720        & 0.249 & \multicolumn{1}{c|}{0.275} & 0.245          & 0.277          & 0.289 & \multicolumn{1}{c|}{0.317} & 0.248          & 0.276          & 0.250 & \multicolumn{1}{c|}{0.281} & 0.245          & 0.276          & 0.247 & \multicolumn{1}{c|}{0.276} & 0.250          & 0.287         \\
                                & Avg        & 0.233 & \multicolumn{1}{c|}{0.262} & 0.226          & 0.260          & 0.211 & \multicolumn{1}{c|}{0.303} & 0.232          & 0.265          & 0.230 & \multicolumn{1}{c|}{0.264} & 0.228          & 0.255          & 0.226 & \multicolumn{1}{c|}{0.262} & 0.222          & 0.272         \\ \midrule
\multirow{5}{*}{\rotatebox[origin=c]{90}{PEMS03}}         & 12         & 0.071 & \multicolumn{1}{c|}{0.174} & 0.062          & 0.164          & 0.099 & \multicolumn{1}{c|}{0.216} & 0.071          & 0.175          & 0.063 & \multicolumn{1}{c|}{0.164} & 0.062          & 0.165          & 0.068 & \multicolumn{1}{c|}{0.174} & 0.063          & 0.164         \\
                                & 24         & 0.093 & \multicolumn{1}{c|}{0.201} & 0.079          & 0.185          & 0.142 & \multicolumn{1}{c|}{0.259} & 0.103          & 0.211          & 0.080 & \multicolumn{1}{c|}{0.185} & 0.078          & 0.182          & 0.094 & \multicolumn{1}{c|}{0.205} & 0.082          & 0.188         \\
                                & 48         & 0.125 & \multicolumn{1}{c|}{0.236} & 0.113          & 0.222          & 0.211 & \multicolumn{1}{c|}{0.319} & 0.168          & 0.266          & 0.124 & \multicolumn{1}{c|}{0.226} & 0.112          & 0.219          & 0.152 & \multicolumn{1}{c|}{0.262} & 0.125          & 0.234         \\
                                & 96         & 0.164 & \multicolumn{1}{c|}{0.275} & 0.154          & 0.266          & 0.269 & \multicolumn{1}{c|}{0.370} & 0.251          & 0.328          & 0.160 & \multicolumn{1}{c|}{0.266} & 0.160          & 0.264          & 0.228 & \multicolumn{1}{c|}{0.330} & 0.165          & 0.276         \\
                                & Avg        & 0.113 & \multicolumn{1}{c|}{0.222} & 0.102          & 0.209          & 0.211 & \multicolumn{1}{c|}{0.303} & 0.148          & 0.245          & 0.107 & \multicolumn{1}{c|}{0.210} & 0.103          & 0.208          & 0.135 & \multicolumn{1}{c|}{0.243} & 0.109          & 0.215         \\ \midrule
\multirow{5}{*}{\rotatebox[origin=c]{90}{PEMS07}}         & 12         & 0.067 & \multicolumn{1}{c|}{0.165} & 0.055          & 0.145          & 0.095 & \multicolumn{1}{c|}{0.207} & 0.065          & 0.161          & 0.055 & \multicolumn{1}{c|}{0.145} & 0.054          & 0.144          & 0.063 & \multicolumn{1}{c|}{0.158} & 0.059          & 0.152         \\
                                & 24         & 0.088 & \multicolumn{1}{c|}{0.190} & 0.069          & 0.161          & 0.150 & \multicolumn{1}{c|}{0.262} & 0.097          & 0.193          & 0.070 & \multicolumn{1}{c|}{0.164} & 0.068          & 0.161          & 0.089 & \multicolumn{1}{c|}{0.192} & 0.079          & 0.175         \\
                                & 48         & 0.110 & \multicolumn{1}{c|}{0.215} & 0.096          & 0.190          & 0.253 & \multicolumn{1}{c|}{0.340} & 0.214          & 0.312          & 0.094 & \multicolumn{1}{c|}{0.192} & 0.087          & 0.183          & 0.136 & \multicolumn{1}{c|}{0.241} & 0.113          & 0.211         \\
                                & 96         & 0.139 & \multicolumn{1}{c|}{0.245} & 0.124          & 0.219          & 0.346 & \multicolumn{1}{c|}{0.404} & 0.250          & 0.311          & 0.117 & \multicolumn{1}{c|}{0.217} & 0.113          & 0.214          & 0.197 & \multicolumn{1}{c|}{0.298} & 0.163          & 0.259         \\
                                & Avg        & 0.101 & \multicolumn{1}{c|}{0.204} & 0.086          & 0.179          & 0.211 & \multicolumn{1}{c|}{0.303} & 0.156          & 0.244          & 0.084 & \multicolumn{1}{c|}{0.180} & 0.080          & 0.175          & 0.121 & \multicolumn{1}{c|}{0.222} & 0.103          & 0.199         \\ \midrule
\multirow{5}{*}{\rotatebox[origin=c]{90}{METR-LA}}           & 3          & 0.205 & \multicolumn{1}{c|}{0.188} & 0.204          & 0.190          & 0.204 & \multicolumn{1}{c|}{0.190} & 0.204          & 0.190          & 0.204 & \multicolumn{1}{c|}{0.191} & 0.208          & 0.193          & 0.205 & \multicolumn{1}{c|}{0.188} & 0.204          & 0.188         \\
                                & 6          & 0.300 & \multicolumn{1}{c|}{0.229} & 0.294          & 0.227          & 0.298 & \multicolumn{1}{c|}{0.227} & 0.298          & 0.229          & 0.293 & \multicolumn{1}{c|}{0.227} & 0.294          & 0.229          & 0.298 & \multicolumn{1}{c|}{0.227} & 0.294          & 0.227         \\
                                & 9          & 0.386 & \multicolumn{1}{c|}{0.265} & 0.374          & 0.261          & 0.382 & \multicolumn{1}{c|}{0.263} & 0.382          & 0.263          & 0.369 & \multicolumn{1}{c|}{0.264} & 0.360          & 0.259          & 0.385 & \multicolumn{1}{c|}{0.263} & 0.385          & 0.261         \\
                                & 12         & 0.460 & \multicolumn{1}{c|}{0.295} & 0.444          & 0.292          & 0.456 & \multicolumn{1}{c|}{0.292} & 0.456          & 0.293          & 0.442 & \multicolumn{1}{c|}{0.292} & 0.420          & 0.290          & 0.457 & \multicolumn{1}{c|}{0.292} & 0.453          & 0.290         \\
                                & Avg        & 0.338 & \multicolumn{1}{c|}{0.244} & 0.329          & 0.243          & 0.211 & \multicolumn{1}{c|}{0.303} & 0.335          & 0.244          & 0.327 & \multicolumn{1}{c|}{0.243} & 0.321          & 0.243          & 0.336 & \multicolumn{1}{c|}{0.242} & 0.334          & 0.242         \\ \bottomrule
\end{tabular}
}
\caption{Applying enhanced attention to Transformer-based forecasters. Consistent performance improvements are observed when replacing vanilla attention with the enhanced version, leaving other components unchanged. The lookback length is 12 for METR-LA and 96 for all other datasets.}

\label{tab_attn_genearal_appd}
\end{center}
\end{table*}

To evaluate the generality of the enhanced attention mechanism, we integrate it into several state-of-the-art Transformer-based forecasters, including iTransformer, PatchTST, Leddam, and Fredformer. Only the attention mechanism is updated, while other components and the loss function remain unchanged. Notably, Leddam and Fredformer are more sophisticated than iTransformer and PatchTST, and the updated modules in Leddam and Fredformer constitute only a small portion of their overall architectures. For instance, only the `channel-wise self-attention' module is updated in Leddam.

Table \ref{tab_attn_genearal_appd} shows that replacing vanilla attention with the enhanced version consistently improves performance. On the PEMS07 dataset, the enhanced attention achieves average MSE improvements of 14.9\% for iTransformer, 25.9\% for PatchTST, 4.1\% for Leddam, and 14.6\% for Fredformer. This demonstrates the effectiveness and general applicability of the enhanced attention. Table \ref{tab_attn_genearal_appd} provides the full results corresponding to Table 9 in the main paper.

Figures \ref{fig_itrans_attn}, \ref{fig_patchtst_attn}, \ref{fig_leddam_attn}, and \ref{fig_fredformer_attn} show the attention matrices of iTransformer, PatchTST, Leddam, and Fredformer, respectively. After incorporating the learnable matrix, the final attention matrix demonstrates higher rank and more prominent values.

\begin{figure*}[htbp]
   \centering
   \includegraphics[width=0.8\linewidth]{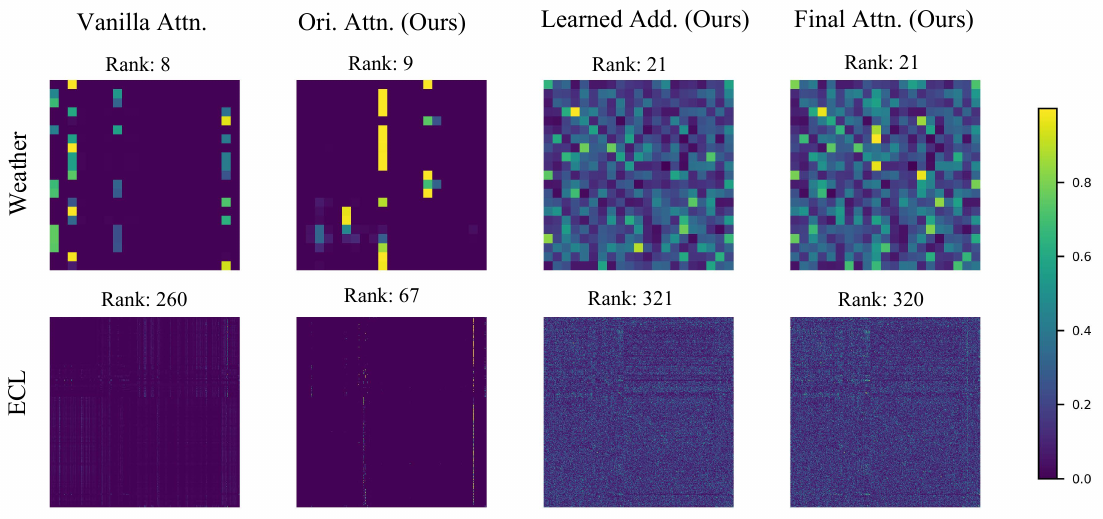}
   \caption{Attention matrices of \textbf{iTransformer} from vanilla and enhanced attention. The left column shows the low-rank attention matrix from the vanilla self-attention mechanism (Weather: 8, ECL: 260), with most entries near zero. The right three columns show the original attention matrix ($\mathbf{A} $), the learned addition matrix ($\mathrm{Softplus} (\mathbf{B}) $), and the final attention matrix ($\mathrm{Norm} \left ( \mathbf{A}+\mathrm{Softplus} (\mathbf{B}) \right ) $). The final attention matrix, after incorporating the learned matrix, exhibits more prominent values and higher ranks (Weather: 21, ECL: 320).}
   \label{fig_itrans_attn}
\end{figure*}

\begin{figure*}[htbp]
   \centering
   \includegraphics[width=0.8\linewidth]{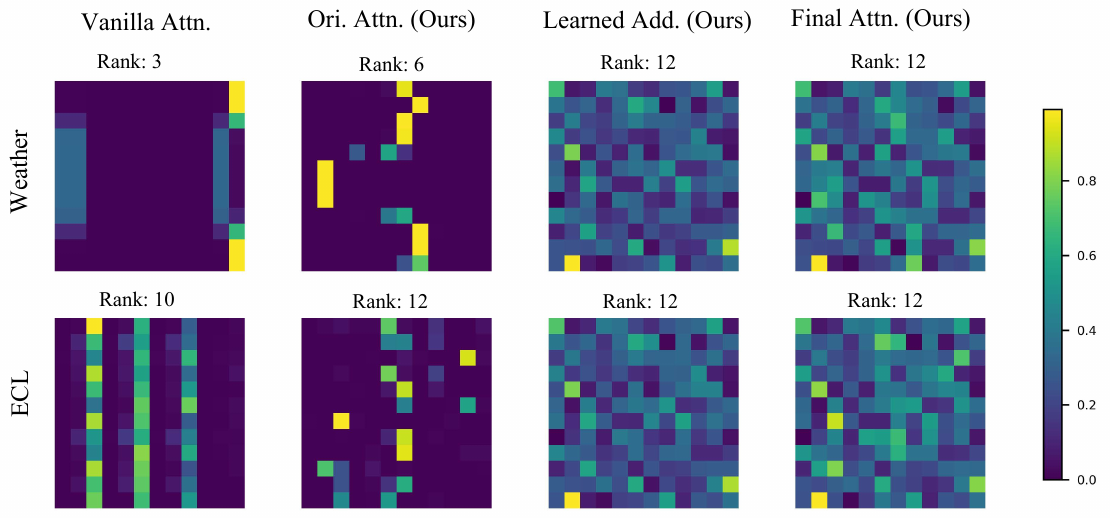}
   \caption{Attention matrices of \textbf{PatchTST} from vanilla and enhanced attention. The left column shows the low-rank attention matrix from the vanilla self-attention mechanism (Weather: 3, ECL: 10), with most entries near zero. The right three columns show the original attention matrix ($\mathbf{A} $), the learned addition matrix ($\mathrm{Softplus} (\mathbf{B}) $), and the final attention matrix ($\mathrm{Norm} \left ( \mathbf{A}+\mathrm{Softplus} (\mathbf{B}) \right ) $). The final attention matrix, after incorporating the learned matrix, exhibits more prominent values and higher ranks (Weather: 12, ECL: 12).}
   \label{fig_patchtst_attn}
\end{figure*}

\begin{figure*}[htbp]
   \centering
   \includegraphics[width=0.8\linewidth]{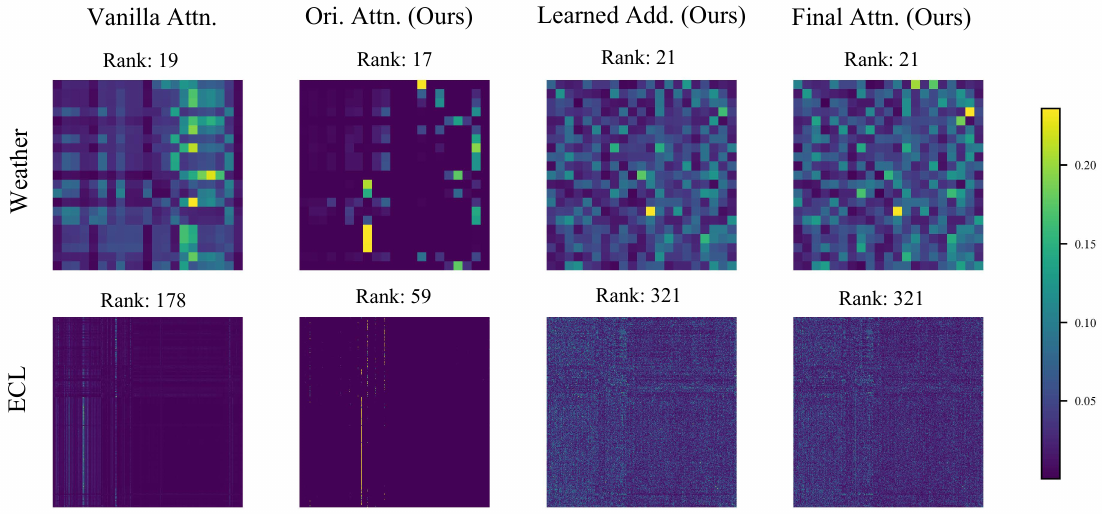}
   \caption{Attention matrices of \textbf{Leddam} from vanilla and enhanced attention. The left column shows the low-rank attention matrix from the vanilla self-attention mechanism (Weather: 19, ECL: 178), with most entries near zero. The right three columns show the original attention matrix ($\mathbf{A} $), the learned addition matrix ($\mathrm{Softplus} (\mathbf{B}) $), and the final attention matrix ($\mathrm{Norm} \left ( \mathbf{A}+\mathrm{Softplus} (\mathbf{B}) \right ) $). The final attention matrix, after incorporating the learned matrix, exhibits more prominent values and higher ranks (Weather: 21, ECL: 321).}
   \label{fig_leddam_attn}
\end{figure*}

\begin{figure*}[htbp]
   \centering
   \includegraphics[width=0.8\linewidth]{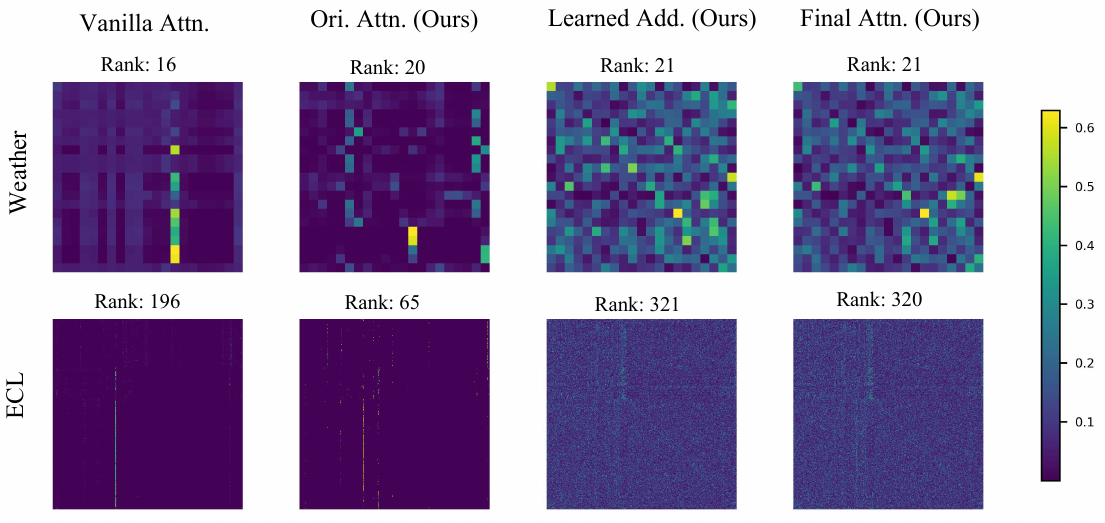}
   \caption{Attention matrices of \textbf{Fredformer} from vanilla and enhanced attention. The left column shows the low-rank attention matrix from the vanilla self-attention mechanism (Weather: 16, ECL: 196), with most entries near zero. The right three columns show the original attention matrix ($\mathbf{A} $), the learned addition matrix ($\mathrm{Softplus} (\mathbf{B}) $), and the final attention matrix ($\mathrm{Norm} \left ( \mathbf{A}+\mathrm{Softplus} (\mathbf{B}) \right ) $). The final attention matrix, after incorporating the learned matrix, exhibits more prominent values and higher ranks (Weather: 21, ECL: 320).}
   \label{fig_fredformer_attn}
\end{figure*}

\subsection{Full Results of Architecture Ablations}

\begin{table*}[t]
\begin{center}
{\fontsize{8}{9}\selectfont
\setlength{\tabcolsep}{8.5pt}
\begin{tabular}{ccc|c|cc|cc|cc|cc|cc}
\toprule
                         &                           &                          &                       & \multicolumn{2}{c|}{ETTm1}                                                    & \multicolumn{2}{c|}{Weather}                                                  & \multicolumn{2}{c|}{ECL}                                                      & \multicolumn{2}{c|}{Traffic}                                                  & \multicolumn{2}{c}{COVID-19}                                                  \\ \cmidrule(l){5-14} 
\multirow{-2}{*}{Layer}  & \multirow{-2}{*}{Dim.}    & \multirow{-2}{*}{Patch.} & \multirow{-2}{*}{$\tau$} & MSE                                   & MAE                                   & MSE                                   & MAE                                   & MSE                                   & MAE                                   & MSE                                   & MAE                                   & MSE                                   & MAE                                   \\ \midrule
                         &                           &                          & 96 / 3                   & 0.312                                 & 0.343                                 & 0.155                                 & 0.194                                 & 0.165                                 & 0.258                                 & 0.463                                 & 0.298                                 & 1.152                                 & 0.510                                 \\
                         &                           &                          & 192 / 6                   & 0.364                                 & 0.370                                 & 0.209                                 & 0.242                                 & 0.173                                 & 0.265                                 & 0.473                                 & 0.304                                 & 1.815                                 & 0.661                                 \\
                         &                           &                          & 336 / 9                   & 0.397                                 & {\color[HTML]{FF0000} \textbf{0.391}} & 0.268                                 & 0.287                                 & 0.190                                 & 0.283                                 & 0.488                                 & 0.310                                 & 2.374                                 & 0.798                                 \\
                         &                           &                          & 720 / 12                   & 0.466                                 & 0.429                                 & 0.347                                 & 0.338                                 & 0.230                                 & 0.315                                 & 0.527                                 & 0.331                                 & 2.819                                 & 0.927                                 \\ \cmidrule(l){4-14} 
\multirow{-5}{*}{Linear} & \multirow{-5}{*}{Variate} & \multirow{-5}{*}{\ding{55}}     & Avg                   & 0.385                                 & 0.383                                 & 0.245                                 & 0.265                                 & 0.189                                 & 0.280                                 & 0.488                                 & 0.311                                 & 2.040                                 & 0.724                                 \\ \midrule
                         &                           &                          & 96 / 3                    & 0.316                                 & 0.347                                 & 0.163                                 & 0.199                                 & 0.157                                 & 0.241                                 & 0.457                                 & 0.272                                 & 1.112                                 & 0.510                                 \\
                         &                           &                          & 192 / 6                   & 0.366                                 & 0.372                                 & 0.208                                 & 0.241                                 & 0.168                                 & 0.251                                 & 0.467                                 & 0.276                                 & 1.830                                 & 0.686                                 \\
                         &                           &                          & 336 / 9                   & 0.398                                 & 0.393                                 & 0.266                                 & 0.284                                 & 0.186                                 & 0.269                                 & 0.484                                 & 0.283                                 & 2.443                                 & 0.841                                 \\
                         &                           &                          & 720 / 12                   & 0.463                                 & 0.430                                 & 0.345                                 & 0.336                                 & 0.225                                 & 0.302                                 & 0.519                                 & 0.301                                 & 2.960                                 & 0.979                                 \\ \cmidrule(l){4-14} 
\multirow{-5}{*}{Linear} & \multirow{-5}{*}{Fre.}    & \multirow{-5}{*}{\ding{55}}     & Avg                   & 0.386                                 & 0.385                                 & 0.246                                 & 0.265                                 & 0.184                                 & 0.266                                 & 0.482                                 & 0.283                                 & 2.086                                 & 0.754                                 \\ \midrule
                         &                           &                          & 96 / 3                    & 0.308                                 & {\color[HTML]{FF0000} \textbf{0.340}} & 0.162                                 & 0.197                                 & 0.153                                 & 0.236                                 & 0.474                                 & 0.254                                 & 1.116                                 & 0.516                                 \\
                         &                           &                          & 192 / 6                   & 0.363                                 & 0.370                                 & 0.209                                 & 0.242                                 & 0.167                                 & 0.251                                 & 0.485                                 & 0.261                                 & 1.843                                 & 0.688                                 \\
                         &                           &                          & 336 / 9                   & 0.394                                 & {\color[HTML]{FF0000} \textbf{0.391}} & 0.263                                 & {\color[HTML]{FF0000} \textbf{0.282}} & 0.186                                 & 0.268                                 & 0.507                                 & 0.268                                 & 2.425                                 & 0.828                                 \\
                         &                           &                          & 720 / 12                   & 0.459                                 & {\color[HTML]{FF0000} \textbf{0.428}} & 0.343                                 & 0.336                                 & 0.225                                 & 0.300                                 & 0.550                                 & 0.286                                 & 3.019                                 & 0.997                                 \\ \cmidrule(l){4-14} 
\multirow{-5}{*}{Trans.} & \multirow{-5}{*}{Fre.}    & \multirow{-5}{*}{\ding{52}}    & Avg                   & 0.381                                 & 0.382                                 & 0.244                                 & 0.264                                 & 0.183                                 & 0.264                                 & 0.504                                 & 0.267                                 & 2.100                                 & 0.757                                 \\ \midrule
                         &                           &                          & 96 / 3                    & 0.309                                 & 0.342                                 & 0.162                                 & 0.199                                 & 0.154                                 & 0.239                                 & 0.465                                 & 0.271                                 & 1.118                                 & 0.506                                 \\
                         &                           &                          & 192 / 6                   & 0.366                                 & 0.371                                 & 0.209                                 & 0.243                                 & 0.166                                 & 0.250                                 & 0.473                                 & 0.277                                 & 1.872                                 & 0.702                                 \\
                         &                           &                          & 336 / 9                   & 0.396                                 & 0.393                                 & 0.266                                 & 0.286                                 & 0.183                                 & 0.267                                 & 0.493                                 & 0.283                                 & 2.445                                 & 0.835                                 \\
                         &                           &                          & 720 / 12                   & 0.461                                 & 0.430                                 & 0.342                                 & 0.336                                 & 0.223                                 & 0.301                                 & 0.527                                 & 0.301                                 & 3.028                                 & 0.985                                 \\ \cmidrule(l){4-14} 
\multirow{-5}{*}{Trans.} & \multirow{-5}{*}{Fre.}    & \multirow{-5}{*}{\ding{55}}     & Avg                   & 0.383                                 & 0.384                                 & 0.245                                 & 0.266                                 & 0.181                                 & 0.264                                 & 0.489                                 & 0.283                                 & 2.116                                 & 0.757                                 \\ \midrule
                         &                           &                          & 96 / 3                    & 0.315                                 & 0.349                                 & 0.157                                 & 0.195                                 & 0.134                                 & 0.225                                 & 0.408                                 & 0.241                                 & 1.101                                 & 0.513                                 \\
                         &                           &                          & 192 / 6                   & 0.361                                 & 0.372                                 & 0.204                                 & 0.241                                 & {\color[HTML]{FF0000} \textbf{0.152}} & 0.242                                 & {\color[HTML]{FF0000} \textbf{0.422}} & 0.252                                 & 1.812                                 & 0.692                                 \\
                         &                           &                          & 336 / 9                   & 0.400                                 & 0.399                                 & 0.265                                 & 0.287                                 & 0.166                                 & 0.257                                 & 0.450                                 & 0.262                                 & 2.356                                 & 0.821                                 \\
                         &                           &                          & 720 / 12                   & 0.463                                 & 0.435                                 & {\color[HTML]{FF0000} \textbf{0.339}} & 0.335                                 & {\color[HTML]{FF0000} \textbf{0.196}} & {\color[HTML]{FF0000} \textbf{0.284}} & 0.491                                 & 0.277                                 & 2.848                                 & 0.960                                 \\ \cmidrule(l){4-14} 
\multirow{-5}{*}{Trans.} & \multirow{-5}{*}{Variate} & \multirow{-5}{*}{\ding{52}}    & Avg                   & 0.385                                 & 0.389                                 & 0.241                                 & 0.264                                 & {\color[HTML]{FF0000} \textbf{0.162}} & 0.252                                 & 0.443                                 & 0.258                                 & 2.029                                 & 0.747                                 \\ \midrule
                         &                           &                          & 96 / 3                    & {\color[HTML]{FF0000} \textbf{0.306}} & {\color[HTML]{FF0000} \textbf{0.340}} & {\color[HTML]{FF0000} \textbf{0.153}} & {\color[HTML]{FF0000} \textbf{0.189}} & {\color[HTML]{FF0000} \textbf{0.133}} & {\color[HTML]{FF0000} \textbf{0.223}} & {\color[HTML]{FF0000} \textbf{0.395}} & {\color[HTML]{FF0000} \textbf{0.233}} & {\color[HTML]{FF0000} \textbf{1.094}} & {\color[HTML]{FF0000} \textbf{0.498}} \\
                         &                           &                          & 192 / 6                   & {\color[HTML]{FF0000} \textbf{0.359}} & {\color[HTML]{FF0000} \textbf{0.367}} & {\color[HTML]{FF0000} \textbf{0.201}} & {\color[HTML]{FF0000} \textbf{0.236}} & {\color[HTML]{FF0000} \textbf{0.152}} & {\color[HTML]{FF0000} \textbf{0.240}} & 0.423                                 & {\color[HTML]{FF0000} \textbf{0.245}} & {\color[HTML]{FF0000} \textbf{1.726}} & {\color[HTML]{FF0000} \textbf{0.624}} \\
                         &                           &                          & 336 / 9                   & {\color[HTML]{FF0000} \textbf{0.392}} & {\color[HTML]{FF0000} \textbf{0.391}} & {\color[HTML]{FF0000} \textbf{0.261}} & {\color[HTML]{FF0000} \textbf{0.282}} & {\color[HTML]{FF0000} \textbf{0.165}} & {\color[HTML]{FF0000} \textbf{0.256}} & {\color[HTML]{FF0000} \textbf{0.443}} & {\color[HTML]{FF0000} \textbf{0.254}} & {\color[HTML]{FF0000} \textbf{2.238}} & {\color[HTML]{FF0000} \textbf{0.741}} \\
                         &                           &                          & 720 / 12                   & {\color[HTML]{FF0000} \textbf{0.458}} & {\color[HTML]{FF0000} \textbf{0.428}} & 0.341                                 & {\color[HTML]{FF0000} \textbf{0.334}} & 0.198                                 & 0.286 & {\color[HTML]{FF0000} \textbf{0.480}} & {\color[HTML]{FF0000} \textbf{0.274}} & {\color[HTML]{FF0000} \textbf{2.509}} & {\color[HTML]{FF0000} \textbf{0.828}} \\ \cmidrule(l){4-14} 
\multirow{-5}{*}{Trans.} & \multirow{-5}{*}{Variate} & \multirow{-5}{*}{\ding{55}}     & Avg                   & {\color[HTML]{FF0000} \textbf{0.379}} & {\color[HTML]{FF0000} \textbf{0.381}} & {\color[HTML]{FF0000} \textbf{0.239}} & {\color[HTML]{FF0000} \textbf{0.260}} & {\color[HTML]{FF0000} \textbf{0.162}} & {\color[HTML]{FF0000} \textbf{0.251}} & {\color[HTML]{FF0000} \textbf{0.435}} & {\color[HTML]{FF0000} \textbf{0.251}} & {\color[HTML]{FF0000} \textbf{1.892}} & {\color[HTML]{FF0000} \textbf{0.673}} \\ \bottomrule
\end{tabular}
}
\caption{Complete results of architecture ablation studies on layers, dimensions, and patching settings. The layers include linear and Transformer layers, and the dimensions refer to frequency and variate dimensions. `Patch.' indicates patching along the frequency dimension. The patch length and stride are 6 and 3 for COVID-19, and 16 and 8 for other datasets. Prediction lengths are $\tau \in {3, 6, 9, 12}$ for COVID-19 and $\tau \in {96, 192, 336, 720}$ for other datasets. The final setting corresponds to the FreEformer configuration. This table is the extended version of Table 5 in the main paper.}
\label{tab_arch_abl_append}
\end{center}
\end{table*}

Table \ref{tab_arch_abl_append} is the full version of Table 5 in the main paper, showing complete results for the architecture ablations of FreEformer. FreEformer consistently outperforms its variants with alternative layer, dimension, and patching settings.

\subsection{Comparison With Other Attention Mechanisms}

\begin{table*}[t]
\begin{center}
{\fontsize{8}{9}\selectfont
\setlength{\tabcolsep}{5.5pt}
\begin{tabular}{@{}cc|cc|cc|cc|cc|cc|cc|cc|cc@{}}
\toprule
\multicolumn{2}{c}{Model}       & \multicolumn{2}{c}{\begin{tabular}[c]{@{}c@{}}Ours\end{tabular}}         & 
\multicolumn{2}{c}{\begin{tabular}[c]{@{}c@{}}Informer\\ \shortcite{informer2021} \end{tabular}}     & 
\multicolumn{2}{c}{\begin{tabular}[c]{@{}c@{}}Flowformer\\ \shortcite{flowformer} \end{tabular}} & 
\multicolumn{2}{c}{\begin{tabular}[c]{@{}c@{}}Flashformer\\ \shortcite{flashattention} \end{tabular}}  & 
\multicolumn{2}{c}{\begin{tabular}[c]{@{}c@{}}FLatten\\ \shortcite{flattentrans} \end{tabular}} & 
\multicolumn{2}{c}{\begin{tabular}[c]{@{}c@{}}Mamba\\ \shortcite{mamba} \end{tabular}}  & 
\multicolumn{2}{c}{\begin{tabular}[c]{@{}c@{}}LASER\\ \shortcite{laser}  \end{tabular}}        & 
\multicolumn{2}{c}{\begin{tabular}[c]{@{}c@{}}Lin.Attn.\\ \shortcite{linear_softmax}  \end{tabular}}    \\ \midrule
\multicolumn{2}{c|}{Metric}      & MSE                                   & MAE                                   & MSE                                   & MAE                                   & MSE                                 & MAE                                   & MSE                                   & MAE                                   & MSE                  & MAE                                               & MSE                                & MAE                                & MSE                                   & MAE                                   & MSE                                   & MAE                                   \\ \midrule
                          & 96  & {\color[HTML]{FF0000} \textbf{0.395}} & {\color[HTML]{FF0000} \textbf{0.233}} & 0.412                                 & 0.235                                 & 0.413                               & {\color[HTML]{0000FF} {\ul 0.234}}    & 0.445                                 & 0.273                                 & 0.409                & 0.235                                             & {\color[HTML]{0000FF} {\ul 0.408}} & 0.252                              & 0.410                                 & 0.235                                 & 0.415                                 & 0.235                                 \\
                          & 192 & {\color[HTML]{0000FF} {\ul 0.423}}    & {\color[HTML]{FF0000} \textbf{0.245}} & 0.441                                 & 0.247                                 & 0.437                               & {\color[HTML]{0000FF} {\ul 0.246}}    & {\color[HTML]{FF0000} \textbf{0.422}} & {\color[HTML]{FF0000} \textbf{0.245}} & 0.440                & 0.247                                             & 0.432                              & 0.264                              & 0.439                                 & 0.247                                 & 0.438                                 & {\color[HTML]{0000FF} {\ul 0.246}}    \\
                          & 336 & {\color[HTML]{FF0000} \textbf{0.443}} & {\color[HTML]{0000FF} {\ul 0.254}}    & 0.455                                 & 0.255                                 & 0.460                               & 0.256                                 & {\color[HTML]{0000FF} {\ul 0.446}}    & {\color[HTML]{FF0000} \textbf{0.252}} & 0.462                & {\color[HTML]{0000FF} {\ul 0.254}}                & 0.450                              & 0.272                              & 0.458                                 & {\color[HTML]{0000FF} {\ul 0.254}}    & 0.458                                 & {\color[HTML]{0000FF} {\ul 0.254}}    \\
                          & 720 & {\color[HTML]{FF0000} \textbf{0.480}} & 0.274                                 & 0.489                                 & 0.274                                 & 0.504                               & 0.275                                 & {\color[HTML]{0000FF} {\ul 0.481}}    & {\color[HTML]{0000FF} {\ul 0.272}}    & 0.504                & {\color[HTML]{0000FF} {\ul 0.272}}                & 0.484                              & 0.291                              & 0.498                                 & {\color[HTML]{FF0000} \textbf{0.271}} & 0.496                                 & {\color[HTML]{0000FF} {\ul 0.272}}    \\
\multirow{-5}{*}{\rotatebox[origin=c]{90}{Traffic}} & Avg & {\color[HTML]{FF0000} \textbf{0.435}} & {\color[HTML]{FF0000} \textbf{0.251}} & 0.449                                 & 0.253                                 & 0.453                               & 0.253                                 & 0.448                                 & 0.260                                 & 0.453                & {\color[HTML]{0000FF} {\ul 0.252}}                & {\color[HTML]{0000FF} {\ul 0.443}} & 0.270                              & 0.451                                 & {\color[HTML]{0000FF} {\ul 0.252}}    & 0.452                                 & {\color[HTML]{0000FF} {\ul 0.252}}    \\ \midrule
                          & 12  & {\color[HTML]{FF0000} \textbf{0.060}} & {\color[HTML]{FF0000} \textbf{0.160}} & {\color[HTML]{0000FF} {\ul 0.062}}    & {\color[HTML]{0000FF} {\ul 0.163}}    & 0.063                               & 0.164                                 & 0.064                                 & 0.165                                 & 0.064                & 0.165                                             & 0.065                              & 0.166                              & 0.064                                 & 0.166                                 & 0.065                                 & 0.166                                 \\
                          & 24  & {\color[HTML]{FF0000} \textbf{0.077}} & {\color[HTML]{FF0000} \textbf{0.181}} & {\color[HTML]{0000FF} {\ul 0.079}}    & {\color[HTML]{0000FF} {\ul 0.184}}    & 0.083                               & 0.189                                 & 0.084                                 & 0.190                                 & 0.085                & 0.190                                             & 0.086                              & 0.192                              & 0.084                                 & 0.190                                 & 0.084                                 & 0.189                                 \\
                          & 48  & {\color[HTML]{FF0000} \textbf{0.112}} & {\color[HTML]{FF0000} \textbf{0.218}} & {\color[HTML]{0000FF} {\ul 0.113}}    & {\color[HTML]{0000FF} {\ul 0.221}}    & 0.132                               & 0.238                                 & 0.130                                 & 0.236                                 & 0.127                & 0.234                                             & 0.127                              & 0.234                              & 0.122                                 & 0.230                                 & 0.130                                 & 0.235                                 \\
                          & 96  & {\color[HTML]{0000FF} {\ul 0.159}}    & {\color[HTML]{0000FF} {\ul 0.265}}    & {\color[HTML]{FF0000} \textbf{0.158}} & {\color[HTML]{FF0000} \textbf{0.263}} & 0.176                               & 0.280                                 & 0.179                                 & 0.283                                 & 0.181                & 0.284                                             & 0.182                              & 0.285                              & 0.174                                 & 0.279                                 & 0.182                                 & 0.284                                 \\
\multirow{-5}{*}{\rotatebox[origin=c]{90}{PEMS03}}  & Avg & {\color[HTML]{FF0000} \textbf{0.102}} & {\color[HTML]{FF0000} \textbf{0.206}} & {\color[HTML]{0000FF} {\ul 0.103}}    & {\color[HTML]{0000FF} {\ul 0.208}}    & 0.113                               & 0.218                                 & 0.114                                 & 0.218                                 & 0.114                & 0.218                                             & 0.115                              & 0.219                              & 0.111                                 & 0.216                                 & 0.115                                 & 0.218                                 \\ \midrule
                          & 96  & {\color[HTML]{0000FF} {\ul 0.153}}    & {\color[HTML]{FF0000} \textbf{0.189}} & {\color[HTML]{FF0000} \textbf{0.152}} & {\color[HTML]{0000FF} {\ul 0.191}}    & 0.156                               & {\color[HTML]{0000FF} {\ul 0.191}}    & 0.163                                 & 0.198                                 & 0.167                & 0.200                                             & 0.160                              & 0.197                              & 0.162                                 & 0.197                                 & 0.162                                 & 0.197                                 \\
                          & 192 & {\color[HTML]{FF0000} \textbf{0.201}} & {\color[HTML]{FF0000} \textbf{0.236}} & {\color[HTML]{0000FF} {\ul 0.204}}    & {\color[HTML]{0000FF} {\ul 0.238}}    & 0.209                               & 0.242                                 & 0.208                                 & 0.241                                 & 0.212                & 0.246                                             & 0.206                              & 0.241                              & 0.210                                 & 0.245                                 & 0.208                                 & 0.242                                 \\
                          & 336 & {\color[HTML]{FF0000} \textbf{0.261}} & {\color[HTML]{FF0000} \textbf{0.282}} & 0.268                                 & 0.285                                 & 0.263                               & 0.284                                 & 0.264                                 & {\color[HTML]{0000FF} {\ul 0.283}}    & 0.267                & 0.286                                             & {\color[HTML]{0000FF} {\ul 0.262}} & {\color[HTML]{0000FF} {\ul 0.283}} & 0.263                                 & {\color[HTML]{FF0000} \textbf{0.282}} & 0.266                                 & 0.285                                 \\
                          & 720 & {\color[HTML]{FF0000} \textbf{0.341}} & {\color[HTML]{FF0000} \textbf{0.334}} & 0.345                                 & 0.338                                 & {\color[HTML]{0000FF} {\ul 0.342}}  & {\color[HTML]{0000FF} {\ul 0.336}}    & 0.345                                 & 0.338                                 & 0.346                & 0.339                                             & 0.343                              & {\color[HTML]{0000FF} {\ul 0.336}} & 0.343                                 & 0.337                                 & 0.344                                 & 0.337                                 \\
\multirow{-5}{*}{\rotatebox[origin=c]{90}{Weather}} & Avg & {\color[HTML]{FF0000} \textbf{0.239}} & {\color[HTML]{FF0000} \textbf{0.260}} & {\color[HTML]{0000FF} {\ul 0.242}}    & {\color[HTML]{0000FF} {\ul 0.263}}    & {\color[HTML]{0000FF} {\ul 0.242}}  & {\color[HTML]{0000FF} {\ul 0.263}}    & 0.245                                 & 0.265                                 & 0.248                & 0.267                                             & 0.243                              & 0.264                              & 0.244                                 & 0.265                                 & 0.245                                 & 0.265                                 \\ \midrule
                          & 96  & {\color[HTML]{FF0000} \textbf{0.180}} & {\color[HTML]{FF0000} \textbf{0.191}} & 0.183                                 & 0.194                                 & 0.186                               & 0.195                                 & {\color[HTML]{0000FF} {\ul 0.182}}    & {\color[HTML]{0000FF} {\ul 0.193}}    & 0.200                & 0.200                                             & 0.189                              & 0.201                              & {\color[HTML]{FF0000} \textbf{0.180}} & 0.194                                 & 0.205                                 & 0.202                                 \\
                          & 192 & {\color[HTML]{FF0000} \textbf{0.213}} & {\color[HTML]{FF0000} \textbf{0.215}} & {\color[HTML]{0000FF} {\ul 0.214}}    & {\color[HTML]{0000FF} {\ul 0.216}}    & 0.221                               & 0.219                                 & 0.221                                 & 0.220                                 & 0.227                & 0.221                                             & 0.223                              & 0.224                              & 0.220                                 & 0.219                                 & 0.228                                 & 0.222                                 \\
                          & 336 & {\color[HTML]{FF0000} \textbf{0.233}} & {\color[HTML]{FF0000} \textbf{0.232}} & 0.237                                 & {\color[HTML]{0000FF} {\ul 0.234}}    & 0.243                               & 0.235                                 & {\color[HTML]{0000FF} {\ul 0.236}}    & 0.235                                 & 0.240                & 0.235                                             & 0.243                              & 0.239                              & 0.236                                 & 0.236                                 & 0.240                                 & 0.236                                 \\
                          & 720 & {\color[HTML]{FF0000} \textbf{0.241}} & {\color[HTML]{FF0000} \textbf{0.237}} & 0.243                                 & {\color[HTML]{0000FF} {\ul 0.239}}    & 0.248                               & 0.241                                 & 0.244                                 & 0.240                                 & 0.248                & 0.243                                             & 0.256                              & 0.245                              & {\color[HTML]{0000FF} {\ul 0.242}}    & 0.241                                 & 0.247                                 & 0.242                                 \\
\multirow{-5}{*}{\rotatebox[origin=c]{90}{Solar-Energy}}   & Avg & {\color[HTML]{FF0000} \textbf{0.217}} & {\color[HTML]{FF0000} \textbf{0.219}} & {\color[HTML]{0000FF} {\ul 0.219}}    & {\color[HTML]{0000FF} {\ul 0.221}}    & 0.224                               & 0.222                                 & 0.221                                 & 0.222                                 & 0.229                & 0.224                                             & 0.228                              & 0.227                              & {\color[HTML]{0000FF} {\ul 0.219}}    & 0.222                                 & 0.230                                 & 0.225                                 \\ \midrule
                          & 3   & {\color[HTML]{0000FF} {\ul 0.508}}    & 0.366                                 & 0.542                                 & {\color[HTML]{0000FF} {\ul 0.357}}    & 0.580                               & 0.394                                 & 0.596                                 & 0.417                                 & 0.614                & 0.385                                             & 0.615                              & 0.389                              & 0.590                                 & 0.402                                 & {\color[HTML]{FF0000} \textbf{0.447}} & {\color[HTML]{FF0000} \textbf{0.354}} \\
                          & 6   & {\color[HTML]{FF0000} \textbf{0.898}} & {\color[HTML]{0000FF} {\ul 0.522}}    & 1.239                                 & 0.540                                 & {\color[HTML]{0000FF} {\ul 0.959}}  & {\color[HTML]{FF0000} \textbf{0.515}} & 1.129                                 & 0.553                                 & 1.181                & 0.565                                             & 1.109                              & 0.567                              & 1.054                                 & 0.540                                 & 1.074                                 & 0.542                                 \\
                          & 9   & {\color[HTML]{FF0000} \textbf{1.277}} & {\color[HTML]{0000FF} {\ul 0.648}}    & {\color[HTML]{0000FF} {\ul 1.360}}    & {\color[HTML]{FF0000} \textbf{0.640}} & 1.519                               & 0.720                                 & 1.777                                 & 0.722                                 & 1.450                & 0.710                                             & 1.858                              & 0.779                              & 1.502                                 & 0.732                                 & 1.778                                 & 0.754                                 \\
                          & 12  & {\color[HTML]{FF0000} \textbf{1.876}} & {\color[HTML]{FF0000} \textbf{0.805}} & 2.850                                 & {\color[HTML]{0000FF} {\ul 0.807}}    & {\color[HTML]{0000FF} {\ul 2.096}}  & 0.816                                 & 2.686                                 & 0.870                                 & 4.125                & 0.949                                             & 2.450                              & 0.887                              & 3.238                                 & 0.860                                 & 2.513                                 & 0.845                                 \\
\multirow{-5}{*}{\rotatebox[origin=c]{90}{ILI}}     & Avg & {\color[HTML]{FF0000} \textbf{1.140}} & {\color[HTML]{FF0000} \textbf{0.585}} & 1.498                                 & {\color[HTML]{0000FF} {\ul 0.586}}    & {\color[HTML]{0000FF} {\ul 1.288}}  & 0.611                                 & 1.547                                 & 0.640                                 & 1.842                & 0.652                                             & 1.508                              & 0.655                              & 1.596                                 & 0.633                                 & 1.453                                 & 0.624                                 \\ \bottomrule
\end{tabular}
}
\caption{Comparison with state-of-the-art attention models using the FreEformer architecture. The best and second-best results are highlighted in {\color[HTML]{FF0000} \textbf{bold}} and {\color[HTML]{0000FF} {\ul underlined}}, respectively. The lookback length is set to 96, except for the ILI dataset, where it is set to 12.}

\label{tab_other_attn_appd}
\end{center}
\end{table*}

We further compare the enhanced attention with state-of-the-art attention models using the FreEformer architecture. As shown in Table \ref{tab_other_attn_appd}, the enhanced attention consistently outperforms other attention paradigms, despite requiring only minimal modifications to the vanilla attention mechanism and being easy to implement. Table \ref{tab_other_attn_appd} is the complete version of Table 8 in the main paper.

\section{More Experiments}

\subsection{Best Performance with Varying Lookback Lengths}

\begin{table*}[t]
\begin{center}
{\fontsize{7}{8}\selectfont
\setlength{\tabcolsep}{5pt}
\begin{tabular}{@{}cc|cc|cc|cc|cc|cc|cc|cc|cc|cc@{}}
\toprule
\multicolumn{2}{c|}{Model}      & \multicolumn{2}{c|}{FreEformer}                                               & \multicolumn{2}{c|}{Leddam}                & \multicolumn{2}{c|}{CARD}                                                     & \multicolumn{2}{c|}{Fredformer}               & \multicolumn{2}{c|}{iTrans.}          & \multicolumn{2}{c|}{TimeMixer}                                                & \multicolumn{2}{c|}{PatchTST}                                           & \multicolumn{2}{c|}{TimesNet} & \multicolumn{2}{c}{DLinear} \\ \midrule
\multicolumn{2}{c|}{Metric}     & MSE                                   & MAE                                   & MSE                                & MAE   & MSE                                   & MAE                                   & MSE                                   & MAE   & MSE                                & MAE   & MSE                                   & MAE                                   & MSE                                & MAE                                & MSE           & MAE           & MSE          & MAE          \\ \midrule
                          & 96  & {\color[HTML]{FF0000} \textbf{0.124}} & {\color[HTML]{FF0000} \textbf{0.214}} & 0.134                              & 0.227 & {\color[HTML]{0000FF} {\ul 0.129}}    & 0.223                                 & {\color[HTML]{0000FF} {\ul 0.129}}    & 0.226 & 0.132                              & 0.227 & {\color[HTML]{0000FF} {\ul 0.129}}    & 0.224                                 & {\color[HTML]{0000FF} {\ul 0.129}} & {\color[HTML]{0000FF} {\ul 0.222}} & 0.168         & 0.272         & 0.140        & 0.237        \\
                          & 192 & {\color[HTML]{0000FF} {\ul 0.142}}    & {\color[HTML]{0000FF} {\ul 0.232}}    & 0.156                              & 0.248 & 0.154                                 & 0.245                                 & 0.148                                 & 0.244 & 0.154                              & 0.251 & {\color[HTML]{FF0000} \textbf{0.140}} & {\color[HTML]{FF0000} \textbf{0.220}} & 0.147                              & 0.240                              & 0.184         & 0.322         & 0.153        & 0.249        \\
                          & 336 & {\color[HTML]{FF0000} \textbf{0.158}} & {\color[HTML]{FF0000} \textbf{0.248}} & 0.166                              & 0.264 & {\color[HTML]{0000FF} {\ul 0.161}}    & 0.257                                 & 0.165                                 & 0.262 & 0.170                              & 0.268 & {\color[HTML]{0000FF} {\ul 0.161}}    & {\color[HTML]{0000FF} {\ul 0.255}}    & 0.163                              & 0.259                              & 0.198         & 0.300         & 0.169        & 0.267        \\
                          & 720 & {\color[HTML]{FF0000} \textbf{0.185}} & {\color[HTML]{FF0000} \textbf{0.274}} & 0.195                              & 0.291 & {\color[HTML]{FF0000} \textbf{0.185}} & {\color[HTML]{0000FF} {\ul 0.278}}    & {\color[HTML]{0000FF} {\ul 0.193}}    & 0.286 & {\color[HTML]{0000FF} {\ul 0.193}} & 0.288 & 0.194                                 & 0.287                                 & 0.197                              & 0.290                              & 0.220         & 0.320         & 0.203        & 0.301        \\
\multirow{-5}{*}{\rotatebox[origin=c]{90}{ECL}}     & Avg & {\color[HTML]{FF0000} \textbf{0.152}} & {\color[HTML]{FF0000} \textbf{0.242}} & 0.163                              & 0.257 & 0.157                                 & 0.251                                 & 0.159                                 & 0.254 & 0.162                              & 0.258 & {\color[HTML]{0000FF} {\ul 0.156}}    & {\color[HTML]{0000FF} {\ul 0.247}}    & 0.159                              & 0.253                              & 0.193         & 0.304         & 0.166        & 0.264        \\ \midrule
                          & 96  & {\color[HTML]{FF0000} \textbf{0.341}} & {\color[HTML]{FF0000} \textbf{0.225}} & 0.366                              & 0.260 & {\color[HTML]{FF0000} \textbf{0.341}} & {\color[HTML]{0000FF} {\ul 0.229}}    & {\color[HTML]{0000FF} {\ul 0.358}}    & 0.257 & 0.359                              & 0.262 & 0.360                                 & 0.249                                 & 0.360                              & 0.249                              & 0.593         & 0.321         & 0.410        & 0.282        \\
                          & 192 & {\color[HTML]{0000FF} {\ul 0.369}}    & {\color[HTML]{FF0000} \textbf{0.234}} & 0.394                              & 0.270 & {\color[HTML]{FF0000} \textbf{0.367}} & {\color[HTML]{0000FF} {\ul 0.243}}    & 0.381                                 & 0.272 & 0.376                              & 0.270 & 0.375                                 & 0.250                                 & 0.379                              & 0.256                              & 0.617         & 0.336         & 0.423        & 0.287        \\
                          & 336 & {\color[HTML]{0000FF} {\ul 0.387}}    & {\color[HTML]{FF0000} \textbf{0.241}} & 0.400                              & 0.283 & 0.388                                 & {\color[HTML]{0000FF} {\ul 0.254}}    & 0.396                                 & 0.277 & 0.393                              & 0.279 & {\color[HTML]{FF0000} \textbf{0.385}} & 0.270                                 & 0.392                              & 0.264                              & 0.629         & 0.336         & 0.436        & 0.296        \\
                          & 720 & {\color[HTML]{FF0000} \textbf{0.424}} & {\color[HTML]{FF0000} \textbf{0.260}} & 0.442                              & 0.297 & {\color[HTML]{0000FF} {\ul 0.427}}    & {\color[HTML]{0000FF} {\ul 0.276}}    & {\color[HTML]{FF0000} \textbf{0.424}} & 0.296 & 0.434                              & 0.293 & 0.430                                 & 0.281                                 & 0.432                              & 0.286                              & 0.640         & 0.350         & 0.466        & 0.315        \\
\multirow{-5}{*}{\rotatebox[origin=c]{90}{Traffic}} & Avg & {\color[HTML]{FF0000} \textbf{0.380}} & {\color[HTML]{FF0000} \textbf{0.240}} & 0.400                              & 0.278 & {\color[HTML]{0000FF} {\ul 0.381}}    & {\color[HTML]{0000FF} {\ul 0.251}}    & 0.390                                 & 0.275 & 0.390                              & 0.276 & 0.388                                 & 0.263                                 & 0.391                              & 0.264                              & 0.620         & 0.336         & 0.434        & 0.295        \\ \midrule
                          & 96  & {\color[HTML]{FF0000} \textbf{0.144}} & {\color[HTML]{FF0000} \textbf{0.185}} & 0.149                              & 0.199 & {\color[HTML]{0000FF} {\ul 0.145}}    & {\color[HTML]{0000FF} {\ul 0.186}}    & 0.150                                 & 0.203 & 0.165                              & 0.214 & 0.147                                 & 0.197                                 & 0.149                              & 0.198                              & 0.172         & 0.220         & 0.176        & 0.237        \\
                          & 192 & 0.190                                 & {\color[HTML]{0000FF} {\ul 0.230}}    & 0.196                              & 0.243 & {\color[HTML]{FF0000} \textbf{0.187}} & {\color[HTML]{FF0000} \textbf{0.227}} & 0.194                                 & 0.246 & 0.208                              & 0.253 & {\color[HTML]{0000FF} {\ul 0.189}}    & 0.239                                 & 0.194                              & 0.241                              & 0.219         & 0.261         & 0.220        & 0.282        \\
                          & 336 & {\color[HTML]{FF0000} \textbf{0.238}} & {\color[HTML]{0000FF} {\ul 0.268}}    & 0.243                              & 0.280 & {\color[HTML]{FF0000} \textbf{0.238}} & {\color[HTML]{FF0000} \textbf{0.258}} & 0.243                                 & 0.284 & 0.257                              & 0.292 & {\color[HTML]{0000FF} {\ul 0.241}}    & 0.280                                 & 0.306                              & 0.282                              & 0.246         & 0.337         & 0.265        & 0.319        \\
                          & 720 & 0.319                                 & {\color[HTML]{0000FF} {\ul 0.325}}    & 0.321                              & 0.334 & {\color[HTML]{FF0000} \textbf{0.308}} & {\color[HTML]{FF0000} \textbf{0.321}} & {\color[HTML]{FF0000} \textbf{0.308}} & 0.333 & 0.331                              & 0.343 & {\color[HTML]{0000FF} {\ul 0.310}}    & 0.330                                 & 0.314                              & 0.334                              & 0.365         & 0.359         & 0.323        & 0.362        \\
\multirow{-5}{*}{\rotatebox[origin=c]{90}{Weather}} & Avg & 0.223                                 & {\color[HTML]{0000FF} {\ul 0.252}}    & 0.227                              & 0.264 & {\color[HTML]{FF0000} \textbf{0.220}} & {\color[HTML]{FF0000} \textbf{0.248}} & 0.224                                 & 0.266 & 0.240                              & 0.275 & {\color[HTML]{0000FF} {\ul 0.222}}    & 0.262                                 & 0.241                              & 0.264                              & 0.251         & 0.294         & 0.246        & 0.300        \\ \midrule
                          & 96  & {\color[HTML]{FF0000} \textbf{0.160}} & {\color[HTML]{FF0000} \textbf{0.196}} & 0.186                              & 0.242 & 0.170                                 & {\color[HTML]{0000FF} {\ul 0.207}}    & 0.187                                 & 0.236 & 0.190                              & 0.241 & {\color[HTML]{0000FF} {\ul 0.167}}    & 0.220                                 & 0.224                              & 0.278                              & 0.219         & 0.314         & 0.289        & 0.377        \\
                          & 192 & {\color[HTML]{0000FF} {\ul 0.190}}    & {\color[HTML]{FF0000} \textbf{0.212}} & 0.208                              & 0.262 & 0.192                                 & {\color[HTML]{0000FF} {\ul 0.219}}    & 0.196                                 & 0.251 & 0.233                              & 0.261 & {\color[HTML]{FF0000} \textbf{0.187}} & 0.249                                 & 0.253                              & 0.298                              & 0.231         & 0.322         & 0.319        & 0.397        \\
                          & 336 & {\color[HTML]{FF0000} \textbf{0.195}} & {\color[HTML]{FF0000} \textbf{0.219}} & 0.218                              & 0.265 & 0.226                                 & {\color[HTML]{0000FF} {\ul 0.233}}    & 0.208                                 & 0.265 & 0.226                              & 0.275 & {\color[HTML]{0000FF} {\ul 0.200}}    & 0.258                                 & 0.273                              & 0.306                              & 0.246         & 0.337         & 0.352        & 0.415        \\
                          & 720 & {\color[HTML]{FF0000} \textbf{0.206}} & {\color[HTML]{FF0000} \textbf{0.224}} & {\color[HTML]{0000FF} {\ul 0.208}} & 0.273 & 0.217                                 & {\color[HTML]{0000FF} {\ul 0.243}}    & 0.209                                 & 0.272 & 0.220                              & 0.282 & 0.215                                 & 0.250                                 & 0.272                              & 0.308                              & 0.280         & 0.363         & 0.356        & 0.412        \\
\multirow{-5}{*}{\rotatebox[origin=c]{90}{Solar-Energy}}   & Avg & {\color[HTML]{FF0000} \textbf{0.188}} & {\color[HTML]{FF0000} \textbf{0.213}} & 0.205                              & 0.261 & 0.201                                 & {\color[HTML]{0000FF} {\ul 0.225}}    & 0.200                                 & 0.256 & 0.217                              & 0.265 & {\color[HTML]{0000FF} {\ul 0.192}}    & 0.244                                 & 0.256                              & 0.298                              & 0.244         & 0.334         & 0.329        & 0.400        \\ \midrule
                          & 96  & {\color[HTML]{FF0000} \textbf{0.282}} & {\color[HTML]{FF0000} \textbf{0.332}} & 0.294                              & 0.347 & 0.288                                 & {\color[HTML]{FF0000} \textbf{0.332}} & {\color[HTML]{0000FF} {\ul 0.284}}    & 0.338 & 0.309                              & 0.357 & 0.291                                 & {\color[HTML]{0000FF} {\ul 0.340}}    & 0.293                              & 0.346                              & 0.338         & 0.375         & 0.299        & 0.343        \\
                          & 192 & {\color[HTML]{FF0000} \textbf{0.323}} & {\color[HTML]{0000FF} {\ul 0.358}}    & 0.336                              & 0.369 & 0.332                                 & {\color[HTML]{FF0000} \textbf{0.357}} & {\color[HTML]{FF0000} \textbf{0.323}} & 0.364 & 0.346                              & 0.383 & {\color[HTML]{0000FF} {\ul 0.327}}    & 0.365                                 & 0.333                              & 0.370                              & 0.374         & 0.387         & 0.335        & 0.365        \\
                          & 336 & {\color[HTML]{FF0000} \textbf{0.358}} & {\color[HTML]{0000FF} {\ul 0.378}}    & 0.364                              & 0.389 & 0.364                                 & {\color[HTML]{FF0000} \textbf{0.376}} & {\color[HTML]{FF0000} \textbf{0.358}} & 0.387 & 0.385                              & 0.410 & {\color[HTML]{0000FF} {\ul 0.360}}    & 0.381                                 & 0.369                              & 0.392                              & 0.410         & 0.411         & 0.369        & 0.386        \\
                          & 720 & {\color[HTML]{FF0000} \textbf{0.405}} & {\color[HTML]{0000FF} {\ul 0.413}}    & 0.421                              & 0.419 & {\color[HTML]{0000FF} {\ul 0.414}}    & {\color[HTML]{FF0000} \textbf{0.407}} & 0.420                                 & 0.417 & 0.440                              & 0.442 & 0.415                                 & 0.417                                 & 0.416                              & 0.420                              & 0.478         & 0.450         & 0.425        & 0.421        \\
\multirow{-5}{*}{\rotatebox[origin=c]{90}{ETTm1}}   & Avg & {\color[HTML]{FF0000} \textbf{0.342}} & {\color[HTML]{0000FF} {\ul 0.370}}    & 0.354                              & 0.381 & 0.350                                 & {\color[HTML]{FF0000} \textbf{0.368}} & {\color[HTML]{0000FF} {\ul 0.346}}    & 0.376 & 0.370                              & 0.398 & 0.348                                 & 0.376                                 & 0.353                              & 0.382                              & 0.400         & 0.406         & 0.357        & 0.379        \\ \bottomrule
\end{tabular}
}
\caption{Best performance using varying lookback lengths. The best and second-best results are highlighted in {\color[HTML]{FF0000} \textbf{bold}} and {\color[HTML]{0000FF} {\ul underlined}}, respectively. For FreEformer, the lookback length is chosen from \{96,192,336,720\}. }

\label{tab_variable_lookback_appd}
\end{center}
\end{table*}

We further evaluate forecasting performance under varying lookback lengths. Unlike the constant lookback length setting in the main paper, we explore optimal performance with lookback lengths selected from \{96, 192, 336, 720\}. As shown in Table \ref{tab_variable_lookback_appd}, FreEformer achieves state-of-the-art performance across various forecasting horizons and datasets under this setting, demonstrating its robust forecasting capabilities.

\subsection{More Metrics}

To provide a more comprehensive evaluation of forecasting models, we introduce the following scale-free metrics: Coefficient of Determination ($R^2$), Pearson Correlation Coefficient ($r$), and Mean Absolute Scaled Error (MASE). These metrics are calculated as follows:

\begin{equation}
\begin{aligned}
  &R^2=\frac{1}{N}\sum_{n=1}^{N}\left ( 1-\frac{\left | \hat{\mathbf{y}} _{n:}-\mathbf{y}_{n:} \right | ^2_2 }{\left | \mathbf{y} _{n:}-\overline{\mathbf{y}_{n:}}  \right | ^2_2}  \right )  , \\
  &r =  \frac{1}{N}\sum_{n=1}^{N}\frac{ {\textstyle \sum_{t=1}^{\tau}}  \left ( \hat{\mathbf{y}} _{n,t}-\overline{\hat{\mathbf{y}} _{n:}}  \right )\left ( \mathbf{y} _{n,t}-\overline{\mathbf{y} _{n:}}  \right )  }
{  \sqrt{ {\textstyle \sum_{t=1}^{\tau}} \left ( \hat{\mathbf{y}} _{n,t}-\overline{\hat{\mathbf{y}} _{n:}}  \right )^2} \sqrt{ {\textstyle \sum_{t=1}^{\tau}} \left ( \mathbf{y} _{n,t}-\overline{\mathbf{y} _{n:}}  \right )^2}     } ,   \\
  &\mathrm{MASE} = \frac{1}{N}\sum_{n=1}^{N}\frac{\frac{1}{\tau} \left \| \hat{\mathbf{y}} _{n:}-\mathbf{y}_{n:} \right \|_1  }{\frac{1}{\tau-1}  {\textstyle \sum_{t=1}^{\tau-1} \left | \mathbf{y} _{n,t}-\mathbf{y}_{t-1,n} \right | } }.
\end{aligned}
\label{more_metrics}
\end{equation} 

\noindent Here, $\mathbf{y} \in \mathbb{R}^{N \times \tau}$ and $\hat{\mathbf{y}} \in \mathbb{R}^{N \times \tau}$ denote the ground truth and predictions, respectively, and $\overline{(\cdot)}$ represents the mean value.

As shown in Table \ref{tab_more_metrics}, FreEformer outperforms state-of-the-art models on the ECL and Traffic datasets under these metrics, demonstrating its robustness.

\begin{table*}[ht]
\begin{center}
{\fontsize{8}{10}\selectfont
\setlength{\tabcolsep}{7pt}
\begin{tabular}{@{}cc|
ccc|
ccc|
ccc|
ccc@{}}
\toprule
\multicolumn{2}{c|}{Model}      & \multicolumn{3}{c|}{FreEformer}                                                                                       & \multicolumn{3}{c|}{Leddam}                                                                                  & \multicolumn{3}{c|}{CARD}                                                       & \multicolumn{3}{c}{iTrans.}                                                     \\ \midrule
\multicolumn{2}{c|}{Metric}      & \multicolumn{1}{c}{$R^2$ ($\uparrow$)}                & \multicolumn{1}{c}{r ($\uparrow$)}                 & \multicolumn{1}{c|}{MASE ($\downarrow$)}              & \multicolumn{1}{c}{$R^2$ ($\uparrow$)}             & \multicolumn{1}{c}{r ($\uparrow$)}              & \multicolumn{1}{c|}{MASE ($\downarrow$)}           & \multicolumn{1}{c}{$R^2$ ($\uparrow$)}             & \multicolumn{1}{c}{r ($\uparrow$)} & \multicolumn{1}{c|}{MASE ($\downarrow$)}           & \multicolumn{1}{c}{$R^2$ ($\uparrow$)}             & \multicolumn{1}{c}{r ($\uparrow$)}              & \multicolumn{1}{c}{MASE ($\downarrow$)} \\ \midrule
                          & 96  & {\color[HTML]{FF0000} \textbf{0.617}} & {\color[HTML]{FF0000} \textbf{0.915}} & {\color[HTML]{FF0000} \textbf{0.913}} & 0.527                              & {\color[HTML]{0000FF} {\ul 0.911}} & {\color[HTML]{0000FF} {\ul 0.964}} & 0.533                              & 0.905                 & 0.975                              & {\color[HTML]{0000FF} {\ul 0.550}} & 0.907                              & 0.990                    \\
                          & 192 & {\color[HTML]{FF0000} \textbf{0.623}} & {\color[HTML]{FF0000} \textbf{0.907}} & {\color[HTML]{FF0000} \textbf{0.990}} & {\color[HTML]{0000FF} {\ul 0.593}} & {\color[HTML]{0000FF} {\ul 0.902}} & 1.041                              & 0.547                              & 0.899                 & {\color[HTML]{0000FF} {\ul 1.028}} & 0.590                              & 0.900                              & 1.057                    \\
                          & 336 & {\color[HTML]{FF0000} \textbf{0.705}} & {\color[HTML]{FF0000} \textbf{0.899}} & {\color[HTML]{FF0000} \textbf{1.074}} & 0.670                              & {\color[HTML]{0000FF} {\ul 0.893}} & 1.134                              & {\color[HTML]{0000FF} {\ul 0.678}} & 0.891                 & {\color[HTML]{0000FF} {\ul 1.113}} & {\color[HTML]{0000FF} {\ul 0.678}} & {\color[HTML]{0000FF} {\ul 0.893}} & 1.133                    \\
                          & 720 & {\color[HTML]{FF0000} \textbf{0.686}} & {\color[HTML]{FF0000} \textbf{0.885}} & {\color[HTML]{FF0000} \textbf{1.202}} & {\color[HTML]{0000FF} {\ul 0.661}} & {\color[HTML]{0000FF} {\ul 0.882}} & {\color[HTML]{0000FF} {\ul 1.251}} & 0.653                              & 0.877                 & 1.273                              & 0.638                              & 0.872                              & 1.315                    \\
\multirow{-5}{*}{\rotatebox[origin=c]{90}{ECL}}     & Avg & {\color[HTML]{FF0000} \textbf{0.658}} & {\color[HTML]{FF0000} \textbf{0.901}} & {\color[HTML]{FF0000} \textbf{1.045}} & {\color[HTML]{0000FF} {\ul 0.613}} & {\color[HTML]{0000FF} {\ul 0.897}} & 1.098                              & 0.603                              & 0.893                 & {\color[HTML]{0000FF} {\ul 1.097}} & 0.614                              & 0.893                              & 1.124                    \\ \midrule
                          & 96  & {\color[HTML]{FF0000} \textbf{0.731}} & {\color[HTML]{FF0000} \textbf{0.895}} & {\color[HTML]{FF0000} \textbf{0.748}} & 0.681                              & 0.885                              & 0.909                              & {\color[HTML]{0000FF} {\ul 0.690}} & 0.882                 & {\color[HTML]{0000FF} {\ul 0.855}} & {\color[HTML]{0000FF} {\ul 0.690}} & {\color[HTML]{0000FF} {\ul 0.889}} & 0.860                    \\
                          & 192 & {\color[HTML]{FF0000} \textbf{0.713}} & {\color[HTML]{FF0000} \textbf{0.881}} & {\color[HTML]{FF0000} \textbf{0.772}} & 0.666                              & 0.865                              & 0.949                              & 0.683                              & 0.869                 & {\color[HTML]{0000FF} {\ul 0.856}} & {\color[HTML]{0000FF} {\ul 0.689}} & {\color[HTML]{0000FF} {\ul 0.877}} & 0.879                    \\
                          & 336 & {\color[HTML]{FF0000} \textbf{0.724}} & {\color[HTML]{FF0000} \textbf{0.872}} & {\color[HTML]{FF0000} \textbf{0.771}} & 0.707                              & {\color[HTML]{0000FF} {\ul 0.867}} & 0.897                              & 0.702                              & 0.862                 & {\color[HTML]{0000FF} {\ul 0.843}} & {\color[HTML]{0000FF} {\ul 0.713}} & {\color[HTML]{0000FF} {\ul 0.867}} & 0.883                    \\
                          & 720 & {\color[HTML]{FF0000} \textbf{0.698}} & {\color[HTML]{FF0000} \textbf{0.857}} & {\color[HTML]{FF0000} \textbf{0.824}} & 0.687                              & 0.853                              & 0.947                              & 0.679                              & 0.848                 & {\color[HTML]{0000FF} {\ul 0.894}} & {\color[HTML]{0000FF} {\ul 0.693}} & {\color[HTML]{0000FF} {\ul 0.857}} & 0.913                    \\
\multirow{-5}{*}{\rotatebox[origin=c]{90}{Traffic}} & Avg & {\color[HTML]{FF0000} \textbf{0.716}} & {\color[HTML]{FF0000} \textbf{0.876}} & {\color[HTML]{FF0000} \textbf{0.779}} & 0.685                              & 0.868                              & 0.926                              & 0.689                              & 0.865                 & {\color[HTML]{0000FF} {\ul 0.862}} & {\color[HTML]{0000FF} {\ul 0.696}} & {\color[HTML]{0000FF} {\ul 0.873}} & 0.884                    \\ \midrule
\multicolumn{2}{c|}{Model}     & \multicolumn{3}{c|}{TimeMixer}                                                                                        & \multicolumn{3}{c|}{DLinear}                                                                                 & \multicolumn{3}{c|}{PatchTST}                                                   & \multicolumn{3}{c}{TimesNet}                                                    \\ \midrule
\multicolumn{2}{c|}{Metric}      & \multicolumn{1}{c}{$R^2$ ($\uparrow$)}                & \multicolumn{1}{c}{r ($\uparrow$)}                 & \multicolumn{1}{c|}{MASE ($\downarrow$)}              & \multicolumn{1}{c}{$R^2$ ($\uparrow$)}             & \multicolumn{1}{c}{r ($\uparrow$)}              & \multicolumn{1}{c|}{MASE ($\downarrow$)}           & \multicolumn{1}{c}{$R^2$ ($\uparrow$)}             & \multicolumn{1}{c}{r ($\uparrow$)} & \multicolumn{1}{c|}{MASE ($\downarrow$)}           & \multicolumn{1}{c}{$R^2$ ($\uparrow$)}             & \multicolumn{1}{c}{r ($\uparrow$)}              & \multicolumn{1}{c}{MASE ($\downarrow$)} \\ \midrule
                          & 96  & 0.456                                 & 0.897                                 & 1.022                                 & 0.120                              & 0.870                              & 1.300                              & 0.426                              & 0.890                 & 1.119                              & 0.405                              & 0.892                              & 1.118                    \\
                          & 192 & 0.575                                 & 0.893                                 & 1.076                                 & 0.215                              & 0.867                              & 1.312                              & 0.509                              & 0.887                 & 1.140                              & 0.503                              & 0.885                              & 1.195                    \\
                          & 336 & 0.640                                 & 0.886                                 & 1.155                                 & 0.247                              & 0.858                              & 1.387                              & 0.604                              & 0.880                 & 1.216                              & 0.592                              & 0.875                              & 1.262                    \\
                          & 720 & 0.634                                 & 0.871                                 & 1.314                                 & 0.148                              & 0.844                              & 1.536                              & 0.605                              & 0.864                 & 1.371                              & 0.603                              & 0.864                              & 1.388                    \\
\multirow{-5}{*}{\rotatebox[origin=c]{90}{ECL}}     & Avg & 0.576                                 & 0.887                                 & 1.142                                 & 0.182                              & 0.860                              & 1.384                              & 0.536                              & 0.880                 & 1.211                              & 0.526                              & 0.879                              & 1.241                    \\ \midrule
                          & 96  & 0.640                                 & 0.888                                 & 0.979                                 & 0.441                              & 0.811                              & 1.346                              & 0.668                              & 0.880                 & 0.903                              & 0.625                              & 0.877                              & 1.023                    \\
                          & 192 & 0.660                                 & 0.862                                 & 0.959                                 & 0.551                              & 0.815                              & 1.189                              & 0.677                              & 0.871                 & 0.887                              & 0.635                              & 0.867                              & 1.009                    \\
                          & 336 & 0.685                                 & 0.852                                 & 0.960                                 & 0.582                              & 0.810                              & 1.167                              & 0.700                              & 0.863                 & 0.874                              & 0.654                              & 0.852                              & 1.079                    \\
                          & 720 & 0.672                                 & 0.842                                 & 0.986                                 & 0.564                              & 0.794                              & 1.224                              & 0.681                              & 0.850                 & 0.925                              & 0.661                              & 0.850                              & 1.047                    \\
\multirow{-5}{*}{\rotatebox[origin=c]{90}{Traffic}} & Avg & 0.664                                 & 0.861                                 & 0.971                                 & 0.535                              & 0.808                              & 1.232                              & 0.682                              & 0.866                 & 0.897                              & 0.644                              & 0.862                              & 1.040                    \\ \bottomrule
\end{tabular}

}
 \caption{Forecasting performance on the $R^2$, Correlation Coefficient ($r$), and MASE metrics. The best and second-best results are highlighted in {\color[HTML]{FF0000} \textbf{bold}} and {\color[HTML]{0000FF} {\ul underlined}}, respectively. The symbols $\downarrow$ and $\uparrow$ indicate that lower and higher values are preferable, respectively. The lookback length is uniformly set to 96.}
\label{tab_more_metrics}
\end{center}
\end{table*}

\subsection{Lookback Length}

\begin{figure}[H]
   \centering
   \includegraphics[width=1.0\linewidth]{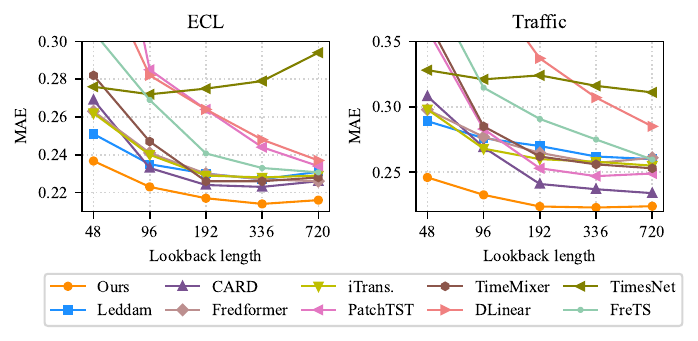}
   \caption{Performance comparison under various lookback lengths, with a fixed prediction horizon of 96.}
   \label{fig_lookback}
\end{figure}

Increasing the lookback length $T$ improves frequency resolution, which is inversely proportional to $T$ (specifically, $1/T$) \cite{signal_and_system}, leading to more refined frequency representations. Figure \ref{fig_lookback} illustrates the MAE changes under different lookback lengths. FreEformer exhibits improved performance with increasing lookback length, consistently outperforming the state-of-the-art forecasters across all lookback lengths on the ECL and Traffic datasets.

\subsection{Limited Training Data} 

\begin{figure}[H]
   \centering
   \includegraphics[width=1.0\linewidth]{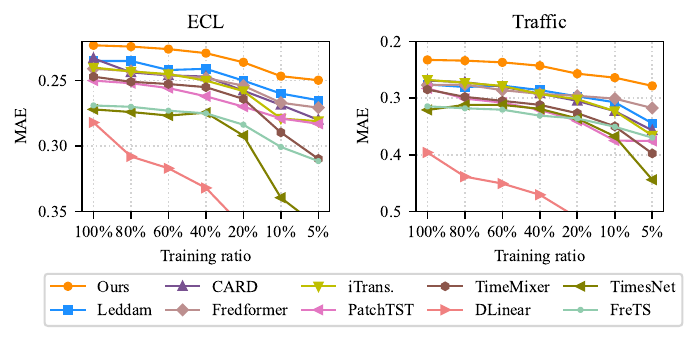}
   \caption{Performance comparison under various training ratios using the Input-96-Predict-96 setting. The y-axis is inverted for improved readability.}
   \label{fig7}
\end{figure}

Evaluating model performance with limited training data highlights a model's learning efficiency and robustness. As illustrated in Figure \ref{fig7}, FreEformer consistently outperforms state-of-the-art forecasters, including frequency-based models (FreTS and Fredformer), across all training ratios, from 100\% to as low as 5\%. This demonstrates the effectiveness of our frequency modeling approach, which explores multivariate correlations in the frequency spectrum, and the strength of the overall model design.

\subsection{Comparison with Large Pre-Trained Models}

\begin{table*}[ht]
\begin{center}
{\fontsize{8}{10}\selectfont
\setlength{\tabcolsep}{5pt}
\begin{tabular}{@{}c|cc|cc|cc|ccccc@{}}
\toprule
Train. Ratio & \multicolumn{2}{c|}{100\%}                                          & \multicolumn{2}{c|}{20\%}                                           & \multicolumn{2}{c|}{5\%}                                   & \multicolumn{5}{c}{0\% (Zero-shot)}                                                                                                                                                                                                                                                    \\ \midrule
Model        & Ours      
& \begin{tabular}[c]{@{}c@{}}Timer\\ \shortcite{timer} \end{tabular} 
& Ours           
& \begin{tabular}[c]{@{}c@{}}Timer\\ \shortcite{timer} \end{tabular} 
& Ours  
& \begin{tabular}[c]{@{}c@{}}Timer\\ \shortcite{timer}\end{tabular} 
& 
\begin{tabular}[c]{@{}c@{}}Timer-28B\\ \shortcite{timer} \end{tabular} 
& 
\begin{tabular}[c]{@{}c@{}}Moirai-L\\ \shortcite{Moirai} \end{tabular} 
& \begin{tabular}[c]{@{}c@{}}Moment\\ \shortcite{moment}\end{tabular} 
& \begin{tabular}[c]{@{}c@{}}TimesFM\\ \shortcite{google_timesfm}\end{tabular} & \begin{tabular}[c]{@{}c@{}}Chronos-S20\\ \shortcite{Chronos} \end{tabular} \\ \midrule
Pre-trained  & No             & UTSD-12G                                           & No             & UTSD-12G                                           & No    & UTSD-12G                                           & UTSD+LOTSA                                             & LOTSA                                                 & TS Pile                                             & N/A                                            & Public Data                                              \\ \midrule
Look-back    & Best           & 672                                                & Best           & 672                                                & Best  & 672                                                & 672                                                    & 512                                                   & 512                                                 & 512                                                  & 512                                                      \\ \midrule
ETTh1        & 0.370          & \textbf{0.358}                                     & 0.402          & \textbf{0.359}                                     & 0.427 & \textbf{0.362}                                     & 0.393                                                  & 0.394                                                 & 0.674                                               & 0.414                                                & 0.454                                                    \\
ETTh2        & 0.284          & --                                                 & 0.294          & \textbf{0.284}                                     & 0.315 & \textbf{0.280}                                     & 0.308                                                  & 0.293                                                 & 0.330                                               & 0.318                                                & 0.326                                                    \\
ETTm1        & 0.282          & --                                                 & \textbf{0.307} & 0.321                                              & 0.320 & \textbf{0.321}                                     & 0.420                                                  & 0.452                                                 & 0.670                                               & 0.354                                                & 0.451                                                    \\
ETTm2        & 0.165          & --                                                 & \textbf{0.168} & 0.187                                              & 0.178 & \textbf{0.176}                                     & 0.247                                                  & 0.214                                                 & 0.257                                               & 0.201                                                & 0.190                                                    \\
ECL          & 0.124          & \textbf{0.136}                                     & \textbf{0.133} & 0.134                                              & 0.145 & \textbf{0.132}                                     & 0.147                                                  & 0.155                                                 & 0.744                                               & -                                                    & -                                                        \\
Traffic      & \textbf{0.341} & 0.351                                              & 0.375          & \textbf{0.352}                                     & 0.425 & \textbf{0.361}                                     & 0.414                                                  & 0.399                                                 & 1.293                                               & -                                                    & -                                                        \\
Weather      & \textbf{0.144} & 0.154                                              & 0.161          & \textbf{0.151}                                     & 0.175 & \textbf{0.151}                                     & 0.243                                                  & 0.221                                                 & 0.255                                               & -                                                    & -                                                        \\
PEMS03       & \textbf{0.060} & 0.118                                              & 0.121          & \textbf{0.116}                                     & 0.232 & \textbf{0.125}                                     & -                                                      & -                                                     & -                                                   & -                                                    & -                                                        \\
PEMS04       & \textbf{0.068} & 0.107                                              & 0.201          & \textbf{0.120}                                     & 0.815 & \textbf{0.135}                                     & -                                                      & -                                                     & -                                                   & -                                                    & -                                                        \\ \bottomrule
\end{tabular}
}
 \caption{Comparison with pre-trained time series models. The better results are highlighted in bold. The MSE is reported and the prediction length is set to 96.}
\label{tab_pretrain}
\end{center}
\end{table*}

We further compare FreEformer's performance with existing pre-trained time series models under fine-tuning scenarios. Surprisingly, as shown in Table \ref{tab_pretrain}, FreEformer outperforms the pre-trained model Timer \cite{timer} on the Traffic, Weather, PEMS03, and PEMS04 datasets, even when Timer is fine-tuned using the full training set. Remarkably, with only 20\% of the training data, FreEformer achieves better performance than Timer on the ETTm1, ETTm2, and ECL datasets. These results provide several key insights: 1) The proposed model demonstrates robust forecasting performance even with limited training data; 2) Current pre-trained time series models still require extensive fine-tuning to achieve optimal downstream performance; 3) The scale and diversity of existing time series corpora for pre-training remain limited, and the development of pre-trained time series models is still in its early stages.

\subsection{Efficiency} 

As shown in Figure \ref{fig8}, FreEformer achieves a favorable trade-off between performance and training efficiency, thanks to its relatively simple frequency-modeling architecture. Detailed comparisons of resource consumption during the training and inference stages between FreEformer and state-of-the-art forecasters are provided in Table \ref{tab_gpu}.

\begin{figure*}[ht]
   \centering
   \includegraphics[width=1.0\linewidth]{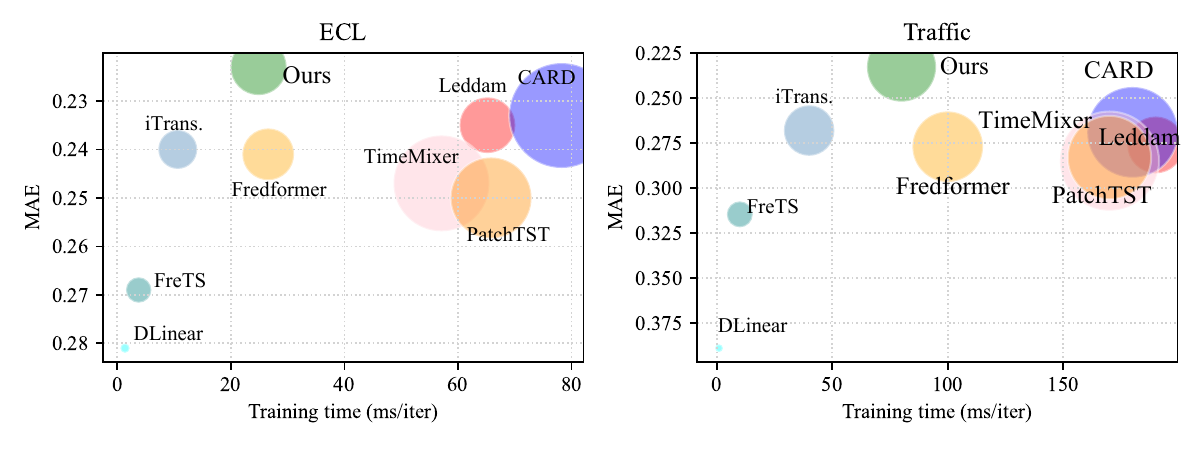}
   \caption{Comparison of model efficiency. Lookback and prediction lengths are both 96, with a fixed batch size of 16. The Y-axis is inverted for clarity. Bubble area is proportional to training GPU memory usage. For improved visual presentation, bubbles within each subfigure are scaled by a uniform factor.}
   \label{fig8}
\end{figure*}

\begin{table*}[ht]
\begin{center}
{\fontsize{8}{10}\selectfont
\setlength{\tabcolsep}{6pt}
\begin{tabular}{@{}cc|cccccccccc@{}}
\toprule
\multicolumn{2}{c|}{Models}                    & FreEformer & Leddam & CARD   & Fredformer & iTrans. & TimeMixer & DLinear & TimesNet & PatchTST & FreTS \\ \midrule
\multirow{6}{*}{\rotatebox[origin=c]{90}{ETTh1}}        & Params(M)      & 10.63      & 8.55   & 0.03   & 8.59       & 4.83    & 0.08      & 0.02    & 299.94   & 3.76     & 0.42  \\
                              & FLOPS(M)       & 61.51      & 111.54 & 1.41   & 135.01     & 33.99   & 10.72     & 0.13    & 289708   & 272.20   & 2.94  \\
                              & T.T. (ms/iter) & 23.33      & 23.54  & 29.10  & 26.48      & 10.11   & 27.99     & 1.42    & 501.98   & 8.37     & 3.63  \\
                              & T.M.(GB)       & 0.26       & 0.18   & 0.03   & 0.24       & 0.21    & 0.05      & 0.02    & 5.80     & 0.14     & 0.03  \\
                              & I.T.(ms/iter)  & 2.92       & 2.62   & 4.05   & 3.73       & 1.72    & 3.86      & 0.21    & 155.97   & 1.38     & 0.49  \\
                              & I.M. (MB)      & 179.80     & 51.04  & 11.02  & 78.04      & 156.00  & 18.06     & 8.48    & 1396.24  & 51.83    & 14.47 \\ \midrule
\multirow{6}{*}{\rotatebox[origin=c]{90}{ECL}}          & Params(M)      & 11.05      & 8.56   & 1.39   & 12.117     & 4.83    & 0.12      & 0.02    & 300.58   & 3.76     & 0.42  \\
                              & FLOPS(G)       & 3.23       & 5.32   & 5.07   & 5.55       & 1.87    & 1.74      & 0.01    & 293.98   & 12.48    & 0.13  \\
                              & T.T. (ms/iter) & 24.89      & 65.19  & 78.25  & 26.587     & 10.66   & 57.09     & 1.37    & 509.17   & 65.83    & 3.78  \\
                              & T.M.(GB)       & 1.46       & 1.44   & 5.08   & 1.24       & 0.70    & 4.24      & 0.04    & 5.82     & 3.02     & 0.30  \\
                              & I.T.(ms/iter)  & 6.42       & 14.78  & 28.21  & 7.08       & 3.46    & 22.79     & 0.20    & 157.13   & 20.52    & 0.48  \\
                              & I.M. (GB)      & 0.38       & 0.49   & 0.63   & 0.26       & 0.23    & 0.82      & 0.02    & 1.41     & 0.93     & 0.23  \\ \midrule
\multirow{6}{*}{\rotatebox[origin=c]{90}{Exchange}}     & Params(M)      & 10.63      & 8.56   & 1.39   & 8.59       & 4.83    & 0.08      & 0.02    & 299.94   & 3.76     & 0.42  \\
                              & FLOPS(M)       & 70.33      & 127.49 & 114.83 & 135.01     & 38.88   & 12.25     & 0.15    & 287909   & 311.08   & 3.36  \\
                              & T.T. (ms/iter) & 23.25      & 24.81  & 35.57  & 28.56      & 10.14   & 27.46     & 1.47    & 505.39   & 9.75     & 3.51  \\
                              & T.M.(GB)       & 0.26       & 0.18   & 0.17   & 0.24       & 0.21    & 0.06      & 0.02    & 5.80     & 0.15     & 0.03  \\
                              & I.T.(ms/iter)  & 2.91       & 2.62   & 6.02   & 6.86       & 1.70    & 3.81      & 0.21    & 156.33   & 2.47     & 0.48  \\
                              & I.M. (GB)      & 0.18       & 0.05   & 0.03   & 0.08       & 0.16    & 0.02      & 0.01    & 1.40     & 0.05     & 0.02  \\ \midrule
\multirow{6}{*}{\rotatebox[origin=c]{90}{Traffic}}      & Params(M)      & 13.61      & 8.56   & 0.98   & 11.09      & 4.83    & 0.12      & 0.02    & 301.69   & 3.76     & 0.42  \\
                              & FLOPS(G)       & 10.59      & 15.24  & 10.47  & 14.78      & 6.45    & 4.66      & 0.02    & 288.72   & 33.52    & 0.36  \\
                              & T.T. (s/iter)  & 0.08       & 0.19   & 0.18   & 0.10       & 0.04    & 0.17      & 0.001   & 0.50     & 0.17     & 0.01  \\
                              & T.M.(GB)       & 5.52       & 3.74   & 9.50   & 5.76       & 2.95    & 11.33     & 0.07    & 5.86     & 8.01     & 0.78  \\
                              & I.T.(ms/iter)  & 25.11      & 45.68  & 61.21  & 31.60      & 13.45   & 67.21     & 0.20    & 15.64    & 55.10    & 3.48  \\
                              & I.M. (GB)      & 1.51       & 1.25   & 1.69   & 1.84       & 1.27    & 2.20      & 0.04    & 1.42     & 2.44     & 0.59  \\ \midrule
\multirow{6}{*}{\rotatebox[origin=c]{90}{Weather}}      & Params(M)      & 10.64      & 8.56   & 0.03   & 0.50       & 4.83    & 0.10      & 0.02    & 299.97   & 3.76     & 0.42  \\
                              & FLOPS(G)       & 0.19       & 0.34   & 0.004  & 0.01       & 0.10    & 0.05      & 0.000   & 291.21   & 0.82     & 0.01  \\
                              & T.T. (ms/iter) & 23.25      & 27.55  & 28.69  & 37.85      & 9.45    & 33.30     & 1.38    & 505.88   & 9.89     & 3.53  \\
                              & T.M.(GB)       & 0.30       & 0.21   & 0.05   & 0.05       & 0.22    & 0.16      & 0.02    & 5.80     & 0.27     & 0.04  \\
                              & I.T.(ms/iter)  & 2.90       & 3.15   & 4.11   & 7.19       & 1.66    & 5.10      & 0.20    & 156.27   & 2.96     & 0.48  \\
                              & I.M. (MB)      & 63.96      & 70.06  & 14.45  & 18.28      & 34.88   & 36.03     & 9.06    & 1397.18  & 90.78    & 24.37 \\ \midrule
\multirow{6}{*}{\rotatebox[origin=c]{90}{Solar-Energy}} & Params(M)      & 10.71      & 8.56   & 1.39   & 4.61       & 4.83    & 0.12      & 0.02    & 300.21   & 3.76     & 0.42  \\
                              & FLOPS(G)       & 1.28       & 2.22   & 2.05   & 1.20       & 0.72    & 0.74      & 0.003   & 290.64   & 5.33     & 0.06  \\
                              & T.T. (ms/iter) & 23.35      & 48.64  & 36.28  & 31.77      & 9.97    & 35.90     & 1.44    & 506.58   & 28.89    & 3.60  \\
                              & T.M.(GB)       & 0.62       & 0.68   & 2.22   & 0.61       & 0.37    & 1.83      & 0.03    & 5.81     & 1.34     & 0.15  \\
                              & I.T.(ms/iter)  & 2.94       & 7.65   & 10.44  & 5.16       & 1.96    & 8.54      & 0.20    & 156.66   & 8.49     & 0.48  \\
                              & I.M. (GB)      & 0.24       & 0.23   & 0.28   & 0.16       & 0.18    & 0.36      & 0.01    & 1.40     & 0.42     & 0.11  \\ \midrule
\multirow{6}{*}{\rotatebox[origin=c]{90}{PEMS07}}       & Params(M)      & 13.76      & 8.56   & 1.39   & 13.44      & 4.83    & 0.12      & 0.02    & 301.73   & 3.76     & 0.42  \\
                              & FLOPS(G)       & 10.93      & 15.65  & 16.10  & 20.53      & 6.66    & 4.77      & 0.02    & 290.97   & 34.34    & 0.37  \\
                              & T.T. (s/iter)  & 0.11       & 0.19   & 0.28   & 0.11       & 0.04    & 0.17      & 0.002   & 0.51     & 0.18     & 0.01  \\
                              & T.M.(GB)       & 7.65       & 3.85   & 13.99  & 6.42       & 3.07    & 11.62     & 0.07    & 5.86     & 8.20     & 0.79  \\
                              & I.T.(ms/iter)  & 20.80      & 47.06  & 94.01  & 36.54      & 13.92   & 68.96     & 0.20    & 156.78   & 8.19     & 0.50  \\
                              & I.M. (GB)      & 1.95       & 1.28   & 1.74   & 2.03       & 1.32    & 2.25      & 0.04    & 1.43     & 2.50     & 0.61  \\ \bottomrule
\end{tabular}
}
\caption{Resource utilization of the FreEformer and state-of-the-art forecasters on various real-world datasets under the `\texttt{input-96-predict-96}' setting. The table reports the number of parameters, FLOPS, training time (T.T.), training memory usage (T.M.), inference time (I.T.), and inference memory usage (I.M.). FLOPS calculations are performed using the {\tt fvcore} library and the batch size is fixed as 16. }
\label{tab_gpu}
\end{center}
\end{table*}

\section{More Ablation Studies}

\subsection{On More Bases Beyond Fourier}

Fourier and wavelet bases are among the most commonly used orthogonal bases in signal processing. For the wavelet bases, we choose two types: the classic Haar and discrete Meyer wavelets. The approximation and detail coefficients are processed independently, similar to how the real and imaginary parts are handled in FreEformer. In addition to Fourier and wavelet bases, we also evaluate three polynomial bases: Legendre, Chebyshev, and Laguerre, which are defined as follows:

\begin{equation}
\begin{aligned}
\mathrm{Legendre} _{k}(t)&=\frac{1}{2^{k} k!} \frac{d^{k}}{d t^{k}}\left[\left(t^{2}-1\right)^{k}\right],\\
\mathrm{Chebyshev} _{k}(t)&=\mathrm{cos} \left (k \cdot  \mathrm{arccos} \left ( t \right )   \right ) ,\\
\mathrm{Laguerre} _{k}(t)&=e^t\frac{d^{k}}{d t^{k}}\left ( e^{-t}t^k \right ). 
\end{aligned}
\label{eq_bases}
\end{equation}

\noindent Here, $\mathrm{Legendre} _{k}(t)$, $\mathrm{Chebyshev} _{k}(t)$, and $\mathrm{Laguerre} _{k}(t)$ represent the $k$-th polynomial basis for Legendre, Chebyshev, and Laguerre bases, respectively. These bases are real-valued polynomials that are orthogonal over specific intervals, with the Chebyshev and Laguerre bases being orthogonal under certain weight functions. 

We replace Fourier bases with the aforementioned bases and present the results in Table \ref{tab_more_bases}. For comparison, we also include results without any transformation (denoted as `Identity'), which correspond to the temporal representation in Table 6 of the main paper. As shown in Table \ref{tab_more_bases}, Fourier bases generally outperform the other bases, achieving average MSE improvements of 8.4\%, 6.9\%, and 2.2\% compared to the Identity, polynomial, and wavelet bases, respectively. The wavelet bases also perform well, even slightly outperforming Fourier bases on the PEMS03 dataset. Exploring more robust basis functions is a promising direction for future research.

\begin{table*}[t]
\begin{center}
{
\setlength{\tabcolsep}{8pt}
\begin{tabular}{@{}c|c|cc|cc|cc|cc|cc@{}}
\toprule
                            &                           & \multicolumn{2}{c|}{Weather}                                                  & \multicolumn{2}{c|}{Traffic}                                                  & \multicolumn{2}{c|}{Solar}                                                    & \multicolumn{2}{c|}{PEMS03}                                                               & \multicolumn{2}{c}{ILI}                                                       \\ \cmidrule(l){3-12} 
\multirow{-2}{*}{Base}      & \multirow{-2}{*}{Horizon} & MSE                                   & MAE                                   & MSE                                   & MAE                                   & MSE                                   & MAE                                   & MSE                                         & MAE                                         & MSE                                   & MAE                                   \\ \midrule
                            & H1                        & 0.153                                 & 0.189                                 & 0.395                                 & 0.233                                 & 0.180                                 & 0.191                                 & 0.060                                       & 0.160                                       & 0.508                                 & 0.366                                 \\
                            & H2                        & 0.201                                 & 0.236                                 & 0.423                                 & 0.245                                 & 0.213                                 & 0.215                                 & 0.077                                       & 0.181                                       & 0.898                                 & 0.522                                 \\
                            & H3                        & 0.261                                 & 0.282                                 & 0.443                                 & 0.254                                 & 0.233                                 & 0.232                                 & 0.112                                       & 0.218                                       & 1.277                                 & 0.648                                 \\
                            & H4                        & 0.341                                 & 0.334                                 & 0.480                                 & 0.274                                 & 0.241                                 & 0.237                                 & 0.159                                       & 0.265                                       & 1.876                                 & 0.805                                 \\ \cmidrule(l){2-12} 
\multirow{-5}{*}{\begin{tabular}[c]{@{}c@{}}Fourier\\ (Ours)\end{tabular}}   & Avg                       & {\color[HTML]{FF0000} \textbf{0.239}} & {\color[HTML]{FF0000} \textbf{0.260}} & {\color[HTML]{FF0000} \textbf{0.435}} & {\color[HTML]{0000FF} {\ul 0.251}} & {\color[HTML]{FF0000} \textbf{0.217}} & {\color[HTML]{0000FF} {\ul 0.219}}    & 0.102          &  0.206          & {\color[HTML]{FF0000} \textbf{1.140}} & {\color[HTML]{FF0000} \textbf{0.585}} \\ \midrule
                            & H1                        & 0.158                                 & 0.193                                 & 0.391                                 & 0.233                                 & 0.193                                 & 0.205                                 & 0.061                                       & 0.161                                       & 0.601                                 & 0.390                                 \\
                            & H2                        & 0.204                                 & 0.239                                 & 0.424                                 & 0.246                                 & 0.228                                 & 0.227                                 & 0.080                                       & 0.184                                       & 0.960                                 & 0.533                                 \\
                            & H3                        & 0.266                                 & 0.283                                 & 0.450                                 & 0.255                                 & 0.254                                 & 0.244                                 & 0.118                                       & 0.221                                       & 1.390                                 & 0.670                                 \\
                            & H4                        & 0.345                                 & 0.335                                 & 0.489                                 & 0.274                                 & 0.263                                 & 0.250                                 & 0.171                                       & 0.271                                       & 1.752                                 & 0.790                                 \\ \cmidrule(l){2-12} 
\multirow{-5}{*}{Legendre}  & Avg                       & 0.243                                 & 0.263                                 & {\color[HTML]{0000FF} {\ul 0.438}}    & 0.252                                 & 0.234                                 & 0.231                                 & 0.108                                       & 0.209                                       & {\color[HTML]{0000FF} {\ul 1.176}}    & 0.596                                 \\ \midrule
                            & H1                        & 0.157                                 & 0.193                                 & 0.405                                 & 0.249                                 & 0.190                                 & 0.204                                 & 0.064                                       & 0.164                                       & 0.537                                 & 0.370                                 \\
                            & H2                        & 0.202                                 & 0.237                                 & 0.429                                 & 0.260                                 & 0.227                                 & 0.226                                 & 0.084                                       & 0.187                                       & 0.859                                 & 0.523                                 \\
                            & H3                        & 0.268                                 & 0.287                                 & 0.453                                 & 0.267                                 & 0.251                                 & 0.242                                 & 0.122                                       & 0.223                                       & 1.746                                 & 0.738                                 \\
                            & H4                        & 0.344                                 & 0.336                                 & 0.484                                 & 0.287                                 & 0.261                                 & 0.246                                 & 0.177                                       & 0.275                                       & 2.723                                 & 0.890                                 \\ \cmidrule(l){2-12} 
\multirow{-5}{*}{Chebyshev} & Avg                       & 0.243                                 & 0.263                                 & 0.443                                 & 0.265                                 & 0.232                                 & 0.229                                 & 0.112                                       & 0.212                                       & 1.466                                 & 0.630                                 \\ \midrule
                            & H1                        & 0.161                                 & 0.197                                 & 0.409                                 & 0.243                                 & 0.193                                 & 0.204                                 & 0.066                                       & 0.168                                       & 0.631                                 & 0.415                                 \\
                            & H2                        & 0.208                                 & 0.243                                 & 0.421                                 & 0.255                                 & 0.228                                 & 0.225                                 & 0.086                                       & 0.191                                       & 1.182                                 & 0.637                                 \\
                            & H3                        & 0.270                                 & 0.287                                 & 0.451                                 & 0.263                                 & 0.253                                 & 0.244                                 & 0.123                                       & 0.228                                       & 1.688                                 & 0.771                                 \\
                            & H4                        & 0.350                                 & 0.340                                 & 0.488                                 & 0.283                                 & 0.262                                 & 0.249                                 & 0.177                                       & 0.273                                       & 2.153                                 & 0.893                                 \\ \cmidrule(l){2-12} 
\multirow{-5}{*}{Laguerre}  & Avg                       & 0.247                                 & 0.267                                 & 0.442                                 & 0.261                                 & 0.234                                 & 0.230                                 & 0.113                                       & 0.215                                       & 1.413                                 & 0.679                                 \\ \midrule
                            & H1                        & 0.156                                 & 0.190                                 & 0.408                                 & 0.231                                 & 0.182                                 & 0.192                                 & 0.062                                       & 0.165                                       & 0.568                                 & 0.394                                 \\
                            & H2                        & 0.203                                 & 0.239                                 & 0.440                                 & 0.245                                 & 0.213                                 & 0.213                                 & 0.077                                       & 0.180                                       & 0.986                                 & 0.532                                 \\
                            & H3                        & 0.260                                 & 0.280                                 & 0.461                                 & 0.253                                 & 0.234                                 & 0.231                                 & 0.108                                       & 0.215                                       & 1.461                                 & 0.686                                 \\
                            & H4                        & 0.350                                 & 0.340                                 & 0.498                                 & 0.272                                 & 0.243                                 & 0.237                                 & 0.152                                       & 0.257                                       & 1.987                                 & 0.805                                 \\ \cmidrule(l){2-12} 
\multirow{-5}{*}{Wavelet1}  & Avg                       & {\color[HTML]{0000FF} {\ul 0.242}}    & {\color[HTML]{0000FF} {\ul 0.262}}    & 0.452                                 &  {\color[HTML]{FF0000} \textbf{0.250}}    & {\color[HTML]{0000FF} {\ul 0.218}}    & {\color[HTML]{FF0000} \textbf{0.218}} & {\color[HTML]{0000FF} {\ul 0.100}} & {\color[HTML]{0000FF} {\ul 0.204}} & 1.250                                 & 0.604                                 \\ \midrule
                            & H1                        & 0.156                                 & 0.191                                 & 0.409                                 & 0.232                                 & 0.183                                 & 0.196                                 & 0.061                                       & 0.159                                       & 0.486                                 & 0.358                                 \\
                            & H2                        & 0.206                                 & 0.239                                 & 0.439                                 & 0.245                                 & 0.216                                 & 0.219                                 & 0.077                                       & 0.180                                       & 0.942                                 & 0.521                                 \\
                            & H3                        & 0.272                                 & 0.286                                 & 0.463                                 & 0.254                                 & 0.238                                 & 0.235                                 & 0.105                                       & 0.213                                       & 1.390                                 & 0.668                                 \\
                            & H4                        & 0.345                                 & 0.336                                 & 0.503                                 & 0.274                                 & 0.248                                 & 0.240                                 & 0.150                                       & 0.257                                       & 2.024                                 & 0.827                                 \\ \cmidrule(l){2-12} 
\multirow{-5}{*}{Wavelet2}  & Avg                       & 0.244                                 & 0.263                                 & 0.453                                 & {\color[HTML]{0000FF} {\ul 0.251}}                                 & 0.221                                 & 0.222                                 & {\color[HTML]{FF0000} \textbf{0.098}}       & {\color[HTML]{FF0000} \textbf{0.202}}       & 1.210                                 & {\color[HTML]{0000FF} {\ul 0.593}}    \\ \midrule
                            & H1                        & 0.155                                 & 0.191                                 & 0.399                                 & 0.234                                 & 0.187                                 & 0.198                                 & 0.062                                       & 0.162                                       & 0.494                                 & 0.352                                 \\
                            & H2                        & 0.205                                 & 0.239                                 & 0.431                                 & 0.246                                 & 0.221                                 & 0.219                                 & 0.079                                       & 0.183                                       & 0.889                                 & 0.503                                 \\
                            & H3                        & 0.268                                 & 0.287                                 & 0.454                                 & 0.255                                 & 0.246                                 & 0.238                                 & 0.116                                       & 0.220                                       & 1.328                                 & 0.657                                 \\
                            & H4                        & 0.343                                 & 0.335                                 & 0.488                                 & 0.273                                 & 0.259                                 & 0.245                                 & 0.232                                       & 0.320                                       & 2.791                                 & 0.861                                 \\ \cmidrule(l){2-12} 
\multirow{-5}{*}{Identity}  & Avg                       & 0.243                                 & 0.263                                 & 0.443                                 & 0.252                                 & 0.228                                 & 0.225                                 & 0.122                                       & 0.221                                       & 1.375                                 & {\color[HTML]{0000FF} {\ul 0.593}}    \\ \bottomrule
\end{tabular}
}
\caption{Ablation studies on additional polynomial and wavelet bases. For `Wavelet1' and `Wavelet2', we employ the classic Haar and discrete Meyer wavelets, respectively. `Identity' indicates no transformation, corresponding to the temporal representation shown in Table 6 of the main paper. The lookback length $T$ and prediction horizons $\left(\mathrm{H}1, \mathrm{H}2, \mathrm{H}3, \mathrm{H}4\right)$ are configured as follows. Weather, Traffic, and Solar-Energy datasets: $T = 96$; $\left(\mathrm{H}1, \mathrm{H}2, \mathrm{H}3, \mathrm{H}4\right) = \left(96, 192, 336, 720\right)$. PEMS03 dataset: $T = 96$; $\left(\mathrm{H}1, \mathrm{H}2, \mathrm{H}3, \mathrm{H}4\right) = \left(12, 24, 48, 96\right)$. ILI dataset: $T = 12$; $\left(\mathrm{H}1, \mathrm{H}2, \mathrm{H}3, \mathrm{H}4\right) = \left(3, 6, 9, 12\right)$. For Legendre, Chebyshev, and Laguerre bases, we select the number of basis functions from the set $\{5, \mathrm{T}\}$ and report the better results.}
\label{tab_more_bases}
\end{center}
\end{table*}

\subsection{Module Ablations}

To verify the effectiveness of the frequency-domain representation and enhanced attention modules, we conduct ablation studies on these two components. As shown in Table \ref{tab_module_ablation}, both modules consistently improve performance. On these datasets, the frequency-domain representation yields an average improvement of \textbf{6.2\%}, while the enhanced attention mechanism provides an additional improvement of \textbf{4.9\%}.

\begin{table*}[t]
\begin{center}
{\fontsize{9}{10}\selectfont
\setlength{\tabcolsep}{5.35pt}
\begin{tabular}{@{}ccc|c|cc|cc|cc|cc|cc|cc@{}}
\toprule
                       &                        &                             &                           & \multicolumn{2}{c|}{Weather}                                                  & \multicolumn{2}{c|}{ECL}                                                      & \multicolumn{2}{c|}{Traffic}                                                  & \multicolumn{2}{c|}{Solar-Energy}                                             & \multicolumn{2}{c|}{PEMS03}                                                   & \multicolumn{2}{c}{ILI}                                                       \\ \cmidrule(l){5-16} 
\multirow{-2}{*}{Base} & \multirow{-2}{*}{Fre.} & \multirow{-2}{*}{Enh.Attn.} & \multirow{-2}{*}{Horizon} & MSE                                   & MAE                                   & MSE                                   & MAE                                   & MSE                                   & MAE                                   & MSE                                   & MAE                                   & MSE                                   & MAE                                   & MSE                                   & MAE                                   \\ \midrule
                       &                        &                             & H1                        & 0.171                                 & 0.202                                 & 0.139                                 & 0.228                                 & 0.394                                 & 0.238                                 & 0.189                                 & 0.199                                 & 0.065                                 & 0.168                                 & 0.667                                 & 0.380                                 \\
                       &                        &                             & H2                        & 0.208                                 & 0.240                                 & 0.157                                 & 0.245                                 & 0.429                                 & 0.251                                 & 0.221                                 & 0.220                                 & 0.090                                 & 0.197                                 & 1.155                                 & 0.551                                 \\
                       &                        &                             & H3                        & 0.263                                 & 0.282                                 & 0.174                                 & 0.261                                 & 0.455                                 & 0.256                                 & 0.241                                 & 0.237                                 & 0.139                                 & 0.245                                 & 2.190                                 & 0.769                                 \\
                       &                        &                             & H4                        & 0.349                                 & 0.338                                 & 0.197                                 & 0.283                                 & 0.488                                 & 0.273                                 & 0.252                                 & 0.243                                 & 0.291                                 & 0.363                                 & 4.547                                 & 0.975                                 \\ \cmidrule(l){4-16} 
\multirow{-5}{*}{\ding{52}}    & \multirow{-5}{*}{}     & \multirow{-5}{*}{}          & Avg                       & 0.248                                 & 0.266                                 & 0.167                                 & 0.254                                 & {\color[HTML]{0000FF} {\ul 0.441}}    & 0.254                                 & 0.226                                 & 0.225                                 & 0.146                                 & 0.243                                 & 2.140                                 & 0.669                                 \\ \midrule
                       &                        &                             & H1                        & 0.163                                 & 0.198                                 & 0.137                                 & 0.226                                 & 0.411                                 & 0.235                                 & 0.182                                 & 0.195                                 & 0.065                                 & 0.167                                 & 0.493                                 & 0.368                                 \\
                       &                        &                             & H2                        & 0.207                                 & 0.240                                 & 0.155                                 & 0.242                                 & 0.436                                 & 0.246                                 & 0.221                                 & 0.220                                 & 0.084                                 & 0.190                                 & 1.124                                 & 0.548                                 \\
                       &                        &                             & H3                        & 0.265                                 & 0.283                                 & 0.170                                 & 0.258                                 & 0.457                                 & 0.253                                 & 0.234                                 & 0.233                                 & 0.124                                 & 0.231                                 & 1.880                                 & 0.737                                 \\
                       &                        &                             & H4                        & 0.344                                 & 0.337                                 & 0.197                                 & 0.284                                 & 0.502                                 & 0.273                                 & 0.243                                 & 0.241                                 & 0.181                                 & 0.284                                 & 2.545                                 & 0.847                                 \\ \cmidrule(l){4-16} 
\multirow{-5}{*}{\ding{52}}    & \multirow{-5}{*}{\ding{52}}    & \multirow{-5}{*}{}          & Avg                       & {\color[HTML]{0000FF} {\ul 0.245}}    & {\color[HTML]{0000FF} {\ul 0.264}}    & {\color[HTML]{0000FF} {\ul 0.165}}    & {\color[HTML]{0000FF} {\ul 0.252}}    & 0.451                                 & {\color[HTML]{0000FF} {\ul 0.252}}    & {\color[HTML]{0000FF} {\ul 0.220}}    & {\color[HTML]{0000FF} {\ul 0.222}}    & {\color[HTML]{0000FF} {\ul 0.113}}    & {\color[HTML]{0000FF} {\ul 0.218}}    & {\color[HTML]{0000FF} {\ul 1.510}}    & {\color[HTML]{0000FF} {\ul 0.625}}    \\ \midrule
                       &                        &                             & H1                        & 0.153                                 & 0.189                                 & 0.133                                 & 0.223                                 & 0.395                                 & 0.233                                 & 0.180                                 & 0.191                                 & 0.060                                 & 0.160                                 & 0.508                                 & 0.366                                 \\
                       &                        &                             & H2                        & 0.201                                 & 0.236                                 & 0.152                                 & 0.240                                 & 0.423                                 & 0.245                                 & 0.213                                 & 0.215                                 & 0.077                                 & 0.181                                 & 0.898                                 & 0.522                                 \\
                       &                        &                             & H3                        & 0.261                                 & 0.282                                 & 0.165                                 & 0.256                                 & 0.443                                 & 0.254                                 & 0.233                                 & 0.232                                 & 0.112                                 & 0.218                                 & 1.277                                 & 0.648                                 \\
                       &                        &                             & H4                        & 0.341                                 & 0.334                                 & 0.198                                 & 0.286                                 & 0.480                                 & 0.274                                 & 0.241                                 & 0.237                                 & 0.159                                 & 0.265                                 & 1.876                                 & 0.805                                 \\ \cmidrule(l){4-16} 
\multirow{-5}{*}{\ding{52}}    & \multirow{-5}{*}{\ding{52}}    & \multirow{-5}{*}{\ding{52}}         & Avg                       & {\color[HTML]{FF0000} \textbf{0.239}} & {\color[HTML]{FF0000} \textbf{0.260}} & {\color[HTML]{FF0000} \textbf{0.162}} & {\color[HTML]{FF0000} \textbf{0.251}} & {\color[HTML]{FF0000} \textbf{0.435}} & {\color[HTML]{FF0000} \textbf{0.251}} & {\color[HTML]{FF0000} \textbf{0.217}} & {\color[HTML]{FF0000} \textbf{0.219}} & {\color[HTML]{FF0000} \textbf{0.102}} & {\color[HTML]{FF0000} \textbf{0.206}} & {\color[HTML]{FF0000} \textbf{1.140}} & {\color[HTML]{FF0000} \textbf{0.585}} \\ \bottomrule
\end{tabular}
}
\caption{Ablation studies on the frequency-domain representation and enhanced attention modules. The base model refers to FreEformer without the frequency-domain representation and using vanilla attention. The lookback length $T$ and prediction horizons $\left(\mathrm{H}1, \mathrm{H}2, \mathrm{H}3, \mathrm{H}4\right)$ are configured as follows: Weather, ECL, Traffic, and Solar-Energy datasets: $T = 96$, $\left(\mathrm{H}1, \mathrm{H}2, \mathrm{H}3, \mathrm{H}4\right) = \left(96, 192, 336, 720\right)$; PEMS03 dataset: $T = 96$, $\left(\mathrm{H}1, \mathrm{H}2, \mathrm{H}3, \mathrm{H}4\right) = \left(12, 24, 48, 96\right)$; ILI dataset: $T = 12$, $\left(\mathrm{H}1, \mathrm{H}2, \mathrm{H}3, \mathrm{H}4\right) = \left(3, 6, 9, 12\right)$. This table provides a comprehensive version of Table 6 in the main paper.}
\label{tab_module_ablation}
\end{center}
\end{table*}

\subsection{Variants of Enhanced Attention}

\begin{table*}[t]
\begin{center}
{\fontsize{8}{9}\selectfont


}
\caption{Full results of vanilla self-attention and enhanced attention variants for short-term forecasting. The formulas of these variants are presented in Table \ref{tab_var_formula}. For the prediction length $\tau  \in \left \{ 3,6,9,12 \right \}$, the lookback length is 12; for the prediction length $\tau  \in \left \{ 24,36,48,60 \right \}$, the lookback length is 36.}

\label{tab_attn_var_short_appd}
\end{center}
\end{table*}

Tables \ref{tab_attn_var_appd}, \ref{tab_attn_var_pems_appd}, and \ref{tab_attn_var_short_appd} compare vanilla attention with enhanced attention variants for both long- and short-term time series forecasting. These enhanced variants consistently outperform vanilla attention across various datasets and prediction settings. Among them, the enhanced attention adopted in the main paper achieves the best performance. Notably, for datasets with limited size, the enhanced attention, despite having more learnable parameters, still achieves better or comparable performance compared to vanilla attention, demonstrating its adaptability to limited data. The performance superiority can be attributed to the following factors:

\begin{itemize}
    \item As analyzed in Section \ref{C.2}, the addition operation increases the rank of the final attention matrix, enriching representation diversity;
    \item The Softplus function applied to $\mathbf{B}$, combined with L1 normalization, offers greater flexibility compared to the Softmax function used in other variants;
    \item Its Jacobian matrix $\partial \mathbf{c}/\partial \mathbf{a}$ (Equation \ref{eq28}) retains the same structure as vanilla attention (Equation \ref{eq22}), while the additional learnable term introduces greater flexibility to the gradient flow of the original branch.
\end{itemize}

\subsection{Applying Weighted L1 Loss Function to Other Forecasters} 

In the training phase of FreEformer, we adopt the weighted L1 loss function from CARD \cite{card} $\mathcal{L}=\frac{1}{\tau} \sum_{t=1}^{\tau} t^{-\alpha} \left \| \mathbf{\hat{y}}_{:t}-\mathbf{y}_{:t} \right \|_1$, where $\mathbf{\hat{y}}_{:t},\mathbf{y}_{:t}\in \mathbb{R}^N $ represent the values of the prediction $\mathbf{\hat{y}}$ and ground truth $\mathbf{y}$ at timestamp $t$. We set $\alpha$ to 0.5 in our experiments. This weighting scheme assigns lower weights to predictions further into the future to reflect the increasing uncertainty.

For a fair comparison, this loss function is also applied to other forecasters. FreEformer continues to outperform these state-of-the-art forecasters, demonstrating that its performance superiority arises primarily from its architectural design rather than the loss function.


\begin{table*}[ht]
\begin{center}
{\fontsize{7}{9}\selectfont
\setlength{\tabcolsep}{4pt}
\begin{tabular}{@{}cc|cc|cc|cc|cc|cc|cc|cc|cc|cc|cc@{}}
\toprule
\multicolumn{2}{c}{Model}            & \multicolumn{2}{c}{Ours}                                                      & \multicolumn{2}{c}{Leddam}                                              & \multicolumn{2}{c}{CARD}                                                      & \multicolumn{2}{c}{Fredformer}                                             & \multicolumn{2}{c}{iTrans.}           & \multicolumn{2}{c}{TimeMixer} & \multicolumn{2}{c}{PatchTST} & \multicolumn{2}{c}{DLinear} & \multicolumn{2}{c}{TimesNet} & \multicolumn{2}{c}{FreTS}                                                     \\ \midrule
\multicolumn{2}{c|}{Metric}           & MSE                                   & MAE                                   & MSE                                & MAE                                & MSE                                   & MAE                                   & MSE                                   & MAE                                & MSE   & MAE                                & MSE            & MAE          & MSE           & MAE          & MSE          & MAE          & MSE           & MAE          & MSE                                   & MAE                                   \\ \midrule
                               & 96  & {\color[HTML]{FF0000} \textbf{0.133}} & {\color[HTML]{FF0000} \textbf{0.223}} & {\color[HTML]{0000FF} {\ul 0.137}} & {\color[HTML]{0000FF} {\ul 0.228}} & 0.141                                 & 0.233                                 & 0.145                                 & 0.232                              & 0.147 & 0.233                              & 0.161          & 0.251        & 0.174         & 0.251        & 0.198        & 0.271        & 0.167         & 0.265        & 0.179                                 & 0.255                                 \\
                               & 192 & {\color[HTML]{FF0000} \textbf{0.152}} & {\color[HTML]{FF0000} \textbf{0.240}} & {\color[HTML]{0000FF} {\ul 0.158}} & 0.248                              & 0.160                                 & 0.250                                 & 0.162                                 & 0.248                              & 0.162 & {\color[HTML]{0000FF} {\ul 0.247}} & 0..174         & 0.262        & 0.182         & 0.260        & 0.197        & 0.274        & 0.182         & 0.279        & 0.182                                 & 0.261                                 \\
                               & 336 & {\color[HTML]{FF0000} \textbf{0.165}} & {\color[HTML]{FF0000} \textbf{0.256}} & {\color[HTML]{0000FF} {\ul 0.172}} & {\color[HTML]{0000FF} {\ul 0.263}} & 0.173                                 & {\color[HTML]{0000FF} {\ul 0.263}}    & 0.175                                 & 0.264                              & 0.179 & 0.266                              & 0.189          & 0.276        & 0.197         & 0.275        & 0.209        & 0.289        & 0.207         & 0.305        & 0.197                                 & 0.277                                 \\
                               & 720 & {\color[HTML]{0000FF} {\ul 0.198}}    & {\color[HTML]{0000FF} {\ul 0.286}}    & 0.203                              & 0.289                              & {\color[HTML]{FF0000} \textbf{0.197}} & {\color[HTML]{FF0000} \textbf{0.284}} & 0.205                                 & 0.289                              & 0.207 & 0.289                              & 0.227          & 0.304        & 0.236         & 0.308        & 0.244        & 0.321        & 0.240         & 0.327        & 0.234                                 & 0.311                                 \\
\multirow{-5}{*}{\rotatebox[origin=c]{90}{ECL}}          & Avg & {\color[HTML]{FF0000} \textbf{0.162}} & {\color[HTML]{FF0000} \textbf{0.251}} & {\color[HTML]{0000FF} {\ul 0.167}} & {\color[HTML]{0000FF} {\ul 0.257}} & 0.168                                 & 0.258                                 & 0.172                                 & 0.258                              & 0.174 & 0.259                              & 0.192          & 0.273        & 0.197         & 0.273        & 0.212        & 0.289        & 0.199         & 0.294        & 0.198                                 & 0.276                                 \\ \midrule
                               & 96  & {\color[HTML]{FF0000} \textbf{0.395}} & {\color[HTML]{FF0000} \textbf{0.233}} & 0.438                              & 0.258                              & 0.419                                 & 0.269                                 & {\color[HTML]{0000FF} {\ul 0.405}}    & {\color[HTML]{0000FF} {\ul 0.252}} & 0.407 & 0.255                              & 0.470          & 0.273        & 0.489         & 0.286        & 0.662        & 0.370        & 0.607         & 0.301        & 0.502                                 & 0.299                                 \\
                               & 192 & {\color[HTML]{FF0000} \textbf{0.423}} & {\color[HTML]{FF0000} \textbf{0.245}} & 0.446                              & {\color[HTML]{0000FF} {\ul 0.262}} & 0.443                                 & 0.276                                 & {\color[HTML]{0000FF} {\ul 0.428}}    & 0.264                              & 0.431 & 0.266                              & 0.481          & 0.282        & 0.492         & 0.287        & 0.614        & 0.347        & 0.632         & 0.316        & 0.498                                 & 0.297                                 \\
                               & 336 & {\color[HTML]{0000FF} {\ul 0.443}}    & {\color[HTML]{FF0000} \textbf{0.254}} & 0.465                              & 0.268                              & 0.460                                 & 0.283                                 & {\color[HTML]{FF0000} \textbf{0.437}} & {\color[HTML]{0000FF} {\ul 0.262}} & 0.449 & 0.274                              & 0.500          & 0.294        & 0.506         & 0.292        & 0.618        & 0.349        & 0.652         & 0.328        & 0.516                                 & 0.302                                 \\
                               & 720 & {\color[HTML]{0000FF} {\ul 0.480}}    & {\color[HTML]{FF0000} \textbf{0.274}} & 0.507                              & {\color[HTML]{0000FF} {\ul 0.281}} & 0.490                                 & 0.299                                 & {\color[HTML]{FF0000} \textbf{0.476}} & 0.287                              & 0.481 & 0.293                              & 0.544          & 0.310        & 0.542         & 0.310        & 0.651        & 0.369        & 0.686         & 0.343        & 0.555                                 & 0.320                                 \\
\multirow{-5}{*}{\rotatebox[origin=c]{90}{Traffic}}      & Avg & {\color[HTML]{FF0000} \textbf{0.435}} & {\color[HTML]{FF0000} \textbf{0.251}} & 0.464                              & 0.267                              & 0.453                                 & 0.282                                 & {\color[HTML]{0000FF} {\ul 0.436}}    & {\color[HTML]{0000FF} {\ul 0.266}} & 0.442 & 0.272                              & 0.499          & 0.290        & 0.507         & 0.294        & 0.636        & 0.359        & 0.644         & 0.322        & 0.518                                 & 0.304                                 \\ \midrule
                               & 96  & {\color[HTML]{FF0000} \textbf{0.180}} & {\color[HTML]{FF0000} \textbf{0.191}} & {\color[HTML]{0000FF} {\ul 0.191}} & 0.201                              & 0.197                                 & 0.211                                 & 0.192                                 & 0.205                              & 0.193 & 0.202                              & 0.196          & 0.213        & 0.207         & 0.214        & 0.287        & 0.297        & 0.224         & 0.225        & {\color[HTML]{FF0000} \textbf{0.180}} & {\color[HTML]{0000FF} {\ul 0.192}}    \\
                               & 192 & {\color[HTML]{FF0000} \textbf{0.213}} & {\color[HTML]{FF0000} \textbf{0.215}} & 0.223                              & 0.221                              & 0.234                                 & 0.234                                 & 0.240                                 & 0.235                              & 0.232 & 0.228                              & 0.230          & 0.236        & 0.243         & 0.234        & 0.318        & 0.313        & 0.270         & 0.254        & {\color[HTML]{0000FF} {\ul 0.216}}    & {\color[HTML]{0000FF} {\ul 0.216}}    \\
                               & 336 & {\color[HTML]{0000FF} {\ul 0.233}}    & {\color[HTML]{0000FF} {\ul 0.232}}    & 0.262                              & 0.253                              & 0.256                                 & 0.250                                 & 0.244                                 & 0.244                              & 0.244 & 0.242                              & 0.256          & 0.253        & 0.272         & 0.251        & 0.364        & 0.328        & 0.279         & 0.266        & {\color[HTML]{FF0000} \textbf{0.232}} & {\color[HTML]{FF0000} \textbf{0.231}} \\
                               & 720 & {\color[HTML]{FF0000} \textbf{0.241}} & {\color[HTML]{FF0000} \textbf{0.237}} & 0.249                              & 0.246                              & 0.260                                 & 0.254                                 & 0.257                                 & 0.253                              & 0.253 & 0.247                              & 0.262          & 0.255        & 0.273         & 0.250        & 0.374        & 0.322        & 0.293         & 0.273        & {\color[HTML]{0000FF} {\ul 0.243}}    & {\color[HTML]{0000FF} {\ul 0.238}}    \\
\multirow{-5}{*}{\rotatebox[origin=c]{90}{Solar-Energy}} & Avg & {\color[HTML]{FF0000} \textbf{0.217}} & {\color[HTML]{FF0000} \textbf{0.219}} & 0.231                              & {\color[HTML]{0000FF} {\ul 0.230}} & 0.237                                 & 0.237                                 & 0.233                                 & 0.234                              & 0.231 & {\color[HTML]{0000FF} {\ul 0.230}} & 0.236          & 0.239        & 0.249         & 0.237        & 0.336        & 0.315        & 0.266         & 0.254        & {\color[HTML]{0000FF} {\ul 0.218}}    & {\color[HTML]{FF0000} \textbf{0.219}} \\ \bottomrule
\end{tabular}
}
\caption{Performance comparison under the weighted L1 loss function. The lookback length is fixed at 96. FreEformer consistently outperforms other forecasters under the new loss function.}
\label{tab_apply_L1}
\end{center}
\end{table*}

\subsection{Hyperparameter Sensitivity}

\paragraph{L1/L2 and $\alpha$}

\begin{table*}[ht]
\begin{center}
{\fontsize{8}{10}\selectfont
\begin{tabular}{@{}cc|cccccc|cccccc@{}}
\toprule
\multicolumn{2}{c|}{L1/L2}           & \multicolumn{6}{c|}{L1}                                                                                                                                                                                                                                                                       & \multicolumn{6}{c}{L2}                                                                                                                                                                                               \\ \midrule
\multicolumn{2}{c|}{Alpha}           & \multicolumn{2}{c|}{0.0}                                                                           & \multicolumn{2}{c|}{0.5}                                                                           & \multicolumn{2}{c|}{1.0}                                                            & \multicolumn{2}{c|}{0.0}                                                                        & \multicolumn{2}{c|}{0.5}                                           & \multicolumn{2}{c}{1.0}                       \\ \midrule
\multicolumn{2}{c|}{Metric}          & MSE                                   & \multicolumn{1}{c|}{MAE}                                   & MSE                                   & \multicolumn{1}{c|}{MAE}                                   & MSE                                         & MAE                                   & MSE                                   & \multicolumn{1}{c|}{MAE}                                & MSE                                   & \multicolumn{1}{c|}{MAE}   & MSE                                   & MAE   \\ \midrule
                               & 96  & {\color[HTML]{0000FF} {\ul 0.135}}    & \multicolumn{1}{c|}{0.226}                                 & {\color[HTML]{FF0000} \textbf{0.133}} & \multicolumn{1}{c|}{{\color[HTML]{FF0000} \textbf{0.223}}} & {\color[HTML]{FF0000} \textbf{0.133}}       & {\color[HTML]{FF0000} \textbf{0.223}} & 0.138                                 & \multicolumn{1}{c|}{{\color[HTML]{0000FF} {\ul 0.234}}} & 0.136                                 & \multicolumn{1}{c|}{0.232} & 0.136                                 & 0.232 \\
                               & 192 & {\color[HTML]{FF0000} \textbf{0.152}} & \multicolumn{1}{c|}{{\color[HTML]{0000FF} {\ul 0.241}}}    & {\color[HTML]{FF0000} \textbf{0.152}} & \multicolumn{1}{c|}{{\color[HTML]{FF0000} \textbf{0.240}}} & {\color[HTML]{FF0000} \textbf{0.152}}       & {\color[HTML]{FF0000} \textbf{0.240}} & 0.158                                 & \multicolumn{1}{c|}{0.252}                              & {\color[HTML]{0000FF} {\ul 0.154}}    & \multicolumn{1}{c|}{0.249} & 0.155                                 & 0.249 \\
                               & 336 & 0.170                                 & \multicolumn{1}{c|}{{\color[HTML]{0000FF} {\ul 0.259}}}    & {\color[HTML]{FF0000} \textbf{0.165}} & \multicolumn{1}{c|}{{\color[HTML]{FF0000} \textbf{0.256}}} & {\color[HTML]{0000FF} {\ul 0.167}}          & {\color[HTML]{FF0000} \textbf{0.256}} & 0.170                                 & \multicolumn{1}{c|}{0.266}                              & 0.171                                 & \multicolumn{1}{c|}{0.266} & 0.171                                 & 0.267 \\
                               & 720 & 0.201                                 & \multicolumn{1}{c|}{0.288}                                 & {\color[HTML]{0000FF} {\ul 0.198}}    & \multicolumn{1}{c|}{{\color[HTML]{0000FF} {\ul 0.286}}}    & {\color[HTML]{FF0000} \textbf{0.195}}       & {\color[HTML]{FF0000} \textbf{0.283}} & 0.199                                 & \multicolumn{1}{c|}{0.292}                              & {\color[HTML]{FF0000} \textbf{0.195}} & \multicolumn{1}{c|}{0.291} & 0.199                                 & 0.294 \\
\multirow{-5}{*}{\rotatebox[origin=c]{90}{ECL}}          & Avg & {\color[HTML]{0000FF} {\ul 0.164}}    & \multicolumn{1}{c|}{{\color[HTML]{0000FF} {\ul 0.253}}}    & {\color[HTML]{FF0000} \textbf{0.162}} & \multicolumn{1}{c|}{{\color[HTML]{FF0000} \textbf{0.251}}} & {\color[HTML]{FF0000} \textbf{0.162}}       & {\color[HTML]{FF0000} \textbf{0.251}} & 0.166                                 & \multicolumn{1}{c|}{0.261}                              & {\color[HTML]{0000FF} {\ul 0.164}}    & \multicolumn{1}{c|}{0.260} & 0.165                                 & 0.261 \\ \midrule
                               & 96  & 0.411                                 & \multicolumn{1}{c|}{{\color[HTML]{0000FF} {\ul 0.234}}}    & {\color[HTML]{FF0000} \textbf{0.395}} & \multicolumn{1}{c|}{{\color[HTML]{FF0000} \textbf{0.233}}} & {\color[HTML]{0000FF} {\ul 0.398}}          & 0.237                                 & 0.408                                 & \multicolumn{1}{c|}{0.263}                              & 0.400                                 & \multicolumn{1}{c|}{0.259} & 0.405                                 & 0.264 \\
                               & 192 & 0.428                                 & \multicolumn{1}{c|}{{\color[HTML]{0000FF} {\ul 0.246}}}    & {\color[HTML]{0000FF} {\ul 0.423}}    & \multicolumn{1}{c|}{{\color[HTML]{FF0000} \textbf{0.245}}} & {\color[HTML]{0000FF} {\ul \textbf{0.423}}} & 0.250                                 & 0.439                                 & \multicolumn{1}{c|}{0.276}                              & {\color[HTML]{FF0000} \textbf{0.421}} & \multicolumn{1}{c|}{0.273} & 0.439                                 & 0.291 \\
                               & 336 & 0.444                                 & \multicolumn{1}{c|}{{\color[HTML]{FF0000} \textbf{0.254}}} & {\color[HTML]{0000FF} {\ul 0.443}}    & \multicolumn{1}{c|}{{\color[HTML]{FF0000} \textbf{0.254}}} & 0.447                                       & {\color[HTML]{0000FF} {\ul 0.259}}    & 0.461                                 & \multicolumn{1}{c|}{0.285}                              & 0.446                                 & \multicolumn{1}{c|}{0.281} & {\color[HTML]{FF0000} \textbf{0.441}} & 0.282 \\
                               & 720 & 0.514                                 & \multicolumn{1}{c|}{0.280}                                 & {\color[HTML]{0000FF} {\ul 0.480}}    & \multicolumn{1}{c|}{{\color[HTML]{FF0000} \textbf{0.274}}} & 0.482                                       & {\color[HTML]{0000FF} {\ul 0.278}}    & 0.506                                 & \multicolumn{1}{c|}{0.305}                              & 0.489                                 & \multicolumn{1}{c|}{0.300} & {\color[HTML]{FF0000} \textbf{0.478}} & 0.300 \\
\multirow{-5}{*}{\rotatebox[origin=c]{90}{Traffic}}      & Avg & 0.449                                 & \multicolumn{1}{c|}{{\color[HTML]{0000FF} {\ul 0.254}}}    & {\color[HTML]{FF0000} \textbf{0.435}} & \multicolumn{1}{c|}{{\color[HTML]{FF0000} \textbf{0.251}}} & {\color[HTML]{0000FF} {\ul 0.438}}          & 0.256                                 & 0.453                                 & \multicolumn{1}{c|}{0.282}                              & 0.439                                 & \multicolumn{1}{c|}{0.278} & 0.441                                 & 0.284 \\ \midrule
                               & 96  & 0.186                                 & \multicolumn{1}{c|}{0.195}                                 & {\color[HTML]{0000FF} {\ul 0.180}}    & \multicolumn{1}{c|}{{\color[HTML]{FF0000} \textbf{0.191}}} & {\color[HTML]{FF0000} \textbf{0.179}}       & {\color[HTML]{0000FF} {\ul 0.192}}    & 0.198                                 & \multicolumn{1}{c|}{0.228}                              & 0.196                                 & \multicolumn{1}{c|}{0.227} & 0.191                                 & 0.226 \\
                               & 192 & {\color[HTML]{0000FF} {\ul 0.214}}    & \multicolumn{1}{c|}{{\color[HTML]{0000FF} {\ul 0.216}}}    & {\color[HTML]{FF0000} \textbf{0.213}} & \multicolumn{1}{c|}{{\color[HTML]{FF0000} \textbf{0.215}}} & {\color[HTML]{0000FF} {\ul 0.214}}          & {\color[HTML]{0000FF} {\ul 0.216}}    & 0.227                                 & \multicolumn{1}{c|}{0.253}                              & 0.225                                 & \multicolumn{1}{c|}{0.252} & 0.221                                 & 0.251 \\
                               & 336 & {\color[HTML]{0000FF} {\ul 0.234}}    & \multicolumn{1}{c|}{{\color[HTML]{0000FF} {\ul 0.233}}}    & {\color[HTML]{FF0000} \textbf{0.233}} & \multicolumn{1}{c|}{{\color[HTML]{FF0000} \textbf{0.232}}} & 0.236                                       & {\color[HTML]{0000FF} {\ul 0.233}}    & {\color[HTML]{FF0000} \textbf{0.233}} & \multicolumn{1}{c|}{0.265}                              & 0.237                                 & \multicolumn{1}{c|}{0.265} & 0.235                                 & 0.265 \\
                               & 720 & {\color[HTML]{FF0000} \textbf{0.239}} & \multicolumn{1}{c|}{{\color[HTML]{FF0000} \textbf{0.237}}} & {\color[HTML]{0000FF} {\ul 0.241}}    & \multicolumn{1}{c|}{{\color[HTML]{FF0000} \textbf{0.237}}} & 0.248                                       & {\color[HTML]{0000FF} {\ul 0.241}}    & {\color[HTML]{FF0000} \textbf{0.239}} & \multicolumn{1}{c|}{0.271}                              & {\color[HTML]{0000FF} {\ul 0.241}}    & \multicolumn{1}{c|}{0.273} & {\color[HTML]{0000FF} {\ul 0.241}}    & 0.274 \\
\multirow{-5}{*}{\rotatebox[origin=c]{90}{Solar-Energy}} & Avg & {\color[HTML]{0000FF} {\ul 0.218}}    & \multicolumn{1}{c|}{{\color[HTML]{0000FF} {\ul 0.220}}}    & {\color[HTML]{FF0000} \textbf{0.217}} & \multicolumn{1}{c|}{{\color[HTML]{FF0000} \textbf{0.219}}} & 0.219                                       & 0.221                                 & 0.224                                 & \multicolumn{1}{c|}{0.254}                              & 0.225                                 & \multicolumn{1}{c|}{0.254} & 0.222                                 & 0.254 \\ \bottomrule
\end{tabular}

}
\caption{Performance comparison under different loss function settings. The lookback length is fixed at 96. Generally, L1 loss function performs better than L2 counterpart.}
\label{tab_L1_apha}
\end{center}
\end{table*}

Table \ref{tab_L1_apha} presents the results under various combinations of loss functions (L1/L2) and $\alpha$ values. Overall, the L1 loss function demonstrates superior performance compared to L2. Among the three $\alpha$ settings, $\alpha = 0.5$ achieves the best results.

\paragraph{Embedding Dimension $d$}

\begin{table*}[ht]
\begin{center}
{\fontsize{8}{10}\selectfont
\begin{tabular}{@{}c|cc|cc|cc|cc|cc|cc@{}}
\toprule
Emb. Dim. & \multicolumn{2}{c|}{1}                                                        & \multicolumn{2}{c|}{4}                                                     & \multicolumn{2}{c|}{8}                                                        & \multicolumn{2}{c|}{16}                                                       & \multicolumn{2}{c|}{32}                                                       & \multicolumn{2}{c}{64}                                                        \\ \midrule
Metric    & MSE                                   & MAE                                   & MSE                                   & MAE                                & MSE                                   & MAE                                   & MSE                                   & MAE                                   & MSE                                   & MAE                                   & MSE                                   & MAE                                   \\ \midrule
ECL       & {\color[HTML]{0000FF} {\ul 0.134}}    & 0.225                                 & {\color[HTML]{0000FF} {\ul 0.134}}    & 0.224                              & {\color[HTML]{FF0000} \textbf{0.133}} & 0.224                                 & {\color[HTML]{FF0000} \textbf{0.133}} & {\color[HTML]{0000FF} {\ul 0.223}}    & {\color[HTML]{0000FF} {\ul 0.134}}    & {\color[HTML]{0000FF} {\ul 0.223}}    & {\color[HTML]{FF0000} \textbf{0.133}} & {\color[HTML]{FF0000} \textbf{0.222}} \\
Traffic   & 0.409                                 & 0.236                                 & {\color[HTML]{0000FF} {\ul 0.396}}    & {\color[HTML]{0000FF} {\ul 0.233}} & 0.413                                 & {\color[HTML]{0000FF} {\ul 0.233}}    & {\color[HTML]{FF0000} \textbf{0.395}} & {\color[HTML]{0000FF} {\ul 0.233}}    & 0.401                                 & {\color[HTML]{FF0000} \textbf{0.232}} & 0.407                                 & 0.234                                 \\
Solar-Energy     & {\color[HTML]{FF0000} \textbf{0.179}} & 0.194                                 & {\color[HTML]{FF0000} \textbf{0.179}} & {\color[HTML]{0000FF} {\ul 0.192}} & {\color[HTML]{0000FF} {\ul 0.180}}    & {\color[HTML]{0000FF} {\ul 0.192}}    & {\color[HTML]{0000FF} {\ul 0.180}}    & {\color[HTML]{FF0000} \textbf{0.191}} & {\color[HTML]{0000FF} {\ul 0.180}}    & {\color[HTML]{FF0000} \textbf{0.191}} & {\color[HTML]{0000FF} {\ul 0.180}}    & {\color[HTML]{0000FF} {\ul 0.192}}    \\
Weather   & 0.155                                 & 0.192                                 & 0.155                                 & {\color[HTML]{0000FF} {\ul 0.191}} & 0.155                                 & {\color[HTML]{0000FF} {\ul 0.191}}    & {\color[HTML]{FF0000} \textbf{0.153}} & {\color[HTML]{FF0000} \textbf{0.189}} & {\color[HTML]{FF0000} \textbf{0.153}} & {\color[HTML]{FF0000} \textbf{0.189}} & {\color[HTML]{0000FF} {\ul 0.154}}    & {\color[HTML]{FF0000} \textbf{0.189}} \\
ILI       & {\color[HTML]{FF0000} \textbf{1.741}} & {\color[HTML]{FF0000} \textbf{0.818}} & 2.084                                 & 0.883                              & 1.881                                 & 0.844                                 & {\color[HTML]{0000FF} {\ul 1.879}}    & {\color[HTML]{0000FF} {\ul 0.823}}    & 2.033                                 & 0.866                                 & 2.008                                 & 0.851                                 \\
NASDAQ    & 0.170                                 & 0.265                                 & 0.167                                 & 0.264                              & {\color[HTML]{FF0000} \textbf{0.160}} & {\color[HTML]{FF0000} \textbf{0.259}} & {\color[HTML]{0000FF} {\ul 0.166}}    & {\color[HTML]{0000FF} {\ul 0.263}}    & 0.170                                 & 0.267                                 & 0.169                                 & 0.266                                 \\ \bottomrule
\end{tabular}
}
\caption{Performance comparison with varying embedding dimensions. Dimension expansion is performed without learnable weights when $d=16$. For the ECL, Traffic, Solar-Energy, and Weather datasets, the lookback and prediction horizons are both 96. For the ILI and NASDAQ datasets, they are both 36.}
\label{tab_embed}
\end{center}
\end{table*}

Table \ref{tab_embed} reports the performance under different embedding dimensions. The default setting of $d=16$ demonstrates robust performance among these choices.

\end{document}